\newcommand{\short}[1]{}
  \renewcommand{\bm}[1]{#1}%
\newcommand{\ignore}[1]{}
\theoremstyle{definition}
\newtheorem{theorem}{Theorem}[section]
\newtheorem{lemma}[theorem]{Lemma}
\newtheorem{observation}[theorem]{Observation}
\newtheorem*{theorem*}{Theorem}
\newtheorem*{lemma*}{Lemma}
\newtheorem*{corollary*}{Corollary}
\newtheorem*{proposition*}{Proposition}
\newtheorem*{claim*}{Claim}
\newtheorem*{fact*}{Fact}
\newtheorem*{observation*}{Observation}
\theoremstyle{definition}
\newtheorem{definition}[theorem]{Definition}
\newtheorem{remark}[theorem]{Remark}
\newtheorem*{definition*}{Definition}
\newtheorem*{remark*}{Remark}
\newtheorem*{example*}{Example}
\newtheorem*{question*}{Question}
 \theoremstyle{plain}
\newtheorem*{theoremaux}{\theoremauxref}
\gdef\theoremauxref{1}
\DeclareMathAlphabet{\mathbfsf}{\encodingdefault}{\sfdefault}{bx}{n}
\DeclareMathOperator*{\trace}{Tr}
\newcommand{\lr}[1]{\mathopen{}\left(#1\right)}
\newcommand{\Lr}[1]{\mathopen{}\big(#1\big)}
\newcommand{\lrbra}[1]{\mathopen{}\left[#1\right]}
\newcommand{\Lrbra}[1]{\mathopen{}\big[#1\big]}
\newcommand{\LRbra}[1]{\mathopen{}\Big[#1\Big]}
\newcommand{\norm}[1]{\|#1\|}
\newcommand{\abs}[1]{|#1|}
\newcommand{\wh}[1]{\smash{\widehat{#1}}}
\renewcommand{\O}{O}
\newcommand{\E}{\mathbb{E}}
\newcommand{\tr}{^{\mkern-1.5mu\mathsf{T}}}
\newcommand{\st}{\star}
\newcommand{\reals}{\mathbb{R}}
\newcommand{\thalf}{\tfrac{1}{2}}
\newcommand{\eqdef}{:=}
\renewcommand{\leq}{~\le~}
\let\oldtfrac\tfrac
\renewcommand{\tfrac}[2]{\smash{\oldtfrac{#1}{#2}}}
\let\nablaold\nabla
\renewcommand{\nabla}{\nablaold\mkern-1mu}
\newcommand{\cS}{\mathcal{S}}
\newcommand{\R}{\mathbb{R}}
\newcommand{\calN}{\mathcal{N}}
\newcommand{\calA}{\mathcal{A}}
\newcommand{\cost}{J}
\newcommand{\KSig}{\mathcal{K}}
\newcommand{\SigK}{\mathcal{E}}
\newcommand{\trbound}{\nu}
\DeclareMathOperator{\cov}{Cov}
\newcommand{\frob}{\mathsf{F}}
\theoremstyle{plain}
\begin{document}

\title{Online Linear Quadratic Control}

\author{%
Alon Cohen\textsuperscript{1,2} \qquad 
Avinatan Hassidim\textsuperscript{1,3} \qquad 
Tomer Koren\textsuperscript{4} \qquad
Nevena Lazic\textsuperscript{4} \\
Yishay Mansour\textsuperscript{1,5} \qquad
Kunal Talwar\textsuperscript{4} 
\\[0.25cm]
\normalsize \textsuperscript{1}Google Research \qquad \textsuperscript{2}Technion--Israel Institute of Technology \qquad \textsuperscript{3}Bar-Ilan University \\
\normalsize \textsuperscript{4}Google Brain, Mountain View \qquad \textsuperscript{5}Tel-Aviv University}


\maketitle


\begin{abstract}
We study the problem of controlling linear time-invariant systems with known noisy dynamics and adversarially chosen quadratic losses.
%
%
We present the first efficient online learning algorithms in this setting that guarantee $O(\sqrt{T})$ regret under mild assumptions, where $T$ is the time horizon.
Our algorithms rely on a novel SDP relaxation for the steady-state distribution of the system.
Crucially, and in contrast to previously proposed relaxations, the feasible solutions of our SDP all correspond to ``strongly stable'' policies that mix exponentially fast to a steady state.
%
%
%
%
%
\end{abstract}

\section{Introduction}

Linear-quadratic (LQ) control is one of the most widely studied
problems in control theory
\citep{anderson1972linear,bertsekas1995dynamic,zhou1996robust}. It
has been applied successfully to problems in statistics,
econometrics, robotics, social science and physics. In recent years,
it has also received much attention from the machine learning
community, as increasingly difficult control problems have led to
demand for data-driven control systems
\citep{abbeel2007application,levine2016end,sheckells2017robust}.

In LQ control, both the state and action are real-valued vectors.
The dynamics of the environment are linear in the state and action, and are perturbed by Gaussian noise.
The cost is quadratic in the state and control (action) vectors.
The optimal control policy, which minimizes the cost, selects the control vector as a linear function of the state vector, and can be derived by solving the algebraic Ricatti equations.
%


%
%

The main focus of this work is control of linear systems whose quadratic costs vary in an unpredictable way.
This problem may arise in settings such as building climate control in the presence of time-varying energy costs, due to energy auctions or unexpected demand fluctuations.
To measure how well a control system adapts to time-varying costs, it is common to consider the notion of regret: the difference between the total cost of the controller, one that is only aware of previously observed costs, and that of the best fixed control policy in hindsight.
%
This notion has been thoroughly studied in the context of online
learning, and particularly in that of online convex optimization
\citep{cesa2006prediction,hazan2016introduction,shalev2012online}.
LQ control was considered in the context of regret by \citet{abbasi2014tracking}, who give a learning algorithm for the problem of tracking an adversarially changing target in a system with noiseless linear dynamics.

In this paper we consider online learning with fixed, known, linear dynamics and adversarially chosen quadratic cost matrices. 
Our main results are two online algorithm that achieve $O(\sqrt{T})$
regret, when comparing to any fast mixing linear policy.\footnote{
Technically, we define the class of ``strongly stable'' policies
that guarantee the desired fast mixing property. Conceptually,
slowly mixing policies are less attractive for implementation, given
their inherent gap between their long and short term cost.} One of
our online algorithms is based on Online Gradient Descent of
\cite{zinkevich2003online}. The other is based on Follow the Lazy
Leader of \cite{kalai2005efficient}, a variant of Follow the
Perturbed Leader with only $O(\sqrt{T})$ expected number of policy
switches.

Overall, our approach follows \citet{even2009online}. We first show
how to perform online learning in an  ``idealized setting'', a
hypothetical setting in which the learner can immediately observe
the steady-state cost of any chosen control policy. We proceed to
bound the gap between the idealized costs and the actual costs.

Our technique is conceptually different to most learning problems:
instead of predicting a policy and observing its steady-state cost,
the learner predicts a steady-state distribution and derives from it
a corresponding policy. Importantly, this view allows us to cast the
idealized problem as a semidefinite program which minimizes the
expected costs as a function of a steady state distribution (of both
states and controls). As the problem is now convex, we apply OGD and
FLL to the SDP and argue about fast-mixing properties of its
feasible solutions.

%

For online gradient descent, we define a ``sequential strong
stability'' property that couples consecutive control matrices, and
show that it guarantees that the observed state distributions
closely track those generated in the idealized setting. We then show
that the sequence of policies generated by the online gradient
descent algorithm satisfies this property.
In Follow the Lazy Leader, following each switch our algorithm
resets the system---a process that takes a constant number of
rounds,
after which the cost of playing the new policy is less than its
steady-state cost.


The holy grail of reinforcement learning is controlling a dynamical stochastic system under uncertainty,
and clearly both MDPs and LQ control are well within this mission statement.
There are obvious differences between the two models: MDPs model discrete state and action dynamics while LQ control addresses continuous linear dynamics with a quadratic  cost.
%
In this work we are inspired by methodologies from online-MDP and regret minimization to derive new results for LQ control.
We believe that exploring the interface between the two will be fruitful for both sides, and holds significant potential for future RL research agenda.


\subsection{Related Work}

LQ control can be seen as a continuous analogue of the discrete
Markov Decision Process (MDP) model. As such, our results are
conceptually similar to those of \citet{even2009online}, who derive
regret bounds for MDPs with known dynamics and changing rewards.
However, our technical approach and the derivation of our algorithms
are very different than those applicable in  context of MDPs.

Among the many follow-up works to \citet{even2009online}, let us
note \citet{yu2009markov} and \citet{abbasi2013online} that propose
lazy algorithms similar to our second algorithm.
We remark that, compared to our $O(\sqrt{T})$ regret bounds, \citet{abbasi2014tracking} give an $O(\log^2 T)$ regret bound under much stronger assumptions.%
\footnote{Not only their setting assumes that $Q_t=Q$ and $R_t=I$
for all~$t$ for a fixed and known matrix $Q \succeq 0$, they also
make non-trivial norm assumptions on the corresponding optimal
control matrix~$K^\st$.} Similar bounds are established by
\citet{neu2017fast} for online learning in linearly solvable MDPs,
that were shown to capture appropriately discretized versions of LQ
control systems \citep{todorov2009efficient}. In light of these
results, it is interesting to investigate whether our bounds are
tight or can actually be improved. We leave this investigation for
future work.

An orthogonal line of research that has gained popularity in recent
years is controlling linear quadratic systems with unknown fixed
dynamics.
The majority of recent papers deal with off-policy learning: either
by policy gradient \citep{fazel2018global}; by estimating the
transition matrices \citep{dean2017sample}; or by improper learning
\citep{hazan2017learning,arora2018towards}. In contrast to that,
\citet{abbasi2011regret} and \citet{ibrahimi2012} present an
on-policy learning algorithm with $O(\sqrt{T})$~regret.

Semidefinite programming for LQ control has been previously used
\citep{balakrishnan2003semidefinite,dvijotham2013convex,lee2016semidefinite},
mostly in the context of infinite-horizon constrained LQRs
\citep{lee2007constrained,schildbach2015linear}. In many of these
formulations, one has to solve the SDP exactly to obtain a
stabilizing solution; in other words, only the optimal policy is
known to be stable and suboptimal policies need not be stabilizing.
This is not the case in our SDP formulation, as any feasible
solution is not only stable but, in fact, strongly-stable (see the
formal definition in \cref{sec:stability}).

\section{Background}

\subsection{Linear Quadratic Control}

The standard linear quadratic (Gaussian) control problem is as follows.
Let $x_t \in \R^d$  be the system state at time $t$ and let $u_t \in \R^k$ be the control (action) taken at time $t$. The system transitions to the next state using linear time-invariant~dynamics
\[
x_{t+1} = A x_t + B u_t + w_t~,
\]
where $w_t$ are i.i.d.~Gaussian noise vectors with zero mean and covariance $W \succeq 0$ . The cost incurred at each time point is a quadratic function of the state and control,
$x_t\tr Q x_t + u_t\tr R u_t$, for positive definite matrices $Q$ and $R$.

A policy is a mapping $\pi : \R^d \mapsto \R^k$ from the current state $x_t$ to a control (i.e., an action) $u_t$.
The cost of a policy after $T$ time steps is
\[
\cost_T(\pi)
=
\E\LRbra{ \sum_{t=1}^T x_t\tr Q x_t + u_t\tr R u_t }
,
\]
where $u_1,\ldots,u_T$ are chosen according to $\pi$; the expectation is w.r.t.~the randomness in the state transitions and (possibly) the policy.
In the infinite-horizon version of the problem, the goal is to minimize the steady-state cost $\cost(\pi) = \lim_{T \to \infty} (1/T) \cost_T(\pi)$.

In the infinite-horizon setting and when the system is controllable,%
\footnote{The system is controllable if the matrix $(B ~ AB ~ \cdots ~ A^{d-1}B)$
has full column-rank. Under the controllability assumption, any state can be reached in at most $d$ steps (ignoring noise).}
it is well-known that the optimal policy is given by constant linear feedback $u_t = Kx_t$.
%
For the optimal $K$, the dynamics are given by $x_{t+1} = (A+BK)x_t + w_t$, and $K$ is guaranteed to be stable; a policy $K$ is called stable if $\rho(A+BK) < 1$, where for a matrix $M$, $\rho(M)$ is the spectral radius of $M$.
In this case, $x_t$ converges to a steady-state (stationary) distribution, i.e., $x_t$ has the same distribution as $(A+BK)x_t + w_t$. This implies that $\E[x_t] = 0$, and the covariance matrix $X = \E[x_t x_t\tr]$ satisfies $X = (A+BK) X (A+BK)\tr + W$.

The steady-state cost of a stable policy $K$ with steady-state covariance $X$ is given by $\cost(K) = (Q + K\tr R K) \bullet X$. Here $\bullet$ denotes element-wise inner product, i.e., $A \bullet B = \trace(A\tr B)$.
%

\subsection{Problem Setting}

We consider an online setting, where a sequence of positive definite cost matrices  $Q_1,\ldots,Q_T$, $R_1,\ldots,R_T$ is chosen by the environment ahead of time and unknown to the learner.
We assume throughout that $\trace(Q_t),\trace(R_t) \le C$ for all $t$, for some constant $C>0$.
We assume that the dynamics $(A, B)$ are time-invariant and known, and that the system is initialized at $x_0=0$. At each time step $t$, the learner observes the state $x_t$, chooses an action $u_t$, and suffers cost $x_t\tr Q_t x_t + u_t\tr R_t u_t$. Thereafter, the system transitions to the next state.


A (randomized) learning algorithm $\calA$ is a mapping from $x_t$ and the previous cost matrices $Q_0, ..., Q_{t-1}$ and $R_0,\ldots,R_{t-1}$ to a distribution over a control $u_t$. We define the cost of an algorithm as $\cost_T(\calA) = \E[\sum_{t=1}^T x_t\tr Q_t x_t + u_t\tr R_t u_t]$, where $u_1,\ldots,u_T$ are chosen at random according to $\calA$.

The goal of the learner is to minimize the regret, defined as:
\begin{align*}
R_T(\calA)
=
\cost_T(\calA) - \min_{\pi \in \Pi} \cost_T(\pi)
~,
\end{align*}
where $\Pi$ is a set of benchmark policies.
In the sequel, we fix $\Pi$ to be the set of all strongly stable policies; we defer the formal definition of this class of policies to \cref{sec:stability} below.





\section{Strong Stability}
\label{sec:stability}

In this section we formalize the notion of a strongly stable policy and discuss some of its properties.
Intuitively, a strongly stable policy is a policy that exhibits fast mixing and converges quickly to a steady-state distribution.
Note that, while stable policies $K$ (for which $\rho(A+BK)<1$) necessarily converge to a steady-state, nothing is guaranteed regarding their rate of convergence. The following definition helps remedy that.

\begin{definition}[Strong Stability]
A policy $K$ is \emph{$(\kappa,\gamma)$-strongly stable} (for $\kappa > 0$ and $0 < \gamma \le 1$) if $\norm{K} \le \kappa$, and there exists matrices $L$ and $H$ such that $A+BK = HLH^{-1}$, with $\norm{L} \le 1-\gamma$ and $\norm{H}\norm{H^{-1}} \le \kappa$.
\end{definition}

Strong-stability is a quantitative version of stability, in the sense that any stable policy is strongly-stable for some $\kappa$ and $\gamma$ (See \cref{lem:lyapunov} in the supplementary material).
Conversely, strong-stability implies stability: if $K$ is strongly-stable then $A+BK$ is similar to a matrix $L$ with $\norm{L} < 1$, and so $\rho(A+BK) = \rho(L) \le \norm{L} < 1$, i.e., $K$ is stable.
Notice that for a strongly stable $K$, although $\rho(A+BK)<1$, it
may not be the case that $\norm{A+BK} < 1$, and a non-trivial
transformation $H \ne I$ may be required to make the norm smaller
than one (this is indeed the case with feasible solutions to our SDP
relaxation).

Strong stability ensures exponentially fast convergence to steady-state, as is made precise in the next lemma.

\begin{lemma} \label{lem:stab-mixing}
For all $t=1,2,\ldots$ let $\wh X_t$ be the state covariance matrix on round $t$ starting from some $\wh X_0 \succeq 0$ and following a ($\kappa,\gamma$)-strongly stable policy $\pi(x) = Kx$.
Then $\wh X_1, \wh X_2,\ldots$ approaches a steady-state covariance matrix $X$,
and further, for all $t$ it holds that
\begin{align*}
\norm{\wh X_t - X}
&\le
\kappa^2 e^{-2\gamma t} \norm{\wh X_0 - X}
.
\end{align*}
This exponential convergence is true even if the policy is randomized and follows $K$ in expectation; that is, if $\E[\pi(x) | x] = Kx$, and provided that $\cov[\pi(x)|x]$ is finite.
\end{lemma}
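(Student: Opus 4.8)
The plan is to reduce everything to a single homogeneous matrix recursion and then bound the powers of $M \eqdef A+BK$ using the similarity decomposition granted by strong stability. First I would write down the covariance update $\wh X_{t+1} = M \wh X_t M\tr + W$, which follows from $x_{t+1} = M x_t + w_t$ together with $\E[w_t]=0$ and the independence of $w_t$ from $x_t$ (so that the cross terms $\E[M x_t w_t\tr]$ vanish). The steady-state covariance $X$ is then the unique solution of the discrete Lyapunov equation $X = M X M\tr + W$; I would exhibit it explicitly as $X = \sum_{s=0}^\infty M^s W (M^s)\tr$ and justify convergence of this series using the norm bound on $M^s$ derived below, since each summand has norm at most $\kappa^2 e^{-2\gamma s}\norm{W}$, which is summable.

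The key step is to subtract the fixed-point equation from the recursion. Because $\wh X_{t+1}$ and $X$ satisfy the same affine relation with the same additive term $W$, their difference obeys the purely homogeneous recursion $\wh X_{t+1} - X = M(\wh X_t - X)M\tr$, and unrolling gives $\wh X_t - X = M^t (\wh X_0 - X)(M^t)\tr$. It then remains to control $\norm{M^t}$. Here I would invoke the strong-stability decomposition $A+BK = H L H^{-1}$, which yields $M^t = H L^t H^{-1}$, so by submultiplicativity and $\norm{L}\le 1-\gamma$ we get $\norm{M^t} \le \norm{H}\,\norm{H^{-1}}\,\norm{L}^t \le \kappa (1-\gamma)^t \le \kappa e^{-\gamma t}$. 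Applying this to both factors of the homogeneous recursion and using that the operator norm is invariant under transpose gives $\norm{\wh X_t - X} \le \norm{M^t}^2\, \norm{\wh X_0 - X} \le \kappa^2 e^{-2\gamma t}\,\norm{\wh X_0 - X}$, which is exactly the claimed bound; in particular this shows the sequence is Cauchy, confirming that the limit $X$ is indeed approached.

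For the randomized extension I would write $u_t = K x_t + \eta_t$ with $\E[\eta_t \mid x_t]=0$ and conditional covariance $\cov[\eta_t\mid x_t]=\Sigma$, so that $x_{t+1} = M x_t + B\eta_t + w_t$. Since $B\eta_t$ is zero-mean given $x_t$ and independent of $w_t$, the cross terms again vanish in expectation and the covariance recursion becomes $\wh X_{t+1} = M \wh X_t M\tr + (W + B\Sigma B\tr)$. This is identical to the deterministic update with the process noise $W$ replaced by the finite, state-independent effective noise $W + B\Sigma B\tr$; the homogeneous operator $Y\mapsto M Y M\tr$ is unchanged, so the difference recursion and the resulting bound carry over verbatim, only with a redefined steady state $X$.

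I expect the only genuinely delicate point to be this last case: the clean homogeneous recursion relies on the added control noise contributing a \emph{constant} additive term to the covariance update, which is why a fixed conditional covariance is the natural reading of the finiteness hypothesis; were $\cov[\eta_t\mid x_t]$ allowed to depend on $x_t$, the extra term would be time-varying and one would have to argue separately that its fluctuations decay. Everything else---convergence of the Lyapunov series and the geometric bound on $\norm{M^t}$---follows directly from the strong-stability decomposition and is essentially mechanical.
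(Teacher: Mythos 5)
Your proposal is correct and follows essentially the same route as the paper's proof: subtract the fixed-point (Lyapunov) equation from the covariance recursion to get the homogeneous relation $\wh X_t - X = (A+BK)^t(\wh X_0 - X)((A+BK)^t)\tr$, then bound $\norm{(A+BK)^t} \le \kappa e^{-\gamma t}$ via the decomposition $A+BK = HLH^{-1}$, with the randomized case handled identically by absorbing the constant $BVB\tr$ term into the additive part. Your extra touches---the explicit series $X = \sum_{s\ge 0} M^s W (M^s)\tr$ for existence of the steady state, and the observation that the clean argument implicitly requires the conditional covariance $\cov[\pi(x)\,|\,x]$ to be state-independent---are sound refinements of details the paper leaves implicit.
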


\begin{proof}
Let us first analyze deterministic policies.
As noted above, we know that $K$ is stable and as a result the state covariances $\wh X_t$ approach a steady-state covariance $X$.
By definition, we have
\begin{alignat*}{2}
&\wh X_{t+1}&
&=
(A+BK) \wh X_t (A+BK)\tr + W
\qquad
\forall ~ t \ge 0
;
\\
&X&
&=
(A+BK) X (A+BK)\tr + W
.
\end{alignat*}
Subtracting the equations and recursing, we have
$
\wh X_t - X
=
(A+BK)^t (\wh X_0 - X) ((A+BK)^t)\tr
,
$
which gives
\begin{align*}
\norm{\wh X_t - X}
\le
\norm{(A+BK)^t}^2 \norm{\wh X_0 - X}
.
\end{align*}
For further bounding the right-hand side, observe that $(A+BK)^t = H L^t H^{-1}$, thus
\begin{align*}
\norm{(A+BK)^t}
\le
\norm{H} \norm{H^{-1}} \norm{L}^t
\le
\kappa (1-\gamma)^{t}
\le
\kappa e^{-\gamma t}
.
\end{align*}
Combining the inequalities gives the result for deterministic policies.

For randomized policies with $\E[u | x] = K x$ and finite $V = \cov[u | x]$, the dynamics of the state covariance take the form
\begin{alignat*}{2}
&\wh X_{t+1}&
&=
(A+BK) \wh X_t (A+BK)\tr + BVB\tr + W
\quad
\forall ~ t \ge 0
;
\\
&X&
&=
(A+BK) X (A+BK)\tr + BVB\tr + W
.
\end{alignat*}
Since the analysis above only depends on the difference between the equations, the added $BVB\tr$ term has no effect on the convergence of $X_t$.
Note, however, that the steady state $X$ itself will be a function of $V$ in general.
\end{proof}

Let us state one more property of strongly stable policies that will be useful in our analysis.

\begin{lemma} \label{lem:stab-trbound}
Assume that $K$ is $(\kappa,\gamma)$-strongly stable, and let $X$ and $U$ be the covariances of $x$ and $u$ at steady-state when following $K$.
Then $\trace(X) \le (\kappa^2/\gamma) \trace(W)$ and $\trace(U) \le (\kappa^4/\gamma) \trace(W)$.
\end{lemma}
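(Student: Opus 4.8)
The plan is to solve the steady-state Lyapunov equation in closed form and then bound its trace term by term, reusing the exponential decay of $\norm{(A+BK)^t}$ already established in the proof of \cref{lem:stab-mixing}. Writing $M = A+BK$, the steady-state covariance satisfies $X = M X M\tr + W$, and unrolling this recursion gives the closed form $X = \sum_{t=0}^\infty M^t W (M^t)\tr$, where convergence is guaranteed by strong stability. Taking traces and using the elementary inequality $\trace(N W N\tr) = \trace(W\, N\tr N) \le \norm{N}^2 \trace(W)$---valid since $W \succeq 0$ and $N\tr N \preceq \norm{N}^2 I$---I would obtain $\trace(X) \le \trace(W) \sum_{t=0}^\infty \norm{M^t}^2$.

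Next I would substitute the bound $\norm{M^t} \le \kappa (1-\gamma)^t$ from the proof of \cref{lem:stab-mixing}, turning the right-hand side into a geometric series:
\[
\trace(X)
\le
\kappa^2 \trace(W) \sum_{t=0}^\infty (1-\gamma)^{2t}
=
\frac{\kappa^2 \trace(W)}{1-(1-\gamma)^2}
.
\]
Since $1-(1-\gamma)^2 = \gamma(2-\gamma) \ge \gamma$ for $0 < \gamma \le 1$, this yields the first claim $\trace(X) \le (\kappa^2/\gamma)\trace(W)$. For the control covariance, I would note that following $K$ deterministically gives $u = Kx$, hence $U = \E[u u\tr] = K X K\tr$; applying the same trace inequality with $N = K$ and $\norm{K} \le \kappa$ gives $\trace(U) \le \norm{K}^2 \trace(X) \le \kappa^2 \cdot (\kappa^2/\gamma)\trace(W) = (\kappa^4/\gamma)\trace(W)$.

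The argument is essentially mechanical once the closed form for $X$ is in hand, so I do not expect a genuine obstacle. The only points requiring a little care are cycling the trace so that the positive semidefinite factor ($W$ or $X$) sits beside $N\tr N$ before bounding, and performing the harmless simplification $\gamma(2-\gamma) \ge \gamma$ so that the final constants come out as $\kappa^2/\gamma$ and $\kappa^4/\gamma$ rather than with a $2\gamma-\gamma^2$ denominator.
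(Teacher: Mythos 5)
Your proof is correct and takes essentially the same route as the paper's own: unroll the steady-state Lyapunov equation into $X = \sum_{t=0}^\infty (A+BK)^t W ((A+BK)^t)\tr$, bound $\norm{(A+BK)^t} \le \kappa(1-\gamma)^t$ via the $HLH^{-1}$ factorization from strong stability, sum the geometric series, and then handle $U = KXK\tr$ using $\norm{K} \le \kappa$. The only differences are cosmetic---you spell out the trace-cycling inequality and the simplification $\gamma(2-\gamma) \ge \gamma$, where the paper directly uses $\sum_{t=0}^\infty (1-\gamma)^{2t} \le 1/\gamma$.
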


\subsection{Sequential strong stability}

We next present a stronger notion of strong stability which plays a central role in our analysis.
Roughly speaking, the goal is to argue about fast mixing when following a sequence of different policies $K_1,K_2,\ldots$ (rather than a fixed policy $K$ throughout).
In this case, for any kind of mixing to take place, not only does one has to require that each policy is strongly stable, but also that the sequence is ``slowly changing.''
This motivates the following definition.

\begin{definition}[sequential strong stability]
A sequence of policies $K_1,\ldots,K_T$ is \emph{$(\kappa,\gamma)$-strongly stable} (for $\kappa > 0$ and $0 < \gamma \le 1$) if there exist matrices $H_1,\ldots,H_T$ and $L_1,\ldots,L_T$ such that $A+BK_t = H_t L_t H_t^{-1}$ for all $t$, with the following properties:
\begin{enumerate}[nosep,label=(\roman*)]
\item
$\norm{L_t} \le 1-\gamma$ and $\norm{K_t} \le \kappa$;
\item
$\norm{H_t} \le \beta$ and $\norm{H_t^{-1}} \le 1/\alpha$ with $\kappa = \beta/\alpha$;
\item
$\norm{H_{t+1}^{-1} H_{t}^{}} \le 1+\gamma/2$.
\end{enumerate}
\end{definition}

Strongly stable sequences mix quickly, in the following~sense (proof is deferred to \cref{sec:proofs}).

\begin{lemma} \label{lem:seqstab-mixing}
Let $\pi_t(x) = K_t x$ ($t=1,2,\ldots$) be a sequence of policies with respective steady-state covariance matrices $X_1,X_2,\ldots$, such that $K_1,K_2,\ldots$ is a $(\kappa,\gamma)$-strongly stable sequence and $\norm{X_t - X_{t-1}} \le \eta$ for all $t$, for some $\eta>0$.
Let $\wh{X}_t$ be the state covariance matrix on round $t$ starting from some $\wh{X}_1 \succeq 0$ and following this sequence.
Then
\begin{align*}
\norm{\wh X_{t+1} - X_{t+1}}
\le
\kappa^2 e^{-\gamma t} \norm{\wh{X}_1 - X_1} + \frac{2\eta\kappa^2}{\gamma}
.
\end{align*}
The same is true even if the policies are randomized, such that $\E[\pi_t(x) | x] = K_t x$ and $\cov[\pi_t(x) | x]$ exists and is finite.
\end{lemma}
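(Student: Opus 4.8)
The plan is to track the discrepancy $D_t = \wh X_t - X_t$ between the realized covariance and the steady-state covariance of the policy currently in force, and to show that it contracts geometrically up to an additive error governed by the drift $\eta$. Writing $M_t = A + BK_t$, the realized dynamics give $\wh X_{t+1} = M_t \wh X_t M_t\tr + W$, while $X_t$ satisfies the fixed-point identity $X_t = M_t X_t M_t\tr + W$; subtracting yields $\wh X_{t+1} - X_t = M_t D_t M_t\tr$, whence
\[
D_{t+1} = M_t D_t M_t\tr + (X_t - X_{t+1}).
\]
The inhomogeneous term has norm at most $\eta$ by hypothesis, so the whole argument reduces to controlling the iterated map $D_t \mapsto M_t D_t M_t\tr$ along the sequence.

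The main obstacle is that $\norm{M_t}$ may exceed $1$, so this recursion does not contract in the original coordinates; the transformation $H_t$ from the definition of sequential strong stability is precisely what fixes this. First I would pass to the coordinates $\wt D_t = H_t^{-1} D_t (H_t^{-1})\tr$ and substitute $M_t = H_t L_t H_t^{-1}$, obtaining
\[
\wt D_{t+1} = (H_{t+1}^{-1} H_t) L_t \wt D_t L_t\tr (H_{t+1}^{-1} H_t)\tr + H_{t+1}^{-1}(X_t - X_{t+1})(H_{t+1}^{-1})\tr.
\]
Here property (iii) is indispensable: it bounds the frame mismatch $\norm{H_{t+1}^{-1} H_t} \le 1 + \gamma/2$, which together with $\norm{L_t} \le 1-\gamma$ produces a contraction factor $[(1+\gamma/2)(1-\gamma)]^2 \le (1-\gamma/2)^2 \le e^{-\gamma}$ (the last step using $\log(1-x)\le -x$). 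Combined with $\norm{H_{t+1}^{-1}} \le 1/\alpha$ and $\norm{X_t - X_{t+1}} \le \eta$, this gives the scalar recursion $\norm{\wt D_{t+1}} \le e^{-\gamma}\norm{\wt D_t} + \eta/\alpha^2$.

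Unrolling and summing the geometric series through $\sum_{j \ge 0} e^{-\gamma j} = (1 - e^{-\gamma})^{-1} \le 2/\gamma$ (valid since $\gamma \le 1$) yields $\norm{\wt D_{t+1}} \le e^{-\gamma t}\norm{\wt D_1} + 2\eta/(\gamma\alpha^2)$. I would then return to the original coordinates via $\norm{D_{t+1}} \le \norm{H_{t+1}}^2 \norm{\wt D_{t+1}} \le \beta^2 \norm{\wt D_{t+1}}$ and $\norm{\wt D_1} \le \norm{H_1^{-1}}^2\norm{D_1} \le \alpha^{-2}\norm{D_1}$; since $\kappa = \beta/\alpha$, the prefactor becomes $\beta^2/\alpha^2 = \kappa^2$ and the additive term becomes $2\eta\beta^2/(\gamma\alpha^2) = 2\eta\kappa^2/\gamma$, which is exactly the claimed bound. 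Finally, for randomized policies the update and the fixed-point identity both acquire the same term $B\,\cov[\pi_t(x)|x]\,B\tr$, which cancels in $\wh X_{t+1} - X_t = M_t D_t M_t\tr$; hence the entire derivation carries over unchanged, exactly as in the randomized part of \cref{lem:stab-mixing}.
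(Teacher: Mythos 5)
Your proposal is correct and follows essentially the same route as the paper's own proof: the same change of coordinates $\wt{D}_t = H_t^{-1}(\wh X_t - X_t)(H_t^{-1})\tr$, the same one-step recursion with contraction factor $\bigl[(1+\gamma/2)(1-\gamma)\bigr]^2 \le (1-\gamma/2)^2 \le e^{-\gamma}$, and the same geometric-series unrolling producing the $2\eta\kappa^2/\gamma$ additive term, with the identical observation that the $B\,\cov[\pi_t(x)\,|\,x]\,B\tr$ term cancels in the randomized case. The only (cosmetic) difference is that the paper first treats the warm-up case of a common steady state $X_t = X$ before the general drift, while you go directly to the general recursion.
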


\short{
\begin{proof}
Denote $C_t = \cov[u_t | x_t]$ (where $u_t$ is the action taken on round $t$).
By definition, for all $t$ we have
\begin{alignat*}{2}
&\wh{X}_{t+1}&
&=
(A+BK_t) \wh{X}_t (A+BK_t)\tr + B C_t B\tr + W
;
\\
&X_t&
&=
(A+BK_t) X_t (A+BK_t)\tr + B C_t B\tr + W
.
\end{alignat*}
Subtracting the equations, substituting $A+BK_t = H_t L_t H_t^{-1}$ and rearranging yields
\begin{align*}
H_t^{-1} (\wh{X}_{t+1} - X_{t}) (H_t^{-1})\tr
=
L_t H_t^{-1} (\wh{X}_t - X_{t}) (H_t^{-1})\tr L_t\tr
.
\end{align*}
Denote $\Delta_t = H_t^{-1} (\wh{X}_t - X_{t}) (H_t^{-1})\tr$ for all $t$.
Then the above can be rewritten as
\begin{align} \label{eq:DeltaLH}
\begin{aligned}
\Delta_{t+1}
&=
(H_{t+1}^{-1} H_t L_t) \Delta_t (H_{t+1}^{-1} H_t L_t)\tr
\\
&\quad
+ (H_{t+1}^{-1}) (X_t-X_{t+1}) (H_{t+1}^{-1})\tr
.
\end{aligned}
\end{align}
Let us first analyze the simpler case where all policies $K_t$ converge to the same steady-state covariance $X$.
Then $X_t = X$ for all~$t$, thus \cref{eq:DeltaLH} reads
\begin{align*}
\Delta_{t+1}
&=
(H_{t+1}^{-1} H_t L_t) \Delta_t (H_{t+1}^{-1} H_t L_t)\tr
.
\end{align*}
Taking norms, we obtain
\begin{align*}
\norm{\Delta_{t+1}}
&\le
\norm{L_t}^2 \norm{H_{t+1}^{-1} H_t}^2 \norm{\Delta_t}
\\
&\le
(1-\gamma)^2 (1+\thalf\gamma)^2 \norm{\Delta_t}
\\
&\le
(1-\thalf\gamma)^2 \norm{\Delta_t}
,
\end{align*}
whence $\norm{\Delta_{t+1}} \le e^{-\gamma t} \norm{\Delta_1}$.
Recalling $\wh{X}_t - X = H_t \Delta_t H_t\tr$, we obtain
\begin{align*}
\norm{\wh{X}_{t+1} - X}
&\le
e^{-\gamma t} \norm{\Delta_1} \norm{H_{t+1}}^2
\\
&\le
e^{-\gamma t} \norm{H_1^{-1}}^2 \norm{H_{t+1}}^2 \norm{\wh{X}_1 - X}
\\
&\le
\kappa^2 e^{-\gamma t} \norm{\wh{X}_1 - X}
.
\end{align*}
For the general case, taking norms in \cref{eq:DeltaLH} results in
\begin{align*}
\norm{\Delta_{t+1}}
&\le
(1-\thalf\gamma)^2 \norm{\Delta_t} + \norm{H_{t+1}^{-1}}^2 \norm{X_{t}-X_{t+1}}
\\
&\le
(1-\thalf\gamma)^2 \norm{\Delta_t} + \frac{\eta}{\alpha^2}
,
\end{align*}
and unfolding the recursion we obtain
\begin{align*}
\norm{\Delta_{t+1}}
&\le
(1-\thalf\gamma)^{2t} \norm{\Delta_1} + \frac{\eta}{\alpha^2} \sum_{s=0}^\infty (1-\thalf\gamma)^{2s}
\\
&\le
e^{-\gamma t} \norm{\Delta_1} + \frac{2\eta}{\alpha^2\gamma}
.
\end{align*}
Using $\wh{X}_t - X = H_t \Delta_t H_t\tr$ again and simple algebra give the result.
\end{proof}
}

\section{SDP Relaxation for LQ control}
\label{sec:sdp}

We now present our SDP relaxation for the infinite-horizon LQ control problem.
Our presentation requires the following definitions.
Consider an LQ control problem parameterized by matrices $A,B,Q,R$ and $W$.
For any stable policy (for which a steady-state distribution exists), define
\begin{align} \label{eq:SigPi}
\SigK(\pi)
=
\E\begin{pmatrix} xx\tr & xu\tr \\ ux\tr & uu\tr \end{pmatrix}
,
\end{align}
where $x$ is distributed according to the steady-state distribution of $\pi$, and $u = \pi(x)$.
Then, the infinite horizon cost of $\pi$ is given by $\cost(\pi) = (\begin{smallmatrix} Q & 0 \\ 0 & R \end{smallmatrix}) \bullet \SigK(\pi)$.
%
For a policy $\pi_K(x) = Kx$ defined by a stable control matrix $K$ (i.e., for which $\rho(A+BK) < 1$),
this matrix takes the form
\begin{align} \label{eq:SigK}
\SigK(K)
=
\begin{pmatrix} X & XK\tr \\ KX & KXK\tr \end{pmatrix}
,
\end{align}
where $X$ is the state covariance at steady-state.
(We slightly abuse notation and write $\SigK(K)$ instead of $\SigK(\pi_K)$).
In this case, one also has $\cost(K) = \cost(\SigK(K)) = (Q + K\tr R K) \bullet X.$

\subsection{The relaxation}
\label{sec:sdprelaxation}

We can now present our SDP relaxation for the LQ control problem given by $(A,B,Q,R,W)$, which takes the form:
\begin{align}
&\mbox{minimize} && \cost(\Sigma) = \begin{pmatrix} Q & 0 \\ 0 & R \end{pmatrix} \bullet \Sigma \nonumber \\
&\mbox{subject to}&&\Sigma_{xx} = \begin{pmatrix}
A & B
\end{pmatrix} \Sigma \begin{pmatrix}
A & B
\end{pmatrix}\tr + W , \label{eq:sdpconstraint} \\
&&&\Sigma \succeq 0, ~~ \trace(\Sigma) \le \trbound \nonumber
.
\end{align}
Here,
$\trbound>0$ is a parameter whose value will be determined later, and $\Sigma$ is a $(d+k) \times (d+k)$ symmetric matrix that decomposes to blocks as follows:
$$
\Sigma = \begin{pmatrix}
\Sigma_{xx} & \Sigma_{xu} \\
\Sigma_{xu}\tr & \Sigma_{uu}
\end{pmatrix}
,
$$
where $\Sigma_{xx}$ is a $d \times d$ block, $\Sigma_{uu}$ is $k \times k$, and $\Sigma_{xu}$ is $d \times k$.

The program \cref{eq:sdpconstraint} is a relaxation in the following sense.

\begin{lemma}
For any stable policy $\pi$ such that at steady-state $\E\norm{x}^2 + \E\norm{u}^2 \le \trbound$,
the matrix $\Sigma = \SigK(\pi)$ is feasible for \eqref{eq:sdpconstraint}.
\end{lemma}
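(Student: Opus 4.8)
The plan is to verify the three constraints of \eqref{eq:sdpconstraint} directly for the candidate matrix $\Sigma = \SigK(\pi)$. Throughout, write $z = \begin{pmatrix} x \\ u \end{pmatrix}$, where $x$ is drawn from the steady-state distribution of $\pi$ and $u = \pi(x)$, so that by \eqref{eq:SigPi} we have $\Sigma = \E[z z\tr]$. Two of the constraints are then essentially immediate. Positive semidefiniteness holds because $\Sigma$ is an average of the rank-one PSD matrices $z z\tr$, so $\Sigma \succeq 0$. The trace bound follows from linearity of trace and expectation: $\trace(\Sigma) = \E[\trace(z z\tr)] = \E\norm{x}^2 + \E\norm{u}^2 \le \trbound$, which is exactly the hypothesis placed on $\pi$.

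The remaining work is the steady-state constraint, and this is where I would spend the real effort. First I would observe that $\begin{pmatrix} A & B \end{pmatrix} z = Ax + Bu$, so that $\begin{pmatrix} A & B \end{pmatrix} \Sigma \begin{pmatrix} A & B \end{pmatrix}\tr = \E[(Ax+Bu)(Ax+Bu)\tr]$. I would then invoke the dynamics: if $x' = Ax + Bu + w$ denotes the next state, where $w$ has zero mean and covariance $W$ and is independent of the pair $(x,u)$, then expanding $\E[x' x'\tr]$ yields $\E[(Ax+Bu)(Ax+Bu)\tr] + W$, because the two cross terms $\E[(Ax+Bu) w\tr]$ and its transpose vanish. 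Finally, since $x$ is at steady state, $x'$ has the same covariance as $x$, namely $\E[x' x'\tr] = \E[xx\tr] = \Sigma_{xx}$; chaining these identities gives precisely $\Sigma_{xx} = \begin{pmatrix} A & B \end{pmatrix} \Sigma \begin{pmatrix} A & B \end{pmatrix}\tr + W$.

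The only point that requires genuine care—and hence the main (if modest) obstacle—is the vanishing of the cross terms, which rests on two facts that I would state explicitly: the noise $w$ is independent of $(x,u)$, and $\E[w] = 0$, so that $\E[(Ax+Bu)w\tr] = \E[Ax+Bu]\,\E[w]\tr = 0$. I would emphasize that this argument is insensitive to whether $\pi$ is deterministic or randomized: even when $u = \pi(x)$ is random, it is a function of $x$ (together with internal randomness of the policy) and is therefore still independent of $w$, so the same cancellation goes through and the steady-state covariance $\Sigma_{xx}$ is well defined. Everything else is routine bookkeeping on the block structure of $\Sigma$.
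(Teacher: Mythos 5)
Your proposal is correct and follows essentially the same route as the paper's proof: verify $\Sigma \succeq 0$ and the trace bound directly from $\Sigma = \E[zz\tr]$, and derive the equality constraint from the fact that at steady state $Ax+Bu+w$ has the same distribution (hence covariance) as $x$, with the noise $w$ independent of $(x,u)$. Your explicit treatment of the vanishing cross terms merely spells out a step the paper leaves implicit; otherwise the arguments coincide.
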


\begin{proof}
Let $\pi$ be any stable policy and consider the matrix $\Sigma = \SigK(\pi)$.
Then $\Sigma \succeq 0$ (by definition, recall \cref{eq:SigPi}), and satisfies the equality constraint of \eqref{eq:sdpconstraint}, since if $x$ is at steady-state and $u=\pi(x)$, then $Ax+Bu + w$ has the same distribution as $x$ for $w \sim \calN(0,W)$ independent of $x$ and $u$, thus
$
\E[xx\tr]
=
\E[(Ax+Bu + w)(Ax+Bu + w)\tr] ;
$
the latter is equivalent to $\Sigma_{xx} = (\begin{matrix} A & B \end{matrix}) \Sigma (\begin{matrix} A & B \end{matrix})\tr + W$.
Finally, observe that
$
\trace(\Sigma)
=
\E\trace(xx\tr)+\E\trace(uu\tr)
=
\E\norm{x}^2 + \E\norm{u}^2
$
where $x,u$ are distributed according to the steady-state distribution of $\pi$, hence $\Sigma$ satisfies the trace constraint.
\end{proof}

%
%
%


\subsection{Extracting a policy}

We next show that from any feasible solution to the SDP, one can extract a stable policy with the same (if not better) cost, provided that $W \succ 0$.
For any feasible solution $\Sigma$ for the SDP, define a control matrix as follows:
\begin{align} \label{eq:KSig}
\KSig(\Sigma) = \Sigma_{xu}\tr \Sigma_{xx}^{-1}
.
\end{align}
Note that, due to the equality constraint of the SDP, our assumption $W \succ 0$ ensures that $\Sigma_{xx} \succ 0$, thus $\Sigma_{xx}$ is nonsingular and $\KSig(\Sigma)$ is well defined.

\begin{theorem} \label{thm:SigKSig}
Let $\Sigma$ be any feasible solution to the SDP, and let $K = \KSig(\Sigma)$.
Then the policy $\pi(x) = Kx$ is stable, and it holds that $\SigK(K) \preceq \Sigma$.
In particular, $\SigK(K)$ is also feasible for the SDP and its cost is at most that of $\Sigma$.
\end{theorem}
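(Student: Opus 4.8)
The plan is to work entirely with the block decomposition of $\Sigma$, abbreviating $P = \Sigma_{xx}$, $S = \Sigma_{xu}$, $U = \Sigma_{uu}$, so that $K = \KSig(\Sigma) = S\tr P^{-1}$ and hence $S = PK\tr$ and $S\tr = KP$. The single most important observation is that the SDP equality constraint, once rewritten in terms of $K$, becomes a \emph{noisy} Lyapunov equation. Expanding $(A~B)\Sigma(A~B)\tr$ and completing the square around $A+BK$ gives
$$P = (A+BK)\,P\,(A+BK)\tr + B\,V\,B\tr + W, \qquad V := U - KPK\tr.$$
By the Schur complement of $P$ in $\Sigma \succeq 0$ (recall $P \succ 0$ since $W \succ 0$), the matrix $V = \Sigma_{uu} - \Sigma_{xu}\tr \Sigma_{xx}^{-1}\Sigma_{xu}$ is positive semidefinite; intuitively $V$ is the conditional covariance of the control around $Kx$, so $\Sigma$ is the steady state of the \emph{randomized} policy that plays $Kx$ plus noise of covariance $V$.

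First I would establish stability. Since $V \succeq 0$ and $W \succ 0$, the displayed equation yields $P - (A+BK)P(A+BK)\tr = BVB\tr + W \succ 0$, a strict discrete Lyapunov inequality with $P \succ 0$. A standard argument---testing this inequality against a (possibly complex) eigenvector $u$ of $(A+BK)\tr$ with eigenvalue $\lambda$, which gives $(1 - |\lambda|^2)\,u^* P u > 0$---forces $|\lambda| < 1$ for every eigenvalue, so $\rho(A+BK) < 1$ and $\pi(x) = Kx$ is stable. Consequently its steady-state covariance $X$ exists and solves $X = (A+BK)X(A+BK)\tr + W$.

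Next I would compare $P$ with $X$. Subtracting the two Lyapunov equations, the difference $D := P - X$ satisfies $D = (A+BK)D(A+BK)\tr + BVB\tr$; unfolding this recursion using stability gives $D = \sum_{j \ge 0} (A+BK)^j (BVB\tr) ((A+BK)^j)\tr \succeq 0$, so $\Sigma_{xx} = P \succeq X$. It then remains to upgrade this to the full matrix inequality $\SigK(K) \preceq \Sigma$. Using $S = PK\tr$ and the identity $U - KXK\tr = V + KDK\tr$, the difference factors as
$$\Sigma - \SigK(K) = \begin{pmatrix} I \\ K \end{pmatrix} D \begin{pmatrix} I & K\tr \end{pmatrix} + \begin{pmatrix} 0 & 0 \\ 0 & V \end{pmatrix},$$
a sum of two positive semidefinite matrices (since $D \succeq 0$ and $V \succeq 0$), hence $\SigK(K) \preceq \Sigma$.

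Finally, feasibility and the cost bound are immediate consequences. The matrix $\SigK(K) \succeq 0$ by construction, its $(x,x)$-block $X$ satisfies the equality constraint by the Lyapunov equation above (equivalently, by the preceding relaxation lemma applied to the now-stable $\pi_K$), and $0 \preceq \SigK(K) \preceq \Sigma$ gives $\trace(\SigK(K)) \le \trace(\Sigma) \le \trbound$. The same ordering, paired with $\bigl(\begin{smallmatrix} Q & 0 \\ 0 & R\end{smallmatrix}\bigr) \succeq 0$, yields $\cost(\SigK(K)) \le \cost(\Sigma)$ via $\bigl(\begin{smallmatrix} Q & 0 \\ 0 & R\end{smallmatrix}\bigr) \bullet (\Sigma - \SigK(K)) = \trace\!\bigl(\bigl(\begin{smallmatrix} Q & 0 \\ 0 & R\end{smallmatrix}\bigr)(\Sigma - \SigK(K))\bigr) \ge 0$ for positive semidefinite factors. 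The main obstacle is the opening reinterpretation: recognizing that substituting $K = S\tr P^{-1}$ turns the constraint into a Lyapunov equation whose noise term $BVB\tr$ is governed by a Schur complement. Once this is in hand, both stability and the covariance domination reduce to routine Lyapunov-iteration arguments.
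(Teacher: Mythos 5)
Your proposal is correct and takes essentially the same route as the paper's proof: the Schur complement of $\Sigma_{xx}$ in $\Sigma \succeq 0$ supplies the PSD matrix $V$, stability follows from the same eigenvector test applied to the resulting Lyapunov relation, and $X \preceq \Sigma_{xx}$ follows by unfolding the difference recursion under $\rho(A+BK) < 1$. Your refinements---keeping the constraint as the exact equation $\Sigma_{xx} = (A+BK)\Sigma_{xx}(A+BK)\tr + BVB\tr + W$ rather than an inequality, and exhibiting $\Sigma - \SigK(K)$ explicitly as the PSD sum $\bigl(\begin{smallmatrix} I \\ K \end{smallmatrix}\bigr) D \bigl(\begin{smallmatrix} I & K\tr \end{smallmatrix}\bigr) + \bigl(\begin{smallmatrix} 0 & 0 \\ 0 & V \end{smallmatrix}\bigr)$---streamline the presentation but do not change the underlying argument.
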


Without the trace constraint, the theorem particularly implies that for the optimal solution $\Sigma^\st$ of the SDP, the corresponding control matrix $K^\st = \KSig(\Sigma^\st)$ is an optimal policy for the original problem, recovering a classic result in control theory.

\begin{proof}[Proof of \cref{thm:SigKSig}]
Our first step is to show that
\begin{align} \label{eq:SigSigpr}
\Sigma
\succeq
\Sigma'
=
\begin{pmatrix} \Sigma_{xx} & \Sigma_{xx} K\tr \\ K \Sigma_{xx} & K \Sigma_{xx} K\tr \end{pmatrix}.
\end{align}
To see this, observe that by definition of $K = \KSig(\Sigma)$ we have
\begin{align*}
\Sigma = \Sigma' +
\begin{pmatrix}
0 & 0 \\
0 & \Sigma_{uu} - \Sigma_{ux}\tr \Sigma_{xx}^{-1} \Sigma^{}_{ux}
\end{pmatrix}
~.
\end{align*}
Thus, it suffices to show that $\Sigma_{uu} - \Sigma_{ux}\tr \Sigma_{xx}^{-1} \Sigma^{}_{ux}$ is PSD.
The latter matrix is the Schur complement of $\Sigma$, and is PSD because $\Sigma$ is PSD.

Next, we show that the control matrix $K$ gives rise to a stable policy.
Let us develop \cref{eq:sdpconstraint}.
First, since $W \succ 0$ we also have that $\Sigma_{xx} \succ 0$.
Moreover, by \cref{eq:SigSigpr},
\begin{align*}
\Sigma_{xx} &= (\begin{matrix} A & B \end{matrix}) \Sigma (\begin{matrix} A & B \end{matrix})\tr + W \\
&\succeq (A+BK) \Sigma_{xx} (A + BK)\tr + W \\
&\succ (A+BK) \Sigma_{xx} (A + BK)\tr~.
\end{align*}
Let $\lambda$ and $v$ be a (possibly complex) eigenvalue and left-eigenvector associated with $A + B K$. Then,
\[
v^* \Sigma_{xx} v
>
v^* (A+BK) \Sigma_{xx} (A + BK)\tr v
=
\abs{\lambda}^2 v^* \Sigma_{xx} v~,
\]
which, by $v^* \Sigma_{xx} v > 0$, implies $\abs{\lambda} < 1$.
This is true for all eigenvalues $\lambda$, and shows that $\rho(A+BK) < 1$, that is, $K$ is~stable.

Finally, let us show that $\SigK(K) \preceq \Sigma'$, which together with \cref{eq:SigSigpr} would imply our claim $\SigK(K) \preceq \Sigma$.
Denote by $X$ the state covariance at steady-state when following $K$; then,
\begin{align*}
\SigK(K)
=
\begin{pmatrix} X & X K\tr \\ K X & K X K\tr \end{pmatrix}
.
\end{align*}
To establish that $\SigK(K) \preceq \Sigma'$ it is enough to show $X \preceq \Sigma_{xx}$.
To this end, let $\Delta = \Sigma_{xx}-X$ and write
\begin{align*}
X + \Delta &\succeq (A+BK) X (A + BK)\tr + W \\
&\qquad + (A+BK) \Delta (A+BK)\tr \\
&=
X + (A+BK) \Delta (A+BK)\tr
~,
\end{align*}
from which we get $\Delta \succeq  (A+BK) \Delta (A+BK)\tr$.
Applying the latter inequality recursively, we~obtain
\begin{align*}
\Delta
\succeq  (A+BK)^n \Delta ((A+BK)\tr)^n~.
\end{align*}
Recall that $\rho(A+BK) < 1$; thus, taking the limit as $n \to \infty$, we get
$
(A+BK)^n \Delta ((A+BK)\tr)^n \to 0 ,
$
which implies $\Delta \succeq 0$.
This shows that $X \preceq \Sigma_{xx}$, as required.

To complete the proof observe that $\SigK(K)$ is feasible for the SDP since $\SigK(K) \preceq \Sigma$ and $\Sigma$ is feasible.
Furthermore, since $(\begin{smallmatrix} Q & 0 \\ 0 & R \end{smallmatrix})$ is PSD, we have
\begin{align*}
\cost(\SigK(K))
=
\begin{pmatrix} Q & 0 \\ 0 & R \end{pmatrix} \bullet \SigK(K)
\le
\begin{pmatrix} Q & 0 \\ 0 & R \end{pmatrix} \bullet \Sigma
=
\cost(\Sigma)
.
&\qedhere
\end{align*}
\end{proof}

\subsection{Strong stability of solutions}
\label{sec:sdpstrongstability}

Let us show that from a solution to the SDP one can extract a strongly stable policy.

\begin{lemma} \label{lem:sdp-sstable}
Assume that $W \succeq \sigma^2 I$ and let $\kappa=\sqrt{\trbound}/\sigma$.
Then for any feasible solution $\Sigma$ for the SDP, the policy $K = \KSig(\Sigma)$ is $(\kappa, 1/2\kappa^2)$-strongly stable.
\end{lemma}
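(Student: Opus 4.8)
The plan is to exhibit the similarity transformation explicitly by taking $H = \Sigma_{xx}^{1/2}$, the positive-definite square root of the state block (well-defined since $\Sigma_{xx} \succ 0$), and $L = H^{-1}(A+BK)H$. The key input is the Lyapunov-type inequality already established inside the proof of \cref{thm:SigKSig}: since $\Sigma \succeq 0$ and $\Sigma_{xx} = (\begin{smallmatrix} A & B\end{smallmatrix})\Sigma(\begin{smallmatrix}A & B\end{smallmatrix})\tr + W$, we have $\Sigma_{xx} \succeq (A+BK)\Sigma_{xx}(A+BK)\tr + W$, and inserting $W \succeq \sigma^2 I$ gives $(A+BK)\Sigma_{xx}(A+BK)\tr \preceq \Sigma_{xx} - \sigma^2 I$. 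The same two facts ($\Sigma\succeq 0$ and the equality constraint) also pin down $\Sigma_{xx}$ two-sidedly: from $\Sigma_{xx} \succeq W \succeq \sigma^2 I$ we get $\lambda_{\min}(\Sigma_{xx}) \ge \sigma^2$, and from $\Sigma_{xx} \succeq 0$ with the trace constraint we get $\lambda_{\max}(\Sigma_{xx}) \le \trace(\Sigma_{xx}) \le \trace(\Sigma) \le \trbound$.

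With these in hand the bound on $H$ is immediate: $\norm{H}\norm{H^{-1}} = \sqrt{\lambda_{\max}(\Sigma_{xx})/\lambda_{\min}(\Sigma_{xx})} = \sqrt{\cond(\Sigma_{xx})} \le \sqrt{\trbound/\sigma^2} = \kappa$. For $L$ I would compute $LL\tr = \Sigma_{xx}^{-1/2}(A+BK)\Sigma_{xx}(A+BK)\tr\Sigma_{xx}^{-1/2}$ and substitute the Lyapunov inequality to obtain $LL\tr \preceq \Sigma_{xx}^{-1/2}(\Sigma_{xx}-\sigma^2 I)\Sigma_{xx}^{-1/2} = I - \sigma^2\Sigma_{xx}^{-1}$. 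Passing to the largest eigenvalue and using $\lambda_{\max}(\Sigma_{xx})\le\trbound$ yields $\norm{L}^2 = \lambda_{\max}(LL\tr) \le 1 - \sigma^2/\lambda_{\max}(\Sigma_{xx}) \le 1 - \sigma^2/\trbound = 1 - 1/\kappa^2$, whence $\norm{L} \le \sqrt{1-1/\kappa^2} \le 1 - 1/(2\kappa^2) = 1-\gamma$ by the elementary inequality $\sqrt{1-z}\le 1-z/2$ for $z\in[0,1]$. This matches the claimed $\gamma = 1/(2\kappa^2)$.

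It then remains to verify $\norm{K} \le \kappa$. For this I would reuse the Schur-complement observation from \cref{thm:SigKSig}, namely $\Sigma_{uu} \succeq \Sigma_{ux}\tr\Sigma_{xx}^{-1}\Sigma_{ux} = K\Sigma_{xx}K\tr$, so that $\trace(K\Sigma_{xx}K\tr) \le \trace(\Sigma_{uu}) \le \trace(\Sigma) \le \trbound$. Since $\Sigma_{xx}\succeq\sigma^2 I$, cyclicity of the trace gives $\trace(K\Sigma_{xx}K\tr) = \trace(\Sigma_{xx}K\tr K) \ge \sigma^2\trace(K\tr K) = \sigma^2\norm{K}_\frob^2$, and therefore $\norm{K} \le \norm{K}_\frob \le \sqrt{\trbound}/\sigma = \kappa$.

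The only genuinely creative step is guessing the right similarity transform; once $H=\Sigma_{xx}^{1/2}$ is fixed, everything reduces to routine manipulation of PSD inequalities. The point is that this single choice does double duty: it converts the contraction encoded in the SDP constraint into a norm bound on $L$ governed by $\trbound/\sigma^2$, while simultaneously keeping its own condition number bounded by the same quantity that defines $\kappa$. I expect the $\norm{L}$ estimate to be where one must be most careful, since it requires converting the operator inequality $LL\tr \preceq I - \sigma^2\Sigma_{xx}^{-1}$ into a scalar spectral bound and then into the additive $1-\gamma$ form demanded by the definition of strong stability.
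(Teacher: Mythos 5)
Your proof is correct, and it reaches the paper's constants by a slightly different route. The paper does not take $H=\Sigma_{xx}^{1/2}$: it first invokes \cref{thm:SigKSig} to conclude that $K$ is stable and that $\wh\Sigma=\SigK(K)$ is itself feasible (since $\SigK(K)\preceq\Sigma$), and then runs exactly your algebra with the steady-state covariance $X=\wh\Sigma_{xx}$ in place of $\Sigma_{xx}$: it establishes $\sigma^2 I\preceq X\preceq\trbound I$, sets $L=X^{-1/2}(A+BK)X^{1/2}$, derives $LL\tr\preceq(1-\kappa^{-2})I$ hence $\norm{L}\le 1-1/(2\kappa^2)$, and bounds $\sigma^2\norm{K}_\frob^2\le\trace(\wh\Sigma_{uu})\le\trbound$. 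Your substitution of $\Sigma_{xx}$ for $X$ is legitimate because the Lyapunov inequality $\Sigma_{xx}\succeq(A+BK)\Sigma_{xx}(A+BK)\tr+W$ is precisely the intermediate step \cref{eq:SigSigpr} inside the proof of \cref{thm:SigKSig}, and it buys two things: stability of $K$ becomes an output of your construction (via $\rho(A+BK)\le\norm{L}<1$) rather than an imported prerequisite, and your argument coincides verbatim with the one the paper actually uses for the sequential version \cref{lem:sdp-seqstab}, where it works with $X_t=(\Sigma_t)_{xx}$ --- so your proof unifies the single-policy and sequential cases. What the paper's detour through $\SigK(K)$ buys in exchange is the slightly stronger information that the realized steady-state covariance of the deterministic policy $K$ obeys the same two-sided bounds, though this is not needed for the lemma as stated, and both routes deliver identical parameters $(\kappa,1/2\kappa^2)$. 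One half-sentence worth adding to your write-up: the condition $1/\kappa^2\le 1$, needed for $\sqrt{1-z}\le 1-z/2$, is automatic, since $\Sigma_{xx}\succeq\sigma^2 I$ together with $\trace(\Sigma_{xx})\le\trbound$ forces $\trbound\ge\sigma^2$, i.e., $\kappa\ge 1$.
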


%
\begin{proof}
According to \cref{thm:SigKSig}, the policy $K$ is (weakly) stable and the matrix $\wh\Sigma = \SigK(K)$ is feasible for the SDP.
Let $X = \wh\Sigma_{xx}$ be the state covariance of $K$ at steady-state.
Since $\wh\Sigma$ is feasible, and since $W \succeq \sigma^2 I$, we have
\begin{align} \label{eq:XW0}
X
\succeq
(A+BK) X (A+BK)\tr + \sigma^2 I
.
\end{align}
In particular, this means that $X \succeq \sigma^2 I$.
On the other hand, we have $\trace(X) \le \trace(\wh\Sigma) \le \trbound$,
thus $X \preceq \trbound I$.
Overall, 
\begin{align} \label{eq:Xlbub}
\sigma^2 I \preceq X \preceq \trbound I
.
\end{align}
Given that $X$ is nonsingular, we can define $L = X^{-1/2}(A+BK) X^{1/2}$.
Multiplying \cref{eq:XW0} by $X^{-1/2}$ from both sides, we obtain
$
I
\succeq
L L\tr + \sigma^2 X^{-1}
\succeq
L L\tr + \kappa^{-2} I
.
$
Thus $LL\tr \preceq (1-\kappa^{-2}) I$, so $\norm{L} \le \sqrt{1-\kappa^{-2}} \le 1-\kappa^{-2}/2$.
Also, \cref{eq:Xlbub} shows that $\norm{X^{1/2}}\norm{X^{-1/2}} \le \kappa$.
%
It is left to establish the bound on the norm $\norm{K}_\frob$.
To this end, use the fact that
\begin{align*}
X \bullet KK\tr
=
\trace(KXK\tr)
=
\trace(\wh\Sigma_{uu})
\le
\trbound
\end{align*}
together with $X \succeq \sigma^2 I$ (recall \cref{eq:Xlbub}) to obtain $\sigma^2 \norm{K}_\frob^2 \le \trbound$, that is, $\norm{K}_\frob \le \kappa$.
\end{proof}

We can also prove an analogous statement for sequences of feasible solutions, provided that they change slowly enough
(we defer the proof to \cref{sec:proofs}).

\begin{lemma} \label{lem:sdp-seqstab}
Assume that $W \succeq \sigma^2 I$ and let $\kappa=\sqrt{\trbound}/\sigma$.
Let $\Sigma_1,\Sigma_2,\ldots$ be a sequence of feasible solutions of \eqref{eq:sdpconstraint}, and suppose that $\norm{\Sigma_{t+1}-\Sigma_{t}} \le \eta$ for all $t$ for some $\eta \le \sigma^2/\kappa^2$.
Then the sequence $K_1,K_2,\ldots$, where $K_t = \KSig(\Sigma_t)$ for all $t$ is $(\kappa,1/2\kappa^2)$-strongly stable.
\end{lemma}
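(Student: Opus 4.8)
The plan is to exhibit explicit similarity transforms and verify the three conditions in the definition of a $(\kappa,\gamma)$-strongly stable \emph{sequence} one at a time, taking $\gamma = 1/(2\kappa^2)$. Writing $P_t \eqdef (\Sigma_t)_{xx}$ for brevity, I set $H_t = P_t^{1/2}$ and $L_t = H_t^{-1}(A+BK_t)H_t$, so that $A+BK_t = H_t L_t H_t^{-1}$ holds by construction; each $P_t$ is positive definite (hence $H_t$ is well defined and each $K_t = \KSig(\Sigma_t)$ is a legitimate stable policy by \cref{thm:SigKSig}). The uniform scalars for condition~(ii) will be $\beta = \sqrt{\trbound}$ and $\alpha = \sigma$, which satisfy $\beta/\alpha = \sqrt{\trbound}/\sigma = \kappa$ as required.

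Conditions (i) and (ii) concern each $\Sigma_t$ in isolation and follow by replaying the argument of \cref{lem:sdp-sstable} with $P_t$ in place of the steady-state covariance. The two ingredients are the sandwich $\sigma^2 I \preceq P_t \preceq \trbound I$ and the Lyapunov-type inequality $P_t \succeq (A+BK_t)P_t(A+BK_t)\tr + \sigma^2 I$. The upper bound is immediate from $\trace(P_t) \le \trace(\Sigma_t) \le \trbound$; the lower bound and the Lyapunov inequality follow from feasibility, $W\succeq\sigma^2 I$, and the domination $\Sigma_t \succeq \Sigma_t'$ of \cref{thm:SigKSig}, exactly as in \cref{eq:XW0}. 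Conjugating that inequality by $P_t^{-1/2}$ gives $L_t L_t\tr \preceq (1-\kappa^{-2})I$, hence $\norm{L_t} \le 1-\kappa^{-2}/2 = 1-\gamma$; the sandwich gives $\norm{H_t} \le \sqrt{\trbound} = \beta$ and $\norm{H_t^{-1}} \le 1/\sigma = 1/\alpha$; and $\norm{K_t} \le \kappa$ is precisely the conclusion of \cref{lem:sdp-sstable}.

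The only genuinely new condition is (iii), $\norm{H_{t+1}^{-1}H_t} \le 1+\gamma/2$, where the slow-change hypothesis finally enters. Since $H_{t+1}^{-1}H_t = P_{t+1}^{-1/2}P_t^{1/2}$, I would square and symmetrize:
\[
\norm{H_{t+1}^{-1}H_t}^2 = \big\| P_{t+1}^{-1/2} P_t\, P_{t+1}^{-1/2} \big\| = \big\| I + P_{t+1}^{-1/2}(P_t - P_{t+1}) P_{t+1}^{-1/2} \big\| \le 1 + \frac{\norm{P_t - P_{t+1}}}{\sigma^2} .
\]
Here I used $\norm{P_{t+1}^{-1/2}}^2 = \norm{P_{t+1}^{-1}} \le \sigma^{-2}$ from the sandwich, together with the fact that a principal block has spectral norm at most that of the whole matrix, so $\norm{P_t - P_{t+1}} \le \norm{\Sigma_t - \Sigma_{t+1}} \le \eta$. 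The smallness of $\eta$ relative to $\sigma^2/\kappa^2$, together with $\sqrt{1+x}\le 1+x/2$, then bounds the right-hand side and, after tracking constants as discussed next, delivers condition~(iii).

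The main obstacle is exactly this last condition: it is a statement about the Lipschitz behaviour of the maps $P\mapsto P^{1/2}$ and $P\mapsto P^{-1/2}$ near a well-conditioned positive definite matrix, and the reason the estimate closes is the uniform conditioning $\sigma^2 I \preceq P_t \preceq \trbound I$ supplied by feasibility. The decisive modelling choice is to build $H_t = P_t^{1/2}$ from the \emph{SDP block} rather than from the steady-state covariance $\SigK(K_t)_{xx}$: this lets the hypothesis $\norm{\Sigma_{t+1}-\Sigma_t}\le\eta$ control the change in $H_t$ in a single line, instead of having to propagate perturbations through the discrete Lyapunov equation that defines the steady state. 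What remains is careful bookkeeping of constants to check that the perturbation $\eta/\sigma^2$ is absorbed into the slack $\gamma/2$ of condition~(iii); this is the step most sensitive to the precise threshold imposed on $\eta$, and the place where I would pay closest attention to whether the stated bound $\eta\le\sigma^2/\kappa^2$ must be sharpened by a constant factor.
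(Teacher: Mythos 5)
Your proposal is correct and takes essentially the same route as the paper's proof: the paper likewise sets $H_t = ((\Sigma_t)_{xx})^{1/2}$ (the SDP block, not the steady-state covariance), replays the argument of \cref{lem:sdp-sstable} to get $\sigma^2 I \preceq (\Sigma_t)_{xx} \preceq \trbound I$, $\norm{L_t} \le 1-1/2\kappa^2$ and $\norm{K_t} \le \kappa$, and verifies condition~(iii) via the identical symmetrized computation $\norm{X_{t+1}^{-1/2}X_t^{1/2}}^2 \le 1 + \norm{X_{t+1}^{-1/2}}^2\norm{X_{t+1}-X_t} \le 1+\eta/\sigma^2$ together with the same principal-block norm fact. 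The constant-factor slack you flag at the end is also present in the paper's own proof (it establishes $\norm{H_{t+1}^{-1}H_t} \le 1+1/2\kappa^2 = 1+\gamma$ rather than the $1+\gamma/2$ nominally required by the definition of sequential strong stability), so your instinct that the threshold $\eta \le \sigma^2/\kappa^2$ may need sharpening by a constant factor is well placed and harmless to the downstream mixing bound up to constants.
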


\short{
\begin{proof}
Denote $X_t = (\Sigma_t)_{xx}$, and recall that, for all $t$,
\begin{align*}
\Sigma_t
\succeq
\Sigma_t'
=
\begin{pmatrix} X_t & K_t X_t \\ X_t K_t\tr & K_t X_t K_t\tr \end{pmatrix}
\end{align*}
(cf.~\cref{eq:SigSigpr}).
Now, since $\Sigma_t$ is feasible for the SDP we have
\begin{align*}
X_t
&=
\begin{pmatrix} A & B \end{pmatrix} \Sigma_t \begin{pmatrix} A & B \end{pmatrix}\tr + W
\\
&\succeq
\begin{pmatrix} A & B \end{pmatrix} \Sigma_t' \begin{pmatrix} A & B \end{pmatrix}\tr + W
\\
&\succeq
(A + BK_t) X_t (A+BK_t)\tr + \sigma^2 I
.
\end{align*}
Proceeding as in the proof of \cref{lem:sdp-sstable}, one can show that $\norm{K_t}_\frob \le \kappa$, and that the matrix $L_t = X_t^{-1/2}(A+BK_t) X_t^{1/2}$ satisfies $\norm{L_t} \le 1-1/2\kappa^2$ with $\norm{X_t^{1/2}} \le \sqrt{\trbound}$ and $\norm{X_t^{-1/2}} \le 1/\sigma$.
To establish sequential strong stability it thus suffices to show that $\norm{X_{t+1}^{-1/2}X_t^{1/2}} \le 1+1/4\kappa^2$ for all $t$.
To this end, observe that $\norm{X_{t+1}-X_t} \le \eta$,%
\footnote{\label{fn:blocknorm}We use the fact that for a symmetric
$
M = \Lr{\begin{smallmatrix} A & B \\ B\tr & D \end{smallmatrix}}
$
one has
$
\norm{M}
\ge
\max_{\norm{x} \le 1} \abs{(\begin{smallmatrix} x \\ 0 \end{smallmatrix})\tr M (\begin{smallmatrix} x \\ 0 \end{smallmatrix})}
=
\max_{\norm{x} \le 1} \abs{x\tr A x}
=
\norm{A}
.
$
}
and
\begin{align*}
\norm{&X_{t+1}^{-1/2}X_t^{1/2}}^2
\\
&=
\norm{X_{t+1}^{-1/2}X_t^{}X_{t+1}^{-1/2}}
\\
&\le
\norm{X_{t+1}^{-1/2}X_{t+1}^{}X_{t+1}^{-1/2}} + \norm{X_{t+1}^{-1/2}(X_{t+1}^{}-X_t^{})X_{t+1}^{-1/2}}
\\
&\le
1 + \norm{X_{t+1}^{-1/2}}^2 \norm{X_{t+1}^{}-X_t^{}}
\\
&\le
1 + \frac{\eta}{\sigma^2}
.
\end{align*}
Hence, if $\eta \le \sigma^2/\kappa^2$ then $\norm{X_{t+1}^{-1/2}X_t^{1/2}} \le \sqrt{1+1/\kappa^2} \le 1+1/2\kappa^2$ as required.
\end{proof}
}

\section{Online LQ Control}

\begin{algorithm}
\begin{algorithmic}
\STATE \textbf{Parameter}: $\eta,\nu > 0$
\STATE Initialize $\Sigma_1 = I_{n \times n}$ with $n = d+k$
\FOR{$t = 1,2,\ldots$}
\STATE Receive state $x_t$
\STATE Compute $K_t = (\Sigma_t)_{ux} (\Sigma_t)_{xx}^{-1}$, $V_t = (\Sigma_t)_{uu} - K_t (\Sigma_t)_{xx} K_t\tr$
\STATE Predict $u_t \sim \calN(K_t x_t, V_t)$; receive $Q_t$, $R_t$
\STATE Update:
$$\Sigma_{t+1} = \Pi_\cS\Lrbra{\Sigma_{t} - \eta \Lr{\begin{smallmatrix}Q_t & 0 \\ 0 & R_t\end{smallmatrix}}},$$
where $\Pi_\cS$ is the Frobenius-norm projection onto
\begin{align*}
\cS = \Bigg\{~\Sigma \in \reals^{n \times n} ~\,\Bigg|~~
\begin{aligned}
&\Sigma \succeq 0, ~~ \trace(\Sigma) \le \trbound, \\[-1ex]
&\Sigma_{xx} = \begin{pmatrix} A & B \end{pmatrix} \Sigma \begin{pmatrix} A & B \end{pmatrix}\tr + W
\end{aligned}~\Bigg\}
\end{align*}
\ENDFOR
\end{algorithmic}
\caption{Online LQ Controller}
\label{alg:ogd}
\end{algorithm}

In this section we describe our gradient based algorithm for online LQ control, presented in \cref{alg:ogd}.
The algorithm maintains an ``ideal'' steady-state covariance matrix $\Sigma_t$ by performing online gradients steps directly on the SDP we formulated in \cref{sec:sdp} (with the linear cost functions changing from round to round).
Then, a control matrix $K_t$ is extracted from the covariance $\Sigma_t$ and is used to generate a prediction.

Notice that the predictions made by the algorithm are randomly drawn from the Gaussian $\calN(K_t x_t, V_t)$, and only follow the extracted policies $K_1,K_2,...$ in expectation.
This randomization step is crucial for the algorithm to exhibit fast mixing: sampling the prediction from a distribution with the right covariance ensures the observed covariance matrices converge to those generated by the algorithm, and consequently this sequence ``mixes'' more quickly.

For \cref{alg:ogd} we prove the following guarantee.

\begin{theorem} \label{thm:ogd}
Assume that $\trace(W) \le \lambda^2$ and $W \succeq \sigma^2 I$.
Given $\kappa>0$ and $0 \le \gamma < 1$, set $\trbound = 2\kappa^4\lambda^2/\gamma$ and $\eta = \sigma^3/(2C\sqrt{\trbound T})$.
The expected regret of \cref{alg:ogd} compared to any $(\kappa,\gamma)$-strongly stable control matrix $K^\st$ is at most
\begin{align*}
\cost_T(A) - \cost_T(K^\st)
=
\O\lr{ \frac{\kappa^{10} \lambda^{5}}{\gamma^{2.5}\sigma^{3}} C \sqrt{T} }
,
\end{align*}
provided that $T \ge 8 \kappa^4\lambda^2/(\gamma\sigma^2)$.
\end{theorem}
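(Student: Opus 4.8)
The plan is to follow the idealized-cost paradigm: I compare the algorithm against a fictitious run in which on each round we pay the steady-state cost $\cost_t(\Sigma_t) = \lr{\begin{smallmatrix} Q_t & 0 \\ 0 & R_t \end{smallmatrix}}\bullet\Sigma_t$ of the current SDP iterate, and I compare that against the idealized cost of the fixed comparator $\Sigma^\st = \SigK(K^\st)$. Concretely I would split
\[
R_T = \underbrace{\Lr{\cost_T(\calA) - \tsum_{t}\cost_t(\Sigma_t)}}_{\text{(A) tracking gap}} + \underbrace{\tsum_{t}\Lr{\cost_t(\Sigma_t) - \cost_t(\Sigma^\st)}}_{\text{(B) idealized regret}} + \underbrace{\Lr{\tsum_{t}\cost_t(\Sigma^\st) - \cost_T(K^\st)}}_{\text{(C) comparator transient}},
\]
and bound the three pieces separately. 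First I check that $\Sigma^\st=\SigK(K^\st)$ is feasible: by \cref{lem:stab-trbound} a $(\kappa,\gamma)$-strongly stable $K^\st$ satisfies $\trace(\SigK(K^\st)) = \trace(X^\st)+\trace(U^\st) \le (\kappa^2+\kappa^4)\lambda^2/\gamma \le \trbound$, which is precisely why $\trbound$ is set to $2\kappa^4\lambda^2/\gamma$; feasibility of the remaining constraints follows since $\SigK(K^\st)$ is the steady-state moment matrix of the stable policy $K^\st$.

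Term (B) is a standard online-linear-optimization bound. The losses $\Sigma\mapsto \cost_t(\Sigma)$ are linear, so their Frobenius gradient is $\lr{\begin{smallmatrix} Q_t & 0 \\ 0 & R_t \end{smallmatrix}}$, of norm $\le\sqrt2\,C$; the feasible set $\cS$ is convex with Frobenius diameter $O(\trbound)$ (a PSD matrix of trace $\le\trbound$ has Frobenius norm $\le\trbound$). Hence OGD with step $\eta$ yields $\text{(B)}=O(\trbound^2/\eta + \eta C^2 T)$ against any feasible comparator, the infeasible initialization $\Sigma_1=I$ costing only a lower-order additive constant. Term (C) is a one-time transient and is nonnegative: starting from $x_0=0$ the covariance of $K^\st$ grows monotonically to $X^\st$, so $\wh\Sigma^\st_t \preceq \SigK(K^\st)$; applying \cref{lem:stab-mixing} gives $\norm{X^\st-\wh X^\st_t}\le\kappa^2 e^{-2\gamma t}\trbound$, and summing the geometric series gives $\text{(C)}=O(C\kappa^4\trbound/\gamma)$, independent of $T$.

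The crux is term (A). The key observation is that the randomization $u_t\sim\calN(K_tx_t,V_t)$ with $V_t=(\Sigma_t)_{uu}-K_t(\Sigma_t)_{xx}K_t\tr$ is engineered so that $X_t := (\Sigma_t)_{xx}$ is \emph{exactly} the steady-state covariance of the randomized policy $\pi_t$: substituting $(\Sigma_t)_{ux}=K_t(\Sigma_t)_{xx}$ and $(\Sigma_t)_{uu}=K_t(\Sigma_t)_{xx}K_t\tr+V_t$ into the SDP equality constraint rewrites it as the Lyapunov equation $X_t=(A+BK_t)X_t(A+BK_t)\tr+BV_tB\tr+W$. Writing the actual moment matrix as $\wh\Sigma_t=\lr{\begin{smallmatrix} I\\ K_t\end{smallmatrix}}\wh X_t\lr{\begin{smallmatrix} I\\ K_t\end{smallmatrix}}\tr + \lr{\begin{smallmatrix} 0&0\\0&V_t\end{smallmatrix}}$ and similarly $\Sigma_t$ with $X_t$ in place of $\wh X_t$, the $V_t$ terms cancel and I get $\norm{\wh\Sigma_t-\Sigma_t}_{\spec}\le(1+\norm{K_t}^2)\norm{\wh X_t-X_t}_{\spec}\le(1+\bar\kappa^2)\norm{\wh X_t-X_t}_{\spec}$, so $\abs{\text{(A)}}\le 2C(1+\bar\kappa^2)\tsum_t\norm{\wh X_t-X_t}_{\spec}$. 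To control the mixing error I use that projections are nonexpansive, so $\norm{\Sigma_{t+1}-\Sigma_t}\le\sqrt2\,\eta C$ and hence $\norm{X_{t+1}-X_t}\le\sqrt2\,\eta C$; when $\sqrt2\,\eta C\le\sigma^2/\bar\kappa^2$ — which $\eta=\sigma^3/(2C\sqrt{\trbound T})$ guarantees exactly when $T\ge 8\kappa^4\lambda^2/(\gamma\sigma^2)$ — \cref{lem:sdp-seqstab} certifies that $K_1,K_2,\ldots$ is a $(\bar\kappa,1/2\bar\kappa^2)$-strongly stable sequence with $\bar\kappa=\sqrt{\trbound}/\sigma$, and \cref{lem:seqstab-mixing} then gives $\norm{\wh X_t-X_t}\le \bar\kappa^2 e^{-\bar\gamma(t-1)}\norm{\wh X_1-X_1}+2\sqrt2\,\eta C\bar\kappa^2/\bar\gamma$. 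Summing, the geometric part is $O(1)$ in $T$ and the persistent part is $O(C^2\bar\kappa^4\eta T/\bar\gamma)$.

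Finally I substitute $\bar\kappa^2=\trbound/\sigma^2$, $\bar\gamma=\sigma^2/(2\trbound)$, $\trbound=2\kappa^4\lambda^2/\gamma$, and $\eta=\sigma^3/(2C\sqrt{\trbound T})$. The two leading terms, $\trbound^2/\eta$ from (B) and $C^2\bar\kappa^4\eta T/\bar\gamma$ from (A), both collapse to $O\lr{C\trbound^{2.5}\sqrt T/\sigma^3}=O\lr{\kappa^{10}\lambda^5\gamma^{-2.5}\sigma^{-3}C\sqrt T}$, matching the claim, with all other contributions lower order. The main obstacle is term (A): one must simultaneously verify the steady-state identity that justifies the randomization, show the OGD trajectory changes slowly enough to be sequentially strongly stable, and convert the resulting state-covariance mixing bound into a cost gap. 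The balancing of the persistent mixing error against the online-optimization error is exactly what dictates the scalings of $\eta$ and $\trbound$ and the lower bound required on $T$.
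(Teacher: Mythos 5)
Your proposal is correct and takes essentially the same route as the paper's own proof: the identical three-term regret decomposition, feasibility of $\Sigma^\st$ via \cref{lem:stab-trbound}, the OGD bound for the idealized regret, sequential strong stability (\cref{lem:sdp-seqstab}) plus \cref{lem:seqstab-mixing} for the tracking term --- including the key observation that the Gaussian randomization makes $(\Sigma_t)_{xx}$ the exact steady-state covariance of $\pi_t$ --- and \cref{lem:stab-mixing} for the comparator transient, with the same balancing of $\eta$, $\trbound$, and the condition on $T$. The only differences are immaterial constant factors (e.g., $\sqrt{2}\,C$ versus $2C$ for the Lipschitz constant and the slow-change bound).
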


We remark that the theorem (in fact, \cref{alg:ogd} itself) tacitly assumes that the SDP defined by $\cS$ is feasible; otherwise, the set of strongly-stable policies is empty and the statement of \cref{thm:ogd} is vacuous.

\begin{proof}
Fix an arbitrary $(\kappa,\gamma)$-strongly stable control matrix $K^\st$, and denote by $\wh\Sigma^\st_1,\ldots,\wh\Sigma^\st_T$ be the covariances induced by using $K^\st$ throughout.
Also, let $\wh\Sigma_1,\ldots,\wh\Sigma_T$ be the actual observed covariance matrices induced by the algorithm.
Denoting $L_t = \Lr{\begin{smallmatrix} Q_t & 0 \\ 0 & R_t \end{smallmatrix}},$ the expected regret of the algorithm can be then written as follows:
\begin{align} \label{eq:regretdecomp1}
\sum_{t=1}^T L_t \bullet (\wh\Sigma_t - \wh\Sigma^\st_t)
&=
\sum_{t=1}^T L_t \bullet (\wh\Sigma_t - \Sigma_t)
\notag\\
&+
\sum_{t=1}^T L_t \bullet (\Sigma_t - \Sigma^\st)
\\
&+
\sum_{t=1}^T L_t \bullet (\Sigma^\st - \wh\Sigma^\st_t)
.
\notag
\end{align}
Observe that the sequence $\Sigma_1,\ldots,\Sigma_T$ generated by the algorithm is feasible for the (feasibility) SDP described by the set $\cS$.
Thanks to \cref{lem:sdp-sstable}, for any feasible $\Sigma \in \cS$ the corresponding control matrix $\KSig(\Sigma)$ is $(\bar\kappa,\bar\gamma)$-strongly stable, for $\bar\kappa=\sqrt{\trbound}/\sigma$ and $\bar\gamma=\sigma^2/2\trbound$; in particular, this applies to each of the matrices $\Sigma_t$.

We proceed by bounding each of the sums on the right-hand side of \cref{eq:regretdecomp1}. We start with the second term and use a well-known regret bound for the Online Gradient Descent algorithm, due to \citet{zinkevich2003online}.

\begin{lemma} \label{lem:ogd-regret}
We have
\[
\sum_{t=1}^T L_t \bullet (\Sigma_t - \Sigma^\st)
\le
\frac{4\trbound^2}{\eta} + 4C^2 \eta T
.
\]
Additionally, the $\Sigma_t$ are slowly changing in the sense that, for all $t$,
\begin{equation}
\norm{\Sigma_{t+1} - \Sigma_t}_\frob
\le
4C\eta
.\label{eq:slowchangingSigma}
\end{equation}
\end{lemma}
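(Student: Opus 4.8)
The plan is to treat the update rule in \cref{alg:ogd} as a literal instantiation of Online Gradient Descent on the convex feasible set $\cS$, with linear loss functions $f_t(\Sigma) = L_t \bullet \Sigma$, and then to invoke Zinkevich's standard regret bound. To apply that bound I need three ingredients: (i) the diameter of $\cS$ in Frobenius norm, (ii) a uniform bound on the gradient norms $\norm{\nabla f_t}_\frob = \norm{L_t}_\frob$, and (iii) the fact that $\Pi_\cS$ is a Euclidean (Frobenius-norm) projection onto a convex set, so that the usual projection-is-nonexpansive argument goes through. For (i), any $\Sigma \in \cS$ satisfies $\Sigma \succeq 0$ and $\trace(\Sigma) \le \trbound$, so $\norm{\Sigma}_\frob \le \trace(\Sigma) \le \trbound$, giving a diameter of at most $2\trbound$. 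For (ii), since $L_t = \Lr{\begin{smallmatrix} Q_t & 0 \\ 0 & R_t \end{smallmatrix}}$ with $Q_t, R_t \succeq 0$ and $\trace(Q_t), \trace(R_t) \le C$, we get $\norm{L_t}_\frob \le \trace(L_t) = \trace(Q_t) + \trace(R_t) \le 2C$. Convexity of $\cS$ holds because it is an intersection of the PSD cone, a trace half-space, and the affine constraint \eqref{eq:sdpconstraint}.

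With these constants in hand, Zinkevich's bound gives regret at most $\tfrac{D^2}{2\eta} + \tfrac{\eta}{2}\sum_t \norm{\nabla f_t}_\frob^2$ against any fixed comparator in $\cS$; substituting $D \le 2\trbound$ and $\norm{\nabla f_t}_\frob \le 2C$ yields $\tfrac{(2\trbound)^2}{2\eta} + \tfrac{\eta}{2}(2C)^2 T = \tfrac{2\trbound^2}{\eta} + 2C^2\eta T$, which is within the stated $\tfrac{4\trbound^2}{\eta} + 4C^2\eta T$ (the looser constants leave slack, so I would not worry about matching them exactly). The one subtlety is that $\Sigma^\st$ must itself lie in $\cS$ for it to be a valid comparator; this is exactly the content of the relaxation lemma proved earlier in \cref{sec:sdprelaxation}, applied to the strongly stable policy $K^\st$ (whose steady-state trace is controlled via \cref{lem:stab-trbound} and the choice $\trbound = 2\kappa^4\lambda^2/\gamma$), so I would cite that to justify feasibility of the comparator.

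For the slow-change bound \eqref{eq:slowchangingSigma}, the key observation is that the projection $\Pi_\cS$ is nonexpansive in Frobenius norm, since $\cS$ is convex and $\Pi_\cS$ is the Euclidean projection. Writing $\Sigma_{t+1} = \Pi_\cS[\Sigma_t - \eta L_t]$ and noting $\Sigma_t = \Pi_\cS[\Sigma_t]$ (because $\Sigma_t \in \cS$ already), nonexpansiveness gives
\begin{align*}
\norm{\Sigma_{t+1} - \Sigma_t}_\frob
= \norm{\Pi_\cS[\Sigma_t - \eta L_t] - \Pi_\cS[\Sigma_t]}_\frob
\le \norm{\eta L_t}_\frob
= \eta \norm{L_t}_\frob
\le 2C\eta,
\end{align*}
which is even sharper than the claimed $4C\eta$, so the bound follows immediately.

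The main obstacle here is not any single calculation — each step is routine once the problem is framed as OGD — but rather the bookkeeping of verifying that the abstract OGD hypotheses genuinely hold for this matrix-valued, linearly-constrained SDP: namely that $\cS$ is convex and that the algorithm's projection is the Euclidean one assumed by Zinkevich's analysis. I would therefore spend most of the effort making the constants $D \le 2\trbound$ and $G \le 2C$ explicit and confirming that $\Sigma^\st \in \cS$, after which both inequalities drop out of the standard template.
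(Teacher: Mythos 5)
Your proposal is correct and follows essentially the same route as the paper: the paper's proof likewise bounds the Frobenius diameter of $\cS$ by $2\trbound$ and the Lipschitz constant of $\Sigma \mapsto L_t \bullet \Sigma$ by $\norm{Q_t}_\frob + \norm{R_t}_\frob \le 2C$, then invokes Zinkevich's OGD regret bound, with the slow-change inequality \eqref{eq:slowchangingSigma} following from nonexpansiveness of the Euclidean projection exactly as you argue. Your added care about convexity of $\cS$, feasibility of the comparator $\Sigma^\st$ (via \cref{lem:stab-trbound} and the choice of $\trbound$), and the sharper $2C\eta$ constant are all consistent with, and merely more explicit than, the paper's terse argument.
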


\short{
\begin{proof}
The diameter of the feasible domain of the SDP (with respect to $\norm{\cdot}_\frob$) is upper bounded by $2\trbound$.
Also, the linear loss function $X
\mapsto L_t \bullet X$ is $\norm{Q_t}_\frob + \norm{R_t}_\frob \le 2C$ Lipschitz for all $t$ (again, with respect to $\norm{\cdot}_\frob$).
Plugging this into the regret bound of the Online Gradient Descent algorithm gives the lemma.
\end{proof}
}

We next bound the first term, now relying on \cref{eq:slowchangingSigma} and the fact
that the sequence of (randomized) policies chosen by \cref{alg:ogd} is strongly stable.
\begin{lemma} \label{lem:ogd-mixing}
If $\eta \le \sigma^2/4C\bar\kappa^2$, it holds that
\begin{align*}
\sum_{t=1}^T L_t &\bullet (\wh\Sigma_t - \Sigma_t)
\le
\frac{16C^2\bar\kappa^4}{\bar\gamma} \eta T + \frac{4C\bar\kappa^4}{\bar\gamma} \trbound
.
\end{align*}
\end{lemma}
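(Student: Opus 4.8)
The plan is to reduce the full-covariance discrepancy $\wh\Sigma_t - \Sigma_t$ to a \emph{state}-covariance discrepancy, to which the sequential mixing bound of \cref{lem:seqstab-mixing} applies directly. The essential observation is that both the observed $\wh\Sigma_t$ and the ideal $\Sigma_t$ are generated by the \emph{same} randomized policy $u \sim \calN(K_t x, V_t)$. Indeed, $\Sigma_t$ is exactly the steady-state covariance of this policy: substituting $K_t = (\Sigma_t)_{ux}(\Sigma_t)_{xx}^{-1}$ and $V_t = (\Sigma_t)_{uu} - K_t(\Sigma_t)_{xx}K_t\tr$ into the SDP equality constraint $\Sigma_{xx} = \begin{pmatrix} A & B \end{pmatrix}\Sigma\begin{pmatrix} A & B \end{pmatrix}\tr + W$ turns it into the Lyapunov equation $X_t = (A+BK_t)X_t(A+BK_t)\tr + BV_tB\tr + W$ for $X_t := (\Sigma_t)_{xx}$. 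Writing $\wh X_t := (\wh\Sigma_t)_{xx}$ for the observed state covariance, both matrices then share identical off-diagonal maps and identical injected noise $V_t$, so the $V_t$ term cancels in the difference:
\[
\wh\Sigma_t - \Sigma_t = \begin{pmatrix} I \\ K_t \end{pmatrix}(\wh X_t - X_t)\begin{pmatrix} I & K_t\tr \end{pmatrix},
\qquad\text{hence}\qquad
\norm{\wh\Sigma_t - \Sigma_t} \le (1+\norm{K_t}^2)\,\norm{\wh X_t - X_t}.
\]

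First I would establish that the extracted policy sequence is sequentially strongly stable. By \cref{eq:slowchangingSigma}, $\norm{\Sigma_{t+1}-\Sigma_t} \le \norm{\Sigma_{t+1}-\Sigma_t}_\frob \le 4C\eta$, and the hypothesis $\eta \le \sigma^2/4C\bar\kappa^2$ gives $\norm{\Sigma_{t+1}-\Sigma_t} \le \sigma^2/\bar\kappa^2$, which is precisely the slow-change precondition of \cref{lem:sdp-seqstab}. That lemma then certifies that $K_1,K_2,\ldots$ is $(\bar\kappa,\bar\gamma)$-strongly stable. Since the $xx$-block of a symmetric matrix has spectral norm at most that of the full matrix, $\norm{X_{t+1}-X_t} \le 4C\eta$ as well, so the steady-state covariances are slowly changing with parameter $4C\eta$.

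Next I would invoke \cref{lem:seqstab-mixing} for the randomized sequence $(K_t,V_t)$ with mixing parameters $(\bar\kappa,\bar\gamma)$ and slow-change $4C\eta$, obtaining $\norm{\wh X_{t+1}-X_{t+1}} \le \bar\kappa^2 e^{-\bar\gamma t}\norm{\wh X_1 - X_1} + 8C\eta\bar\kappa^2/\bar\gamma$. Combining with the cancellation identity and $\norm{K_t} \le \norm{K_t}_\frob \le \bar\kappa$ gives $\norm{\wh\Sigma_t - \Sigma_t} \le 2\bar\kappa^2\norm{\wh X_t - X_t}$ (using $\bar\kappa \ge 1$). Finally, since $L_t \succeq 0$ we have $L_t \bullet (\wh\Sigma_t-\Sigma_t) \le \trace(L_t)\,\norm{\wh\Sigma_t-\Sigma_t} \le 2C\,\norm{\wh\Sigma_t-\Sigma_t}$, and summing over $t$ splits into a transient geometric part, bounded via $\sum_t e^{-\bar\gamma t} \le 2/\bar\gamma$ together with $\norm{\wh X_1 - X_1} \le 2\trbound$ (both covariances have trace at most $\trbound$), yielding the $O(C\bar\kappa^4\trbound/\bar\gamma)$ term; and a steady part $\sum_t 16C^2\eta\bar\kappa^4/\bar\gamma$, yielding the $O(C^2\bar\kappa^4\eta T/\bar\gamma)$ term. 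Careful bookkeeping of these constants gives the stated bound.

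The main obstacle is the cancellation step: it is crucial that the observed and ideal covariances inject the \emph{same} control noise $V_t$, so that their difference collapses to the rank-$d$ state-covariance difference and \cref{lem:seqstab-mixing} becomes applicable. Without the randomization built into \cref{alg:ogd}, the $uu$-block would not cancel and one could not reduce to the scalar state-covariance mixing estimate. A secondary point demanding care is the consistent passage between the Frobenius-norm slow-change bound \cref{eq:slowchangingSigma} and the spectral-norm hypotheses of \cref{lem:sdp-seqstab,lem:seqstab-mixing}, which relies on the spectral norm being dominated by the Frobenius norm and on the block-norm inequality for $X_t$.
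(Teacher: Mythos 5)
Your proposal follows the paper's proof essentially step for step: you identify $\Sigma_t$ as the steady-state covariance of the randomized policy $u \sim \calN(K_t x, V_t)$ so that the $V_t$ contribution cancels in $\wh\Sigma_t - \Sigma_t$, you deduce sequential strong stability of $K_1,K_2,\ldots$ from \cref{eq:slowchangingSigma} via \cref{lem:sdp-seqstab} (correctly passing from Frobenius to spectral norm and to the $xx$-block), you apply \cref{lem:seqstab-mixing} with slow-change parameter $4C\eta$ to get $\norm{\wh X_t - X_t} \le \bar\kappa^2 e^{-\bar\gamma t}\norm{\wh X_1 - X_1} + 8C\eta\bar\kappa^2/\bar\gamma$, and you finish with $\norm{\wh X_1 - X_1} \le 2\trbound$ and a geometric sum — all exactly as in the paper. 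The one place you deviate is the final per-round estimate, and there your route leaks constants: bounding $L_t \bullet (\wh\Sigma_t - \Sigma_t) \le \trace(L_t)\,\norm{\wh\Sigma_t - \Sigma_t} \le 2C(1+\bar\kappa^2)\norm{\wh X_t - X_t}$ gives a per-round factor $4C\bar\kappa^2$, so carried through the mixing bound you obtain $32C^2\bar\kappa^4\eta T/\bar\gamma$ for the steady part, and your additional slack $\sum_t e^{-\bar\gamma t} \le 2/\bar\gamma$ inflates the transient to $16C\bar\kappa^4\trbound/\bar\gamma$ — i.e., the lemma only up to a factor of $2$ (resp.\ $4$), not with the stated constants $16$ and $4$. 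This is harmless for \cref{thm:ogd}, where only the order matters, but ``careful bookkeeping'' alone does not close it. The tighter step is already implicit in your own rank factorization: cycling the trace through $\wh\Sigma_t - \Sigma_t = \begin{psmallmatrix} I \\ K_t \end{psmallmatrix}(\wh X_t - X_t)\begin{psmallmatrix} I & K_t\tr \end{psmallmatrix}$ gives the exact identity the paper uses,
\begin{align*}
L_t \bullet (\wh\Sigma_t - \Sigma_t)
=
(Q_t + K_t\tr R_t K_t) \bullet (\wh X_t - X_t)
\le
\trace\lr{Q_t + K_t\tr R_t K_t} \norm{\wh X_t - X_t}
\le
2C\bar\kappa^2 \norm{\wh X_t - X_t}
,
\end{align*}
since $\trace(Q_t) + \trace(K_t\tr R_t K_t) \le C(1+\bar\kappa^2) \le 2C\bar\kappa^2$ (the paper's \cref{eq:QKRKbound}); together with $\sum_{t=1}^T e^{-\bar\gamma t} \le 1/\bar\gamma$ this recovers precisely $\frac{16C^2\bar\kappa^4}{\bar\gamma}\eta T + \frac{4C\bar\kappa^4}{\bar\gamma}\trbound$. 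In short: same route as the paper, and correct, except that taking the norm of the full block matrix before pairing with $L_t$ — rather than sandwiching $L_t$ first — costs a factor of two that your own identity removes.
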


\short{
\begin{proof}
Denote the policy used by the algorithm on round $t$ by $\pi_t(x) = K_t x + v_t$ with $v_t \sim \calN(0,V_t)$.
Notice that
\begin{align*}
\Sigma_t
=
\SigK(K_t) + \begin{pmatrix} 0 & 0 \\ 0 & V_t \end{pmatrix}
,
\end{align*}
whence $\SigK(\pi_t) = \Sigma_t$.
Next, denote $X_t = (\Sigma_t)_{xx}$, $U_t = (\Sigma_t)_{uu}$, and similarly $\wh X_t = (\wh\Sigma_t)_{xx}$, $\wh U_t = (\wh\Sigma_t)_{uu}$.
Observe that $\wh U_t = K_t \wh X_t K_t\tr + V_t$ and $U_t = K_t X_t K_t\tr + V_t$, thus
\begin{align*}
L_t \bullet (\wh\Sigma_t - \Sigma_t)
&=
Q_t \bullet (\wh X_t - X_t) + R_t \bullet (\wh U_t - U_t)
\\
&=
(Q_t + K_t\tr R_t K_t) \bullet (\wh X_t - X_t)
\\
&\le
\trace(Q_t + K_t\tr R_t K_t) \, \norm{\wh X_t - X_t}
.
\end{align*}
Further, for $K = \KSig(\Sigma)$ for any feasible $\Sigma \in \cS$ we have
\begin{align} \label{eq:QKRKbound}
\trace(Q_t + K\tr R_t K)
\le
C(1+\bar\kappa^2)
\le
2C\bar\kappa^2
,
\end{align}
as $\trace(Q_t),\trace(R_t) \le C$, and $\norm{K}^2 \le \norm{K}_\frob^2 \le \bar\kappa^2$ thanks to \cref{lem:sdp-sstable}.
Hence,
\begin{align} \label{eq:LShSt}
\begin{aligned}
\sum_{t=1}^T L_t \bullet (\wh\Sigma_t - \Sigma_t)
\le
2C\bar\kappa^2 \sum_{t=1}^T \norm{\wh X_t - X_t}
.
\end{aligned}
\end{align}
It is left to control the norms $\norm{\wh X_t - X_t}$.
To this end, recall \cref{lem:sdp-seqstab} which asserts that the sequence $K_1,K_2,\ldots$ is $(\bar\kappa,\bar\gamma)$-strongly stable, since we assume $\eta \le \sigma^2/\bar\kappa^2$.
Now, since $\norm{X_{t+1} - X_t} \le \norm{\Sigma_{t+1} - \Sigma_t} \le 2C\eta$,
applying \cref{lem:seqstab-mixing} to the sequence of randomized policies $\pi_1,\pi_2,\ldots$ now yields
\begin{align} \label{eq:whXtXt}
\norm{\wh{X}_t - X_t}
\le
\bar\kappa^2 e^{-\bar\gamma t} \norm{\wh{X}_1 - X_1} + \frac{4C\eta\bar\kappa^2}{\bar\gamma}
.
\end{align}
We can further bound the right-hand side using $\norm{\wh{X}_1 - X_1} \le 2\trbound$.
Combining \cref{eq:LShSt,eq:whXtXt} and using the fact that
$\sum_{t=1}^T e^{-\alpha t} \le \int_{0}^\infty e^{-\alpha t} dt = 1/\alpha$ for $\alpha>0$, we obtain the result.
\end{proof}
}

Finally, the last term in \cref{eq:regretdecomp1} can be bounded using the strong stability of $K^\st$.

\begin{lemma} \label{lem:Kst-mixing}
For any $(\kappa,\gamma)$-strongly stable $K^\st$,
\begin{align*}
\sum_{t=1}^T L_t \bullet (\Sigma^\st - \wh\Sigma^\st_t)
\le
2C \frac{\kappa^4 \trbound}{\gamma}
.
\end{align*}
\end{lemma}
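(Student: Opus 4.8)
The plan is to mirror the proof of \cref{lem:ogd-mixing}, with one simplification: since the comparator $K^\st$ is a \emph{single, fixed, deterministic} policy, I can invoke the fixed-policy mixing bound of \cref{lem:stab-mixing} directly and avoid the $\eta$-dependent slack term that appeared there for a changing sequence of policies.

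First I would exploit the shared block structure. Because $K^\st$ is deterministic, both $\Sigma^\st = \SigK(K^\st)$ and each observed covariance $\wh\Sigma^\st_t$ take the form \eqref{eq:SigK}, parameterized by the steady-state covariance $X^\st$ and the transient state covariance $\wh X^\st_t$ respectively, with the \emph{same} matrix $K^\st$ in the off-diagonal and lower-right blocks. Expanding $L_t\bullet(\Sigma^\st-\wh\Sigma^\st_t)$ with $L_t=\Lr{\begin{smallmatrix} Q_t & 0 \\ 0 & R_t\end{smallmatrix}}$, the cross and lower-right blocks collapse exactly as in \cref{lem:ogd-mixing}, giving
\begin{align*}
L_t\bullet(\Sigma^\st-\wh\Sigma^\st_t)
=
\bigl(Q_t + (K^\st)\tr R_t K^\st\bigr)\bullet(X^\st-\wh X^\st_t)
.
\end{align*}
Since $Q_t,R_t\succeq 0$, the weight matrix $M_t := Q_t + (K^\st)\tr R_t K^\st$ is PSD, so the inner product is bounded by $\trace(M_t)\,\norm{X^\st-\wh X^\st_t}$. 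For the trace I would use $\trace\bigl((K^\st)\tr R_t K^\st\bigr)=\trace\bigl(R_t K^\st (K^\st)\tr\bigr)\le \trace(R_t)\,\norm{K^\st}^2$ together with $\trace(Q_t),\trace(R_t)\le C$ and the strong-stability bound $\norm{K^\st}\le\kappa$, yielding $\trace(M_t)\le C(1+\kappa^2)\le 2C\kappa^2$, which is precisely the estimate \eqref{eq:QKRKbound} (here $\kappa\ge 1$ automatically, since $\kappa\ge\norm{H}\norm{H^{-1}}\ge 1$).

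Next I would sum the state-covariance deviations using fast mixing. The system is initialized at $x_0=0$, so $\wh X^\st_0 = 0$, and \cref{lem:stab-mixing} applied to the fixed $(\kappa,\gamma)$-strongly stable policy $K^\st$ gives $\norm{\wh X^\st_t-X^\st}\le \kappa^2 e^{-2\gamma t}\,\norm{X^\st}$. To control $\norm{X^\st}$ I would certify that $\SigK(K^\st)$ is feasible for the SDP: by \cref{lem:stab-trbound} and $\trace(W)\le\lambda^2$,
\begin{align*}
\trace(X^\st)+\trace(U^\st)
\le
\Bigl(\tfrac{\kappa^2}{\gamma}+\tfrac{\kappa^4}{\gamma}\Bigr)\lambda^2
\le
\tfrac{2\kappa^4\lambda^2}{\gamma}
=
\trbound
,
\end{align*}
so in particular $\norm{X^\st}\le\trace(X^\st)\le\trbound$. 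Combining the three steps and using $\sum_{t\ge 1}e^{-2\gamma t}\le\int_0^\infty e^{-2\gamma t}\,dt = 1/(2\gamma)$ gives
\begin{align*}
\sum_{t=1}^T L_t\bullet(\Sigma^\st-\wh\Sigma^\st_t)
\le
2C\kappa^2\cdot\kappa^2\trbound\sum_{t=1}^T e^{-2\gamma t}
\le
\frac{C\kappa^4\trbound}{\gamma}
\le
2C\frac{\kappa^4\trbound}{\gamma}
,
\end{align*}
as required (the final factor of $2$ is loose slack).

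The computation is essentially routine, so there is no deep obstacle. The two points that require care are (i) recognizing the block collapse, so that only the \emph{state}-covariance deviation $X^\st-\wh X^\st_t$ remains, weighted by the PSD matrix $M_t$; and (ii) certifying the initial deviation bound $\norm{X^\st}\le\trbound$ (equivalently, that $\SigK(K^\st)$ lies in the feasible set). Point (ii) is where the specific choice $\trbound = 2\kappa^4\lambda^2/\gamma$ is actually used, via \cref{lem:stab-trbound}.
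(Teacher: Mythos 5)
Your proof is correct and follows essentially the same route as the paper's: the same block collapse to $(Q_t + (K^\st)\tr R_t K^\st)\bullet(X^\st - \wh X^\st_t)$, the same $2C\kappa^2$ trace bound as in \cref{eq:QKRKbound}, the fixed-policy mixing bound of \cref{lem:stab-mixing} summed geometrically, and \cref{lem:stab-trbound} to certify the initial deviation via $\trbound$. The only (cosmetic) difference is that you exploit $x_0=0$ to bound the initial deviation by $\norm{X^\st}\le\trbound$, where the paper uses the generic bound $\norm{X_1 - X^\st}\le 2\trbound$, so your constant is a factor of $2$ tighter than needed.
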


\short{
\begin{proof}
Denote $X = \Sigma^\st_{xx}$ and $X_t = (\Sigma^\st_t)_{xx}$, and observe that
\begin{align*}
\Sigma^\st
=
\begin{pmatrix}
X & X (K^\star)\tr \\
K^\star X & K^\star X (K^\star)\tr
\end{pmatrix}
,~~
\wh\Sigma^\st_t
=
\begin{pmatrix}
X_t & X_t (K^\star)\tr \\
K^\star X_t & K^\star X_t (K^\star)\tr
\end{pmatrix}
.
\end{align*}
Thus $L_t \bullet (\Sigma^\st - \wh\Sigma^\st_t) = (Q_t + (K^\st)\tr R_t K^\st) \bullet (X - X_t)$.
Now, \cref{lem:stab-trbound} asserts that $\trace(\Sigma^\st) \le 2\kappa^4/\gamma = \trbound$, hence $\Sigma^\st \in \cS$ and by \cref{eq:QKRKbound}, it follows that
\begin{align*}
\sum_{t=1}^T L_t \bullet (\Sigma^\st - \wh\Sigma^\st_t)
\le
2C\kappa^2 \sum_{t=1}^T \norm{X - X_t}
~.
\end{align*}
Now, an application of \cref{lem:stab-mixing} gives
\begin{align*}
\sum_{t=1}^T \norm{X_t - X}
\le
\kappa^2 \norm{X_1 - X} \sum_{t=1}^T e^{-2\gamma t}
\le
\frac{\kappa^2}{2\gamma} \norm{X_1 - X}
~,
\end{align*}
where in the ultimate inequality we have used again the fact that $\sum_{t=1}^T e^{-\gamma t} \le 1/\gamma$.
Finally, we have
$
\norm{X_1 - X}
\le
\norm{\Sigma^\st - \wh\Sigma^\st_0}
\le
2\trbound
.
$
Combining the inequalities gives the result.
\end{proof}
}

The theorem now follows by plugging in the bounds we established in \cref{lem:ogd-mixing,lem:ogd-regret,lem:Kst-mixing} into \cref{eq:regretdecomp1} and setting our choices of $\eta$ and $\trbound$.
(See \cref{sec:proofs} for details.)
\end{proof}

\short{
Finally, we prove the main result of this section.

\begin{proof}[Proof of \cref{thm:ogd}]
Plugging in the bounds we established
in \cref{lem:ogd-mixing,lem:ogd-regret,lem:Kst-mixing}
into \cref{eq:regretdecomp1} and setting the values for $\bar\kappa=\sqrt{\trbound}/\sigma$ and $\bar\gamma=\sigma^2/2\trbound$ (and using $\trbound \ge \sigma^2$ to simplify), we obtain
\begin{align*}
\sum_{t=1}^T L_t \bullet (\wh\Sigma_t - \wh\Sigma^\st_t)
\le
\frac{20C^2\trbound^3}{\sigma^6} \eta T
+
\frac{4\trbound^2}{\eta}
+
\frac{8C\trbound^4}{\sigma^6} + \frac{2C \kappa^4 \nu}{\gamma}
\end{align*}
for any $\eta$ such that $\eta \le \sigma^2/\bar\kappa^2 = \sigma^4/\trbound$.
Thus, a choice of $\eta = \sigma^3/(2C\sqrt{\trbound T})$ (for which it can be verified that $\eta \le \sigma^4/\trbound$ for $T \ge \sqrt{\trbound}/(2C\sigma)$ gives the regret bound
\begin{align*}
\cost_T(A) - \cost_T(K^\st)
\le
18C \frac{\trbound^{2.5}}{\sigma^3} \sqrt{T}
+
\frac{8C\trbound^4}{\sigma^6} + \frac{2C \kappa^4 \nu}{\gamma}~.
\end{align*}
Finally, plugging in $\trbound = 2\kappa^4\lambda^2/\gamma$ gives the result.
\end{proof}
}

\section{Oracle-based Algorithm}

In this section we present a different approach that is based on
Follow the Lazy Leader of \cite{kalai2005efficient}. In contrast to
\cref{alg:ogd}, this approach does not require a lower bound on the
noise but rather relies on occasionally performing resets, and needs
a bound on the cost of this reset (this is established in
\cref{sec:reset_cost} under reasonable assumptions).
We assume access to an \textsc{Oracle} procedure that receives cost matrices $Q$, $R$, and parameter $\nu > 0$. It returns a control matrix $K$ that minimizes the steady-state cost, subject to $\trace(X) + \trace(KXK\tr) \le \nu$, where $X$ is the steady-state covariance matrix associated with $K$.\footnote{\textsc{Oracle} can be implemented by solving the SDP in \cref{sec:sdp}.}

\begin{algorithm}[ht]
\caption{Follow the Lazy Leader}
\begin{algorithmic}
\label{alg:fpl}
\STATE \textbf{Parameter}: $\eta,\nu > 0$, transition matrices $A$, $B$, distribution $\mu$.
\STATE Sample $Q^p_1 \in \R^{d \times d}$,$R^p_1 \in \R^{k \times k}$ from $d \mu$.
\STATE Set $\widehat{Q}_1 \gets 0$, $\widehat{R}_1 \gets 0$
\FOR{$t = 1,2,\ldots$}
\STATE Receive state $x_t$.
\STATE Compute $K_t \gets \textsc{Oracle}(\wh Q_t + Q^p_t,\wh R_t + Q^p_t, \nu)$.
\STATE Predict $u_t \gets K_t x_t$.
\STATE Receive $Q_t$,$R_t$.
\STATE Update $\widehat{Q}_{t+1} = \widehat{Q}_t + Q_t$, $\widehat{R}_{t+1} = \widehat{R}_t + R_t$.
\STATE With probability $\min\left\{1, \frac{d\mu(Q^p_t - Q_t, R^p_t - R_t)}{d\mu (Q^p_t, R^p_t)} \right\}$, set
\STATE \qquad $Q^p_{t+1} \gets Q^p_t - Q_t$.
\STATE \qquad $R^p_{t+1} \gets R^p_t - R_t$,
\STATE else, perform reset and set
\STATE \qquad $Q^p_{t+1} \gets -Q^p_t$.
\STATE \qquad $R^p_{t+1} \gets -R^p_t$.
\ENDFOR
\end{algorithmic}
\end{algorithm}

\cref{alg:fpl} is similar to Follow the Perturbed Leader, and in fact behaves the same in expectation. At every round $t$, \textsc{Oracle} is called using the sum of previously seen $Q$s and $R$s plus an additional random noise, $Q^p_t$ and $R^p_t$. \textsc{Oracle} returns a matrix $K_t$ that is used to choose $u_t = K_t x_t$.

For the measure $d \mu$, we use the joint measure over symmetric matrices $Q$ and $R$, whose upper triangle is sampled coordinate-wise i.i.d from Laplace($1/\eta$).
The "lazyness" of the algorithm stems from $Q^p_1,\ldots,Q^p_T$ and $R^p_1,\ldots,R^p_T$ being sampled dependently over time such that the cumulative perturbed loss only changes with small probability between rounds. Consequently, the expected number of switches of $K$ as well as the expected number of resets are only $O(\eta T)$.

The reset step in the algorithm, informally, drives the system to zero at some cost. Here we assume that $B$ has full column-rank in which case we can reset in one step. In \cref{sec:reset_cost}, we show how resetting can be done over a sequence of steps under much weaker assumptions.

\begin{observation}
Suppose that $B$ has full column-rank. Resetting the system in round $t$ can be done by setting $u_t = -B^\dagger A x_t$, such that
at the next round $x_{t+1} = w_{t+1}$.
Moreover, the expected cost of the reset is at most~$C \nu (1+\|B^\dagger A\|^2)$.
\end{observation}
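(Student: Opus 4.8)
The plan is to verify the two assertions of the observation separately: the deterministic reset of the state, and the bound on the expected one-step cost.

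First I would substitute the proposed control $u_t = -B^\dagger A x_t$ into the dynamics $x_{t+1} = A x_t + B u_t + w_t$, which yields $x_{t+1} = (I - B B^\dagger) A x_t + w_t$. The crucial point here is the identity $B B^\dagger A = A$: since $B$ has full (column) rank and can steer the state in a single step, $B B^\dagger$ acts as the identity on $\mathrm{range}(A)$, so the deterministic drift $A x_t$ is annihilated and $x_{t+1}$ reduces to the fresh noise term, matching the claim $x_{t+1} = w_{t+1}$. This is the only place the rank assumption on $B$ enters, and I would isolate it as the step to check carefully.

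Next I would compute the expected cost of the reset round, $\E[x_t\tr Q_t x_t + u_t\tr R_t u_t]$. Writing $\wh X_t = \E[x_t x_t\tr]$ and using $u_t u_t\tr = (B^\dagger A)\, x_t x_t\tr\, (B^\dagger A)\tr$ together with linearity of expectation and the identity $z\tr M z = M \bullet z z\tr$ (for symmetric $M$), this expected cost equals
\[
Q_t \bullet \wh X_t + \big((B^\dagger A)\tr R_t (B^\dagger A)\big) \bullet \wh X_t .
\]
For each term I would invoke the elementary inequality $\trace(MN) \le \norm{M}\,\trace(N)$, valid for $M,N \succeq 0$ with $\norm{M}$ the spectral norm, together with $\norm{M} \le \trace(M)$ for PSD $M$. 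Applied to the first term with $\trace(Q_t)\le C$ and $\trace(\wh X_t) \le \nu$, this gives $Q_t \bullet \wh X_t \le C\nu$. Applied to the second, $\norm{(B^\dagger A)\tr R_t (B^\dagger A)} \le \norm{B^\dagger A}^2 \norm{R_t} \le \norm{B^\dagger A}^2 C$, so that term is at most $C \norm{B^\dagger A}^2 \nu$. Summing the two gives the claimed bound $C\nu(1 + \norm{B^\dagger A}^2)$.

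The only steps that are not pure bookkeeping are (i) the cancellation identity $B B^\dagger A = A$, which is precisely what the phrase ``full column-rank $B$ allows a one-step reset'' encodes, and (ii) the second-moment bound $\trace(\wh X_t) \le \nu$ at the reset round. The latter I would justify from the covariance constraint $\trace(X) + \trace(KXK\tr) \le \nu$ enforced by \textsc{Oracle} (which in particular gives $\trace(X) \le \nu$), combined with the fast mixing of the strongly stable policies applied prior to the reset, so that the observed second moment $\wh X_t$ is controlled by the steady-state covariance. The trace and norm inequalities themselves are routine, so I expect (ii)---pinning down $\trace(\wh X_t) \le \nu$ cleanly---to be the main technical point.
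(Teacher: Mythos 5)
The paper states this observation without proof, and your calculation is the natural (surely intended) one: substitute the control into the dynamics, cancel the drift, and bound the one-round cost term by term via $\trace(MN)\le\norm{M}\trace(N)$ for PSD matrices. Two of your steps need repair, however. First, your justification of the cancellation $BB^\dagger A=A$ is circular and, under the literal hypothesis, can fail: full \emph{column} rank of $B\in\R^{d\times k}$ gives the left inverse $B^\dagger B=I_k$ (injectivity of $B$), whereas $BB^\dagger$ is the orthogonal projection onto $\mathrm{range}(B)$, so $BB^\dagger A=A$ requires $\mathrm{range}(A)\subseteq\mathrm{range}(B)$, i.e.\ full \emph{row} rank of $B$ (or square invertible $B$, as in the warmup \cref{lem:trace_invb}). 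With $k<d$, e.g.\ $B=(1,0)\tr$ and $A=I$, the proposed $u_t$ does not annihilate the drift. This imprecision is arguably inherited from the paper's own wording, but your stated reason---``$B$ \ldots can steer the state in a single step''---is precisely the claim being proved, so it cannot serve as its own justification; the hypothesis you actually need is surjectivity of $B$, under which $BB^\dagger=I_d$ and the reset is immediate.

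Second, your cost bound requires $\trace(\wh X_t)\le\nu$ \emph{exactly}, and the ``fast mixing'' justification you sketch only gives $\wh X_t$ within an additive error of order $\kappa^2 e^{-\gamma t}$ of the steady-state covariance, which would contaminate the constant in the bound. The clean argument is the monotone-domination lemma the paper proves for exactly this purpose (\cref{lemma:ssafterreset}): since the previous reset left the system at $x_{t_0}=w_{t_0}$, the covariance under the fixed policy $K$ played since then satisfies $\wh X_s\preceq X$ for every subsequent round $s$, where $X$ is the steady-state covariance of $K$, and \textsc{Oracle}'s constraint gives $\trace(X)\le\trace(X)+\trace(KXK\tr)\le\nu$; hence $\trace(\wh X_t)\le\nu$ with no error term. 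With these two substitutions---full row rank (or invertibility) in place of your range argument, and domination in place of mixing---your decomposition $Q_t\bullet\wh X_t+\bigl((B^\dagger A)\tr R_t(B^\dagger A)\bigr)\bullet\wh X_t\le C\nu+C\norm{B^\dagger A}^2\nu$ goes through and the observation is proved. (A cosmetic nit: with the paper's convention $x_{t+1}=Ax_t+Bu_t+w_t$ the post-reset state is $w_t$ rather than $w_{t+1}$; this indexing discrepancy is in the paper's statement and is immaterial.)
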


For \cref{alg:fpl} we will show the following regret bound.

\begin{theorem}
\label{thm:fplanalysis}
Assume that $\trace(W) \le \lambda^2$, and suppose that the cost of a reset is at most $C_r$. Then for $\trbound = 2\kappa^4\lambda^2/\gamma$, the expected regret of \cref{alg:fpl} against any $(\kappa,\gamma)$-strongly-stable control matrix $K^\st$ satisfies
\begin{align*}
\E \lrbra{ \cost_T(A) - \cost_T(K^\st) }
=
O \Lr{ (d+k)^{3/4} \sqrt{C \nu(C_r + C \nu) T} }
.
\end{align*}
\end{theorem}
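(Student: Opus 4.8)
The plan is to reuse the three-way decomposition from the analysis of \cref{alg:ogd}. Write $L_t = \lr{\begin{smallmatrix} Q_t & 0 \\ 0 & R_t \end{smallmatrix}}$ and let $\cost_t(K) = L_t \bullet \SigK(K)$ be the steady-state cost of a fixed policy $K$ under the round-$t$ loss. Let $K_1,K_2,\ldots$ be the (random) policies produced by \cref{alg:fpl}, let $\wh\Sigma_t$ be the actual covariance of $(x_t,u_t)$ observed by the algorithm, and let $\wh\Sigma^\st_t$ be the covariance obtained by playing $K^\st$ throughout. I would then write the expected regret as
\begin{align*}
\E\lrbra{\cost_T(A) - \cost_T(K^\st)}
&=
\E\sum_{t=1}^T \lr{L_t \bullet \wh\Sigma_t - \cost_t(K_t)}
+
\E\sum_{t=1}^T \lr{\cost_t(K_t) - \cost_t(K^\st)}
\\
&\quad
+
\sum_{t=1}^T \lr{\cost_t(K^\st) - L_t \bullet \wh\Sigma^\st_t}
.
\end{align*}
The last sum is exactly the quantity bounded in \cref{lem:Kst-mixing}: since $K^\st$ is $(\kappa,\gamma)$-strongly stable, \cref{lem:stab-trbound} gives $\trace(\SigK(K^\st)) \le 2\kappa^4\lambda^2/\gamma = \nu$, so $\SigK(K^\st)$ is a feasible covariance, and the strong-stability mixing estimate bounds this term by $O(C\kappa^4\nu/\gamma)$, a constant in $T$.

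The first sum is where the reset mechanism earns its keep, and I claim it is controlled \emph{entirely} by the total reset cost---with no appeal to fast mixing, which is exactly why \cref{alg:fpl} needs no lower bound on the noise. The structural fact is that \cref{alg:fpl} changes its policy only on a reset round: on a non-reset round the perturbed cumulative loss handed to \textsc{Oracle} is unchanged ($\wh Q_{t+1}+Q^p_{t+1} = \wh Q_t + Q^p_t$), so $K_{t+1} = K_t$. The rounds therefore partition into epochs of constant policy, each epoch beginning right after a reset with state covariance $W$ (or $0$ at the very start). If $K$ is the policy of an epoch and $X = \SigK(K)_{xx}$ its steady-state covariance, then $X = (A+BK)X(A+BK)\tr + W \succeq W$, and since $Y \mapsto (A+BK)Y(A+BK)\tr + W$ is monotone in the PSD order, an easy induction gives $\wh X_t \preceq X$ for every round $t$ of the epoch. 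As $u_t = K_t x_t$, we have $L_t \bullet \wh\Sigma_t = (Q_t + K_t\tr R_t K_t) \bullet \wh X_t$, and because $Q_t + K_t\tr R_t K_t \succeq 0$ the domination $\wh X_t \preceq X$ yields $L_t \bullet \wh\Sigma_t \le \cost_t(K_t)$ on every non-reset round. Hence the only positive contribution comes from reset rounds, each costing at most $C_r$, so the first sum is at most $C_r \cdot N_{\mathrm{reset}}$, where $N_{\mathrm{reset}}$ denotes the expected number of resets.

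For the middle sum I would invoke the Follow-the-Perturbed-Leader machinery, using that \cref{alg:fpl} behaves in expectation like FPL with a fresh $\mathrm{Laplace}(1/\eta)$ perturbation each round \citep{kalai2005efficient}. Since \textsc{Oracle} returns the exact minimizer $K_t$ of the perturbed cumulative steady-state cost over policies with $\trace(\SigK(\cdot))\le\nu$ (implemented through the SDP of \cref{sec:sdp} via \cref{thm:SigKSig}), the standard be-the-leader argument applied to the decisions $\SigK(K_t)$ with a single perturbation $L^p = \lr{\begin{smallmatrix} Q^p & 0 \\ 0 & R^p\end{smallmatrix}}$ bounds the middle sum by a \emph{perturbation term} $\E\lrbra{L^p \bullet (\SigK(K^\st) - \SigK(K_1))}$ plus a \emph{stability term} $\E\sum_t L_t \bullet (\SigK(K_t) - \SigK(K_{t+1}))$. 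Both $\SigK(K_t)$ and $\SigK(K^\st)$ are PSD with trace at most $\nu$, so each difference has nuclear norm at most $2\nu$; with $\norm{L_t}\le C$ the stability term is at most $2C\nu$ per switch, and a switch can occur only on a reset, giving a total of at most $2C\nu\cdot N_{\mathrm{reset}}$. For the perturbation term, $L^p$ is block-diagonal with symmetric blocks of i.i.d.\ $\mathrm{Laplace}(1/\eta)$ entries, so a random-matrix operator-norm bound (Latała's inequality handles the Laplace tails) gives $\E\norm{L^p} = O(\sqrt{d+k}/\eta)$, whence the term is $O(\sqrt{d+k}\,\nu/\eta)$. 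Finally I would bound the reset count: comparing the product-Laplace densities and using the triangle inequality, $\Pr[\text{reset at }t] \le 1 - e^{-\eta(\norm{Q_t}_1 + \norm{R_t}_1)} \le \eta(\norm{Q_t}_1 + \norm{R_t}_1)$ with $\norm{\cdot}_1$ the entrywise $\ell_1$ norm, and for PSD $Q_t$ with $\trace(Q_t)\le C$ one has $\norm{Q_t}_1 \le (d+k)\trace(Q_t)\le (d+k)C$; summing gives $N_{\mathrm{reset}} = O(\eta(d+k)CT)$.

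Collecting everything, the $T$-dependent part of the regret is $O(\sqrt{d+k}\,\nu/\eta) + N_{\mathrm{reset}}\cdot O(C_r + C\nu) = O(\sqrt{d+k}\,\nu/\eta) + O(\eta(d+k)C(C_r+C\nu)T)$, plus the constant $O(C\kappa^4\nu/\gamma)$. Balancing the two $T$-dependent terms via $\eta = \Theta\big(\sqrt{\sqrt{d+k}\,\nu / ((d+k)C(C_r+C\nu)T)}\big)$ gives $O\big((d+k)^{3/4}\sqrt{C\nu(C_r+C\nu)T}\big)$; the exponent $3/4$ is the geometric mean of the $\sqrt{d+k}$ from the spectral norm of $L^p$ and the extra $(d+k)$ from the entrywise-$\ell_1$ (trace) bound governing the reset frequency. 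I expect the middle sum to be the main obstacle: making the FPL/FLL reduction rigorous over the non-convex family of steady-state covariances, justifying that the lazy leader matches the perturbed leader in expectation while switching only on resets, and establishing $\E\norm{L^p}=O(\sqrt{d+k}/\eta)$ for the heavy-tailed Laplace perturbation---this last estimate is precisely what pins down the $(d+k)^{3/4}$ rate. By contrast, the reset/PSD-domination argument for the first sum is clean and is the conceptual reason resets can replace a noise lower bound.
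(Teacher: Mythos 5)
Your proposal is correct, and its skeleton coincides with the paper's: the same three-term decomposition as in \cref{eq:regretdecomp2}, the same bound $O(C\kappa^4\nu/\gamma)$ on the comparator term via \cref{lem:Kst-mixing} (with \cref{lem:stab-trbound} certifying $\trace(\SigK(K^\st))\le\nu$), and the same treatment of the first term --- your PSD-domination induction within epochs is exactly \cref{lemma:ssafterreset}, combined with the observation that \cref{alg:fpl} switches policies only at resets, so that term costs at most $C_r$ per reset. Where you genuinely depart is the idealized FPL term: the paper imports it wholesale from \citet{kalai2005efficient} as \cref{prop:kv}, which asserts the bound $8\eta C^2\nu\sqrt{d+k}\,T + 16\nu(d+k)/\eta$ together with a per-round switch probability of $\eta C\sqrt{d+k}$, whereas you rederive it from scratch via be-the-leader over the decisions $\SigK(K_t)$, trace-norm duality for the per-switch stability cost, Latała's bound $\E\norm{L^p}=O(\sqrt{d+k}/\eta)$ for the perturbation term, and an entrywise-$\ell_1$ total-variation estimate giving switch probability $O(\eta(d+k)C)$. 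Notably, your dimension factors land on the opposite terms from the paper's --- you put $(d+k)$ on the $\eta T$ side and $\sqrt{d+k}$ on the $1/\eta$ side, while \cref{prop:kv} does the reverse --- so your balancing $\eta$ differs from the paper's, but since the product is $(d+k)^{3/2}$ either way, both routes yield the identical $(d+k)^{3/4}\sqrt{C\nu(C_r+C\nu)T}$ rate. Your version buys a self-contained proof of the component the paper leaves uncited-and-unproved, at the cost of the rigor burden you yourself flag; that burden is lighter than you fear, since FTPL needs only linear losses and an exact oracle (supplied here by the SDP via \cref{thm:SigKSig}), with no convexity of the decision set $\{\SigK(K)\}$ required. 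One shared omission: like the paper's main-text proof, you defer the fact that the perturbed cumulative losses fed to \textsc{Oracle} need not be PSD; the paper's remark patches this with an initial pretend loss of order $(d+k)/\eta$, contributing $O(C\nu(d+k)/\eta)$ to the regret, and your argument needs the same patch (it is absorbed by your choice of $\eta$).
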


\begin{remark}
\textsc{Oracle} requires that the matrices $Q$ and $R$ are PSD. Nonetheless, we invoke \textsc{Oracle} using the perturbed cumulative loss $(\hat Q_t + Q^p_t, \hat R_t + R^p_t)$ that might not be PSD, as the perturbations $Q^p_t$ and $R^p_t$ themselves are typically not PSD.
To solve this issue, we first notice that with high-probability
\citep{vershynin2010introduction}, we have $\|Q^p_t\| \le O(d/\eta)$
and $\|R^p_t\| \le O(k/\eta)$. Therefore, to guarantee that the
perturbed cumulative loss is PSD, we can add an initial large
pretend loss by setting $\wh Q_1 = (d/\eta)I$ and $\wh R_1 =
(k/\eta)I$. This would contribute an $O(C\nu (d+k)/\eta)$ term to
the regret which ensures that, by our choice of $\eta$,
\cref{thm:fplanalysis} still~holds.
\end{remark}


\begin{proof}[Proof of \cref{thm:fplanalysis}]
Let $\wh X_1,\ldots,\wh X_T$ be the actual observed covariance matrices induced by \cref{alg:fpl}.
Also, let $\wh X^\st_1,\ldots,\wh X^\st_T$ be the covariances induced by using a fixed control matrix $K^\st$ throughout. Similarly, define $X_1,\ldots,X_T$ to be the covariance matrices of the steady-state distributions induced by $K_1,\ldots,K_T$ respectively, and $X^\st$ that of $K^\st$.

As in the analysis of OGD, the expected regret can be decomposed as follows:
\begin{align} \label{eq:regretdecomp2}
&\sum_{t=1}^T (Q_t + K_t\tr R_t K_t) \bullet \wh X_t - (Q_t + (K^\st)\tr R_t K^\st) \bullet \wh X_t^\st \notag \\
&\qquad =
\sum_{t=1}^T (Q_t + K_t\tr R_t K_t) \bullet (\wh X_t - X_t)
\notag\\
&\qquad +
\sum_{t=1}^T (Q_t + K_t\tr R_t K_t) \bullet X_t - (Q_t + (K^\st)\tr R_t K^\st) \bullet X^\st
\notag\\
&\qquad+
\sum_{t=1}^T (Q_t + (K^\st)\tr R_t K^\st) \bullet (X^\st - \wh X^\st_t)
.
\end{align}

The second term in \cref{eq:regretdecomp2}, the regret in the ``idealized setting'', is bounded due to \citet{kalai2005efficient}. It requires the additional observation that, by \cref{lem:stab-trbound}, we have $\trace(X^\st) + \trace(K^\st X^\st (K^\st)\tr) \le \nu$.

\begin{lemma}
\label{prop:kv}
Assume $\trace(Q_t), \trace(R_t) \le C$ for all $t$.
Then,
\begin{align*}
&\E \left[\sum_{t=1}^T \trace(X_t (Q_t + K_t\tr R_t K_t)) - \trace(X^\star (Q_t + (K^\star)\tr R_t K^\star)) \right]\\
&\qquad \le 8 \eta C^2 \nu \sqrt{d+k} T + \frac{16 \nu (d+k)}{\eta}~.
\end{align*}
Moreover, the probability that the algorithm changes $K_t$ and performs a reset  at any step $t$ is at most $\eta C \sqrt{d+k}$.
\end{lemma}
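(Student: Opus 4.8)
The plan is to recognize the ``idealized'' regret (the second term in \cref{eq:regretdecomp2}) as the regret of Follow the Perturbed Leader for an \emph{online linear optimization} problem over the SDP feasible set $\cS$, and then to invoke the standard analysis of \citet{kalai2005efficient}. The key structural observation is that the steady-state cost $\trace(X(Q+K\tr R K))$ of the policy $K=\KSig(\Sigma)$ equals the linear functional $\Lr{\begin{smallmatrix} Q & 0 \\ 0 & R\end{smallmatrix}} \bullet \Sigma$ evaluated at the corresponding feasible $\Sigma = \SigK(K)$. By \cref{thm:SigKSig}, minimizing this steady-state cost subject to the trace constraint is exactly linear minimization of $\Lr{\begin{smallmatrix} \wh Q_t + Q^p_t & 0 \\ 0 & \wh R_t + R^p_t \end{smallmatrix}} \bullet \Sigma$ over $\Sigma \in \cS$, so \textsc{Oracle} returns the policy extracted from the perturbed leader $\Sigma_t = \argmin_{\Sigma \in \cS} (\wh L_t + P_t)\bullet \Sigma$, where $\wh L_t = \sum_{s<t} L_s$, $L_s = \Lr{\begin{smallmatrix} Q_s & 0 \\ 0 & R_s\end{smallmatrix}}$, and $P_t$ is the block-diagonal Laplace perturbation. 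The comparator $K^\st$ likewise corresponds to $\Sigma^\st = \SigK(K^\st) \in \cS$, whose feasibility follows from \cref{lem:stab-trbound} (which gives $\trace(\Sigma^\st) \le \nu$). Thus the idealized problem is online linear optimization over $\cS$ with losses $L_t \bullet \Sigma$, and the lazy coupling of \cref{alg:fpl} makes its expected behaviour identical to FPL with a single fixed perturbation.

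Following \citet{kalai2005efficient}, I would decompose the FPL regret into a \emph{be-the-leader} term and a \emph{stability} term. Treating $P$ as a round-$0$ loss, the be-the-leader lemma gives $\sum_{t} L_t \bullet \Sigma^+_{t+1} \le \sum_t L_t \bullet \Sigma^\st + P \bullet (\Sigma^\st - \Sigma_1)$, where $\Sigma^+_{t+1} = \argmin_{\Sigma \in \cS}(\wh L_{t+1} + P)\bullet\Sigma$ is the leader \emph{after} round $t$. Rearranging, the idealized regret is at most $\E\lrbra{P \bullet (\Sigma^\st - \Sigma_1)} + \sum_t \E\lrbra{L_t \bullet (\Sigma_t - \Sigma^+_{t+1})}$. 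The perturbation term I would bound by $\E\norm{P}_\frob \cdot \mathrm{diam}_\frob(\cS)$; since each feasible $\Sigma$ is PSD with $\trace(\Sigma)\le\nu$ we have $\mathrm{diam}_\frob(\cS) \le 2\nu$, while the second moment of the block-diagonal Laplace matrix gives $\E\norm{P}_\frob = O((d+k)/\eta)$, yielding the $O(\nu(d+k)/\eta)$ term.

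For the stability term I would note that $L_t \bullet (\Sigma_t - \Sigma^+_{t+1})$ vanishes unless the leader switches when $L_t$ is added to the cumulative loss, and that whenever it does not vanish it is at most $2C\nu$ in magnitude, because any feasible $\Sigma$ satisfies $0 \le L_t \bullet \Sigma = Q_t \bullet X + R_t \bullet U \le C\trace(X) + C\trace(U) \le C\nu$. It therefore remains to bound the per-round switch probability, which by the FLL acceptance rule equals the total-variation distance $\tv{\mu}{\mu(\cdot - (Q_t,R_t))}$ between the perturbation law and its shift by the current loss. Bounding this distance by $\eta$ times the appropriate norm of $(Q_t,R_t)$ and converting the trace bounds $\trace(Q_t),\trace(R_t)\le C$ into that norm produces the factor $\eta C\sqrt{d+k}$; this same estimate is precisely the ``moreover'' claim on the probability of a switch, and hence of a reset. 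Summing $2C\nu \cdot \eta C\sqrt{d+k}$ over the $T$ rounds gives the $O(\eta C^2\nu\sqrt{d+k}\,T)$ term, and combining with the perturbation term yields the stated bound.

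I expect the main obstacle to be the dimension bookkeeping in the switch probability: one must carefully relate the $\ell_1$-type geometry governing the total-variation distance of the product-Laplace perturbation to the trace (nuclear) bound on the PSD loss matrices $Q_t, R_t$, so as to land on $\sqrt{d+k}$ rather than a cruder $(d+k)$ factor, and to pin down the explicit constants. A secondary technical point is verifying that the lazy coupling of \cref{alg:fpl} — including the reset branch — indeed reproduces the fixed-perturbation follow-the-leader distribution on each round, which is exactly what justifies applying the be-the-leader lemma with a single perturbation $P$.
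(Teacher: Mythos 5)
Your proposal is correct and is essentially the paper's own route: the paper gives no proof of \cref{prop:kv} at all, but simply invokes the Follow-the-Lazy-Leader analysis of \citet{kalai2005efficient}, adding only the observation (via \cref{lem:stab-trbound} and $\nu = 2\kappa^4\lambda^2/\gamma$) that the comparator $\Sigma^\st = \SigK(K^\st)$ is feasible for the oracle's constraint---and your reconstruction (linearization over $\cS$ via \cref{thm:SigKSig}, be-the-leader plus stability decomposition, $0 \le L_t \bullet \Sigma \le C\trace(\Sigma) \le C\nu$ for the per-switch cost, $\E\norm{P}_\frob = O((d+k)/\eta)$ against the Frobenius diameter $2\nu$ for the perturbation term, and the lazy coupling preserving the fixed-perturbation FPL marginals) is exactly that argument. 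The one step you flag as the main obstacle, landing on $\sqrt{d+k}$ in the switch probability, is resolvable and your worry in fact dissolves: the $\ell_1$ coupling bound $\tv{\mu}{\mu(\cdot-(Q_t,R_t))} \le \eta\norm{(Q_t,R_t)}_1$ would indeed only give $O(\eta C(d+k))$ for trace-$C$ PSD losses, but applying Pinsker with the exact Laplace KL, namely $\mathrm{KL} = \sum_i \lr{\eta\abs{v_i} + e^{-\eta\abs{v_i}} - 1} \le \eta^2\norm{v}_2^2/2$, bounds the per-round switch probability by $\eta\norm{v}_2/2 \le \eta\lr{\norm{Q_t}_\frob + \norm{R_t}_\frob}/2 \le \eta C$, which is dimension-free and hence within the claimed $\eta C\sqrt{d+k}$.
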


The third term of \cref{eq:regretdecomp2} is bounded by $2C \kappa^4 \nu / \gamma$ due to \cref{lem:Kst-mixing}. It remains to bound the first term in the equation. To that end, we will next show that after the system is reset, the cost of the learner on round $t$ is at most that of the steady-state induced by $K_t$.

\begin{lemma}
\label{lemma:ssafterreset}
Suppose the learner starts playing $K$ at state $x_{t_0} = w_{t_0}$. Then the expected cost of the learner is always less then the steady-state cost induced by $K$.
\end{lemma}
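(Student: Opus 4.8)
The plan is to track the state covariance after the reset and show it is dominated, in the positive-semidefinite (PSD) order, by the steady-state covariance of $K$; the desired cost bound then follows at once because each round's cost matrix is PSD. Write $M = A+BK$ for the closed-loop matrix and recall that, since $K$ is (strongly, hence weakly) stable, $\rho(M)<1$. After the reset at round $t_0$ we have $x_{t_0}=w_{t_0}$, so the initial covariance is $\wh X_{t_0}=W$. As the learner plays the deterministic policy $u=Kx$, the covariance obeys the recursion $\wh X_{t+1}=M\wh X_t M\tr + W$.

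First I would unfold this recursion to the explicit form $\wh X_{t_0+n}=\sum_{j=0}^{n} M^j W (M\tr)^j$ for every $n\ge 0$, verified by a one-line induction. On the other hand, the steady-state covariance $X$ of $K$ is the unique solution of $X=MXM\tr+W$, and the same unfolding (with $\rho(M)<1$ guaranteeing convergence) identifies it with the full series $X=\sum_{j=0}^{\infty} M^j W (M\tr)^j$. Since $W\succeq 0$, each summand $M^j W (M\tr)^j$ is PSD, so the partial sums are nondecreasing in the PSD order; in particular $\wh X_{t_0+n}\preceq X$ for all $n\ge 0$, i.e.\ the observed covariance after a reset never exceeds the steady state.

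Finally, the expected instantaneous cost on round $t=t_0+n$ equals $(Q_t+K\tr R_t K)\bullet \wh X_t$, and because $Q_t+K\tr R_t K\succeq 0$ together with $\wh X_t\preceq X$ gives $\trace\!\big((Q_t+K\tr R_t K)(X-\wh X_t)\big)\ge 0$, we conclude $(Q_t+K\tr R_t K)\bullet \wh X_t \le (Q_t+K\tr R_t K)\bullet X$, which is exactly the steady-state cost of $K$ under the current cost matrices. This holds for every round, establishing the claim. There is no serious obstacle here: the argument is a monotone-domination argument in the PSD cone. The only points warranting care are justifying that the reset indeed yields $x_{t_0}=w_{t_0}$ (so $\wh X_{t_0}=W$, not a larger matrix), and that the series defining $X$ converges and coincides with the Lyapunov solution—both immediate from the reset construction and the stability of $K$.
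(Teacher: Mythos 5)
Your proposal is correct and follows essentially the same route as the paper: both arguments reduce the claim to the covariance domination $\wh X_t \preceq X$ starting from $\wh X_{t_0}=W$ after the reset, and then conclude from positive semidefiniteness of $Q_t + K\tr R_t K$, since $\trace\bigl((Q_t+K\tr R_t K)(X-\wh X_t)\bigr)\ge 0$. The only difference is cosmetic: where the paper proves $\wh X_t \preceq X$ by a one-step induction through the Lyapunov equation $X=(A+BK)X(A+BK)\tr+W$, you unroll the recursion into $\wh X_{t_0+n}=\sum_{j=0}^{n}(A+BK)^j W\bigl((A+BK)\tr\bigr)^j$ and compare these monotone partial sums to the full Neumann series for $X$ (valid since $\rho(A+BK)<1$) --- an equivalent computation.
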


\begin{proof}
Let $x_{t_0} = w_{t_0}$, and recall that $x_{t+1} = (A + BK)x_t + w_t$. Let $\wh X_t$ be the covariance of $x_t$, and $X$ be the covariance of $x$ at the steady-state induced by $K$. Then, $X_{t_0} = (A + BK_{t_0})X_{t_0}(A+BK_{t_0})\tr + W$.

We now show that $\wh X_t \preceq X$ for all $t \ge t_0$ by induction. Indeed, for the base case $\wh X_{t_0} = W \preceq (A+BK_{t_0})X_{t_0}(A+BK_{t_0})\tr + W = X$. Now assume that $\wh X_t \preceq X_{t_0}$, that implies
\begin{align*}
\wh X_{t+1} &= (A + BK_{t_0})\wh X_t (A+BK_{t_0})\tr + W\\
&\preceq (A + BK_{t_0})X_{t_0} (A+BK_{t_0})\tr + W = X~.
\end{align*}

Since $Q_t + K_t\tr R_t K_t$ is PSD, the expected cost of the learner at time $t$ is $(Q_t + K_t\tr R_t K_t) \bullet X_t \le (Q_t + K_t\tr R_t K_t) \bullet X$.
\end{proof}

Combining \cref{lemma:ssafterreset,prop:kv} obtains the theorem (see \cref{sec:proofs} for more details).
\end{proof}

\short{
\begin{proof}[Proof of \cref{thm:fplanalysis}]
Since after a reset, the system starts at state $0$, the cost of the learner is always less than the steady-state cost. The expected number of switches is at most $\eta C \sqrt{d+k} T$, and whenever a switch occurs we pay an additional cost of $C_r$ for performing a reset. Combining that with our bounds on the three terms in \cref{eq:regretdecomp2}, we get
\begin{align*}
\cost_T(A) - \cost_T(K^\st) &\le \eta C C_r \sqrt{d+k} T + 8 \eta C^2 \nu \sqrt{d+k} T \\
&\quad+ \frac{16 \nu \log(d+k)}{\eta} + \frac{2C \kappa^4 \nu}{\gamma}~.
\end{align*}
Setting $\eta = \sqrt{16 \nu \log(d+k) / TC\sqrt{d+k} (C_r + 8C \nu)}$, and plugging in $\nu = 2 \kappa^4 \lambda^2/\gamma$ completes the theorem.
\end{proof}
}

%
%

\section{Experiments}
\label{sec:experiments}

\begin{figure}
\centering
\includegraphics[trim={0.5cm 3.1cm 2cm 1.5cm},clip,width=0.65\linewidth]{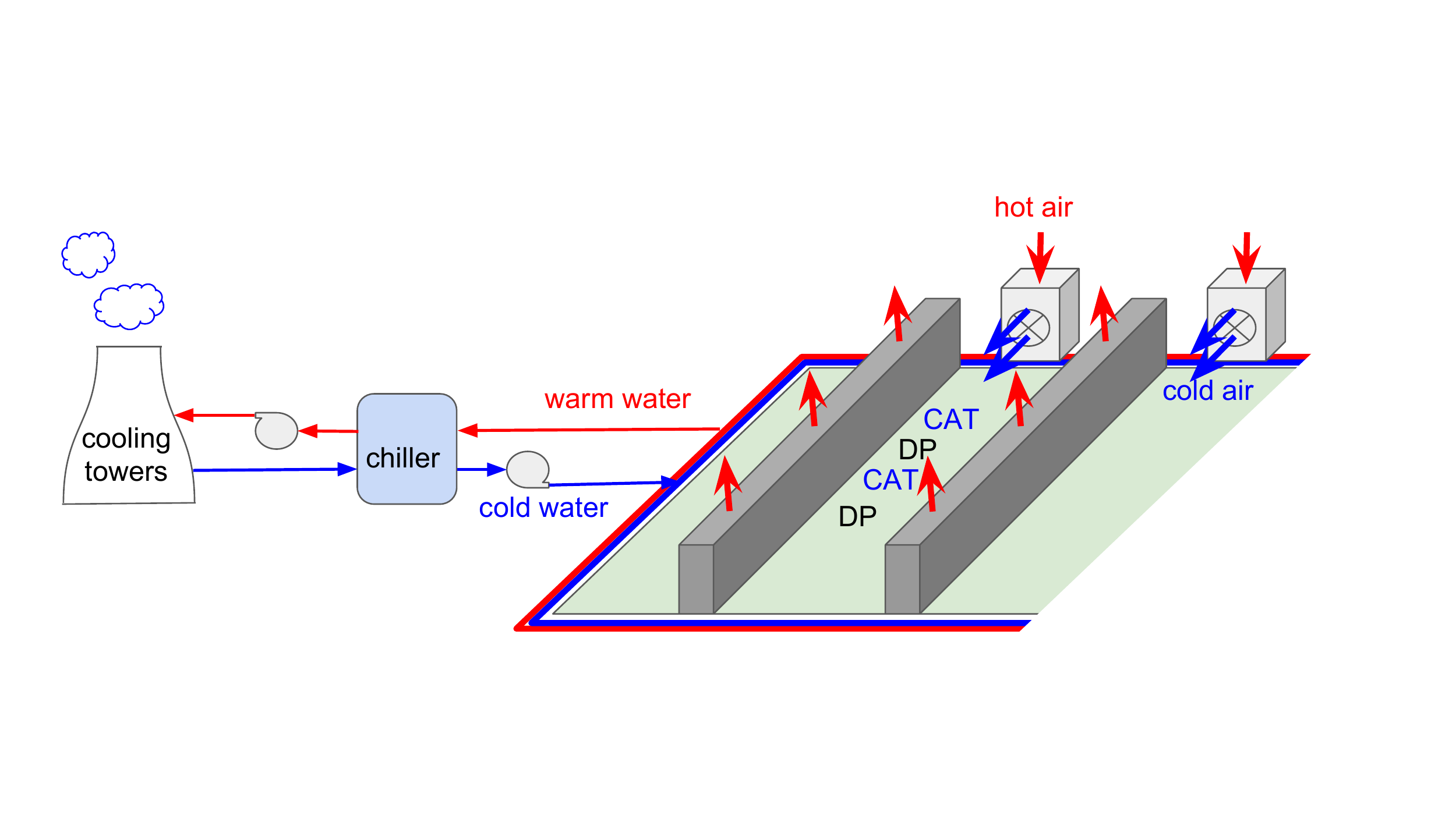}
\caption{Data center cooling loop; see \cref{sec:experiments}.}
\label{fig:dclayout}
\end{figure}

We demonstrate our approach on the problem of regulating conditions inside a data center (DC) server floor in the presence of time-varying power costs. We learn system dynamics from a real data center, but vary the costs and run algorithms in simulation.

\cref{fig:dclayout} shows a schematic of the cooling loop of a typical data center.  Water is cooled to sub-ambient temperatures in the chiller and evaporative cooling towers, and then sent to multiple air handling units (AHUs) on the server floor.
Server racks are arranged into rows with alternating hot and cold aisles, such that all hot air exhausts face the hot aisle.
The AHUs circulate air through the building; hot air is cooled through air-water heat exchange and blown into the cold aisle, and the resulting warm water is sent back to the chiller and cooling towers.
The primary goal of floor-level cooling is to control the cold aisle temperatures (CATs) and differential air pressures (DPs).
The control vector includes the blower speed and water valve command for each of $n=30$ AHUs, set every 30s. The state vector includes $2n$ temperature measurements and $n$ pressure measurements, as well as sensor measurements and controls for the preceding time step.  System noise is in part due to variability in server loads and the temperature of the chilled water.

We learn a linear approximation $(A, B)$ of the dynamics in the operating range of interest on 4h of exploratory data with controls following a random walk. We estimate the system noise covariance $W$ as the empirical covariance of training data residuals.  For the purpose of the experiment, we amplify the noise by a factor of 5.
We set the diagonal coefficients of $Q_t$ corresponding to the most recent (normalized) sensor measurements to 1 and remaining coefficients to 0, and keep $Q_t=Q$ constant throughout the experiment. We set diagonal coefficients of $R_t$ corresponding to water usage (valve command) to 1 throughout, and all coefficients corresponding to power usage (fan speed) to $r_t$. We generate $r_t$ by (a) i.i.d sampling a uniform distribution on $[0.1, 1]$, and (b) using a random walk restricted to $[0.1, 1]$ taking steps of size $0.1, -0.1, 0$ with probabilities $0.1, -0.1, 0.8$ respectively.

\begin{figure}[t]
\centering
\includegraphics[trim={1cm 0.5cm 1cm 1cm},clip,width=0.65\linewidth]{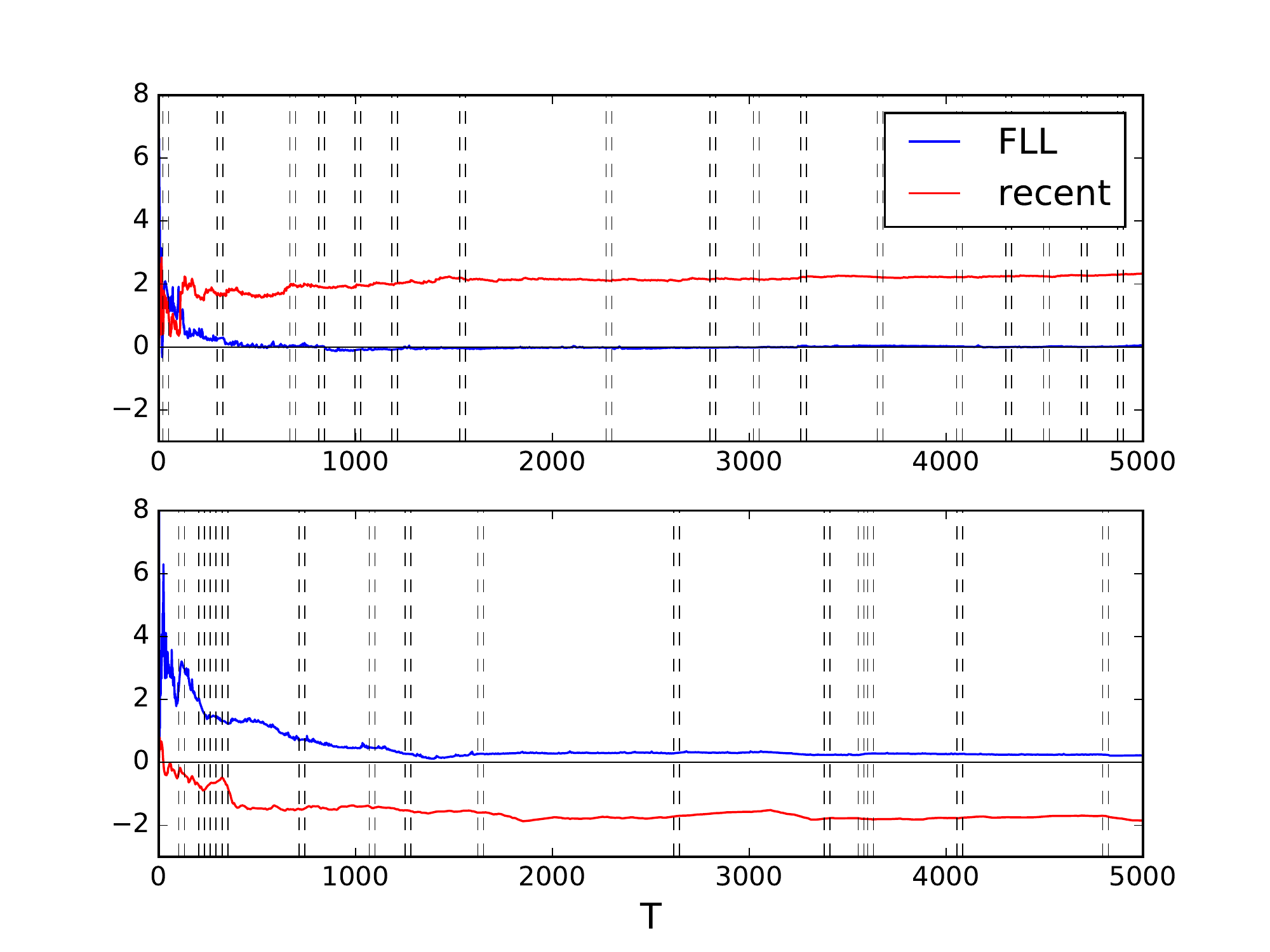}
\caption{Normalized regret $R_T/T$ for FLL and Recent strategies, with power costs generated uniformly (top) and by random walk (bottom). Resets occur at time steps indicated by dashed lines. }
\label{fig:dccool}
\end{figure}

We run the FLL algorithm on this problem with the following modifications: we set $Q_1^p=Q$, and $R_1^p = I_k$, an upper bound on $R_t$. Rather than executing hard resets to 0,
we perform a soft reset by running a policy $K_{reset}$ for $n$ steps.  Here $K_{reset}$ is similar to the next FLL policy, but based on the 1.1 times the corresponding state cost $Q$.

We compare the cost of FLL to that of a fixed linear controller that is based on the average of the $R_t$ matrices, and to a \emph{Recent} strategy which selects one of ten controllers corresponding to power costs in $r \in \{0.1, 0.2, ..., 1\}$ based on the most recently observed $R_t$. The  normalized regret  $\frac{1}{T}R_T$ of to the two strategies is shown in
\cref{fig:dccool}.  FLL performance quickly approaches that of the fixed linear policy in both cases, and is better than the \emph{Recent} strategy on uniform random costs. The \emph{Recent} strategy has an advantage in the case where costs vary slowly, and empirical performance of FLL could likely be improved in this case by forgetting the old costs.

\nocite{*}
\bibliographystyle{icml2018}
\bibliography{paper}

\appendix

\section{Technical Proofs}
\label{sec:proofs}

\subsection{Proof of \cref{lem:stab-trbound}}

\begin{proof}
Strong stability ensures that $\rho(A+BK) < 1$, and so
\begin{align*}
X
=
\sum_{t=0}^\infty (A+BK)^t W ((A+BK)^t)\tr
.
\end{align*}
Write $A+BK = HLH^{-1}$ such that $\norm{L} \le 1-\gamma$ and $\norm{H}\norm{H^{-1}} \le \kappa$.
Then
\begin{align*}
\norm{(A+BK)^t}
\le
\norm{H}\norm{H^{-1}} \norm{L}^t
\le
\kappa (1-\gamma)^t
.
\end{align*}
As a result,
\begin{align*}
\trace(X)
&\le
\sum_{t=0}^\infty \norm{(A+BK)^t}^2 \trace(W)
\\
&\le
\kappa^2 \sum_{t=0}^\infty (1-\gamma)^{2t} \trace(W)
\le
\frac{\kappa^2}{\gamma} \trace(W)
.
\end{align*}
Further, notice that $U = KXK\tr$, whence
\begin{align*}
\trace(U)
=
\trace(KXK\tr)
\le
\trace(X) \norm{K}^2
\le
\frac{\kappa^4}{\gamma} \trace(W)
.
&\qedhere
\end{align*}
\end{proof}

\subsection{Proof of \cref{lem:seqstab-mixing}}

\begin{proof}
Denote $C_t = \cov[u_t | x_t]$ (where $u_t$ is the action taken on round $t$).
By definition, for all $t$ we have
\begin{alignat*}{2}
&\wh{X}_{t+1}&
&=
(A+BK_t) \wh{X}_t (A+BK_t)\tr + B C_t B\tr + W
;
\\
&X_t&
&=
(A+BK_t) X_t (A+BK_t)\tr + B C_t B\tr + W
.
\end{alignat*}
Subtracting the equations, substituting $A+BK_t = H_t L_t H_t^{-1}$ and rearranging yields
\begin{align*}
H_t^{-1} (\wh{X}_{t+1} - X_{t}) (H_t^{-1})\tr
=
L_t H_t^{-1} (\wh{X}_t - X_{t}) (H_t^{-1})\tr L_t\tr
.
\end{align*}
Denote $\Delta_t = H_t^{-1} (\wh{X}_t - X_{t}) (H_t^{-1})\tr$ for all $t$.
Then the above can be rewritten as
\begin{align} \label{eq:DeltaLH}
\begin{aligned}
\Delta_{t+1}
&=
(H_{t+1}^{-1} H_t L_t) \Delta_t (H_{t+1}^{-1} H_t L_t)\tr
\\
&\quad
+ (H_{t+1}^{-1}) (X_t-X_{t+1}) (H_{t+1}^{-1})\tr
.
\end{aligned}
\end{align}
Let us first analyze the simpler case where all policies $K_t$ converge to the same steady-state covariance $X$.
Then $X_t = X$ for all~$t$, thus \cref{eq:DeltaLH} reads
\begin{align*}
\Delta_{t+1}
&=
(H_{t+1}^{-1} H_t L_t) \Delta_t (H_{t+1}^{-1} H_t L_t)\tr
.
\end{align*}
Taking norms, we obtain
\begin{align*}
\norm{\Delta_{t+1}}
&\le
\norm{L_t}^2 \norm{H_{t+1}^{-1} H_t}^2 \norm{\Delta_t}
\\
&\le
(1-\gamma)^2 (1+\thalf\gamma)^2 \norm{\Delta_t}
\\
&\le
(1-\thalf\gamma)^2 \norm{\Delta_t}
,
\end{align*}
whence $\norm{\Delta_{t+1}} \le e^{-\gamma t} \norm{\Delta_1}$.
Recalling $\wh{X}_t - X = H_t \Delta_t H_t\tr$, we obtain
\begin{align*}
\norm{\wh{X}_{t+1} - X}
&\le
e^{-\gamma t} \norm{\Delta_1} \norm{H_{t+1}}^2
\\
&\le
e^{-\gamma t} \norm{H_1^{-1}}^2 \norm{H_{t+1}}^2 \norm{\wh{X}_1 - X}
\\
&\le
\kappa^2 e^{-\gamma t} \norm{\wh{X}_1 - X}
.
\end{align*}
For the general case, taking norms in \cref{eq:DeltaLH} results in
\begin{align*}
\norm{\Delta_{t+1}}
&\le
(1-\thalf\gamma)^2 \norm{\Delta_t} + \norm{H_{t+1}^{-1}}^2 \norm{X_{t}-X_{t+1}}
\\
&\le
(1-\thalf\gamma)^2 \norm{\Delta_t} + \frac{\eta}{\alpha^2}
,
\end{align*}
and unfolding the recursion we obtain
\begin{align*}
\norm{\Delta_{t+1}}
&\le
(1-\thalf\gamma)^{2t} \norm{\Delta_1} + \frac{\eta}{\alpha^2} \sum_{s=0}^\infty (1-\thalf\gamma)^{2s}
\\
&\le
e^{-\gamma t} \norm{\Delta_1} + \frac{2\eta}{\alpha^2\gamma}
.
\end{align*}
Using $\wh{X}_t - X = H_t \Delta_t H_t\tr$ again and simple algebra give the result.\end{proof}

\subsection{Proof of \cref{lem:sdp-seqstab}}

\begin{proof}
Denote $X_t = (\Sigma_t)_{xx}$, and recall that, for all $t$,
\begin{align*}
\Sigma_t
\succeq
\Sigma_t'
=
\begin{pmatrix} X_t & K_t X_t \\ X_t K_t\tr & K_t X_t K_t\tr \end{pmatrix}
\end{align*}
(cf.~\cref{eq:SigSigpr}).
Now, since $\Sigma_t$ is feasible for the SDP we have
\begin{align*}
X_t
&=
\begin{pmatrix} A & B \end{pmatrix} \Sigma_t \begin{pmatrix} A & B \end{pmatrix}\tr + W
\\
&\succeq
\begin{pmatrix} A & B \end{pmatrix} \Sigma_t' \begin{pmatrix} A & B \end{pmatrix}\tr + W
\\
&\succeq
(A + BK_t) X_t (A+BK_t)\tr + \sigma^2 I
.
\end{align*}
Proceeding as in the proof of \cref{lem:sdp-sstable}, one can show that $\norm{K_t}_\frob \le \kappa$, and that the matrix $L_t = X_t^{-1/2}(A+BK_t) X_t^{1/2}$ satisfies $\norm{L_t} \le 1-1/2\kappa^2$ with $\norm{X_t^{1/2}} \le \sqrt{\trbound}$ and $\norm{X_t^{-1/2}} \le 1/\sigma$.
To establish sequential strong stability it thus suffices to show that $\norm{X_{t+1}^{-1/2}X_t^{1/2}} \le 1+1/4\kappa^2$ for all $t$.
To this end, observe that $\norm{X_{t+1}-X_t} \le \eta$,%
\footnote{\label{fn:blocknorm}We use the fact that for a symmetric
$
M = \Lr{\begin{smallmatrix} A & B \\ B\tr & D \end{smallmatrix}}
$
one has
$
\norm{M}
\ge
\max_{\norm{x} \le 1} \abs{(\begin{smallmatrix} x \\ 0 \end{smallmatrix})\tr M (\begin{smallmatrix} x \\ 0 \end{smallmatrix})}
=
\max_{\norm{x} \le 1} \abs{x\tr A x}
=
\norm{A}
.
$
}
and
\begin{align*}
\norm{&X_{t+1}^{-1/2}X_t^{1/2}}^2
\\
&=
\norm{X_{t+1}^{-1/2}X_t^{}X_{t+1}^{-1/2}}
\\
&\le
\norm{X_{t+1}^{-1/2}X_{t+1}^{}X_{t+1}^{-1/2}} + \norm{X_{t+1}^{-1/2}(X_{t+1}^{}-X_t^{})X_{t+1}^{-1/2}}
\\
&\le
1 + \norm{X_{t+1}^{-1/2}}^2 \norm{X_{t+1}^{}-X_t^{}}
\\
&\le
1 + \frac{\eta}{\sigma^2}
.
\end{align*}
Hence, if $\eta \le \sigma^2/\kappa^2$ then $\norm{X_{t+1}^{-1/2}X_t^{1/2}} \le \sqrt{1+1/\kappa^2} \le 1+1/2\kappa^2$ as required.
\end{proof}

\subsection{Proof of \cref{lem:ogd-regret}}

\begin{proof}
The diameter of the feasible domain of the SDP (with respect to $\norm{\cdot}_\frob$) is upper bounded by $2\trbound$.
Also, the linear loss function $X
\mapsto L_t \bullet X$ is $\norm{Q_t}_\frob + \norm{R_t}_\frob \le 2C$ Lipschitz for all $t$ (again, with respect to $\norm{\cdot}_\frob$).
Plugging this into the regret bound of the Online Gradient Descent algorithm gives the lemma.
\end{proof}

\subsection{Proof of \cref{lem:ogd-mixing}}

\begin{proof}
Denote the policy used by the algorithm on round $t$ by $\pi_t(x) = K_t x + v_t$ with $v_t \sim \calN(0,V_t)$.
Notice that
\begin{align*}
\Sigma_t
=
\SigK(K_t) + \begin{pmatrix} 0 & 0 \\ 0 & V_t \end{pmatrix}
,
\end{align*}
whence $\SigK(\pi_t) = \Sigma_t$.
Next, denote $X_t = (\Sigma_t)_{xx}$, $U_t = (\Sigma_t)_{uu}$, and similarly $\wh X_t = (\wh\Sigma_t)_{xx}$, $\wh U_t = (\wh\Sigma_t)_{uu}$.
Observe that $\wh U_t = K_t \wh X_t K_t\tr + V_t$ and $U_t = K_t X_t K_t\tr + V_t$, thus
\begin{align*}
L_t \bullet (\wh\Sigma_t - \Sigma_t)
&=
Q_t \bullet (\wh X_t - X_t) + R_t \bullet (\wh U_t - U_t)
\\
&=
(Q_t + K_t\tr R_t K_t) \bullet (\wh X_t - X_t)
\\
&\le
\trace(Q_t + K_t\tr R_t K_t) \, \norm{\wh X_t - X_t}
.
\end{align*}
Further, for $K = \KSig(\Sigma)$ for any feasible $\Sigma \in \cS$ we have
\begin{align} \label{eq:QKRKbound}
\trace(Q_t + K\tr R_t K)
\le
C(1+\bar\kappa^2)
\le
2C\bar\kappa^2
,
\end{align}
as $\trace(Q_t),\trace(R_t) \le C$, and $\norm{K}^2 \le \norm{K}_\frob^2 \le \bar\kappa^2$ thanks to \cref{lem:sdp-sstable}.
Hence,
\begin{align} \label{eq:LShSt}
\begin{aligned}
\sum_{t=1}^T L_t \bullet (\wh\Sigma_t - \Sigma_t)
\le
2C\bar\kappa^2 \sum_{t=1}^T \norm{\wh X_t - X_t}
.
\end{aligned}
\end{align}
It is left to control the norms $\norm{\wh X_t - X_t}$.
To this end, recall \cref{lem:sdp-seqstab} which asserts that the sequence $K_1,K_2,\ldots$ is $(\bar\kappa,\bar\gamma)$-strongly stable, since we assume $\eta \le \sigma^2/\bar\kappa^2$.
Now, since $\norm{X_{t+1} - X_t} \le \norm{\Sigma_{t+1} - \Sigma_t} \le 4C\eta$,
applying \cref{lem:seqstab-mixing} to the sequence of randomized policies $\pi_1,\pi_2,\ldots$ now yields
\begin{align} \label{eq:whXtXt}
\norm{\wh{X}_t - X_t}
\le
\bar\kappa^2 e^{-\bar\gamma t} \norm{\wh{X}_1 - X_1} + \frac{8C\eta\bar\kappa^2}{\bar\gamma}
.
\end{align}
We can further bound the right-hand side using $\norm{\wh{X}_1 - X_1} \le 2\trbound$.
Combining \cref{eq:LShSt,eq:whXtXt} and using the fact that
$\sum_{t=1}^T e^{-\alpha t} \le \int_{0}^\infty e^{-\alpha t} dt = 1/\alpha$ for $\alpha>0$, we obtain the result.\qed

\subsection{Proof of \cref{lem:Kst-mixing}}
Denote $X = \Sigma^\st_{xx}$ and $X_t = (\Sigma^\st_t)_{xx}$, and observe that
\begin{align*}
\Sigma^\st
=
\begin{pmatrix}
X & X (K^\star)\tr \\
K^\star X & K^\star X (K^\star)\tr
\end{pmatrix}
,~~
\wh\Sigma^\st_t
=
\begin{pmatrix}
X_t & X_t (K^\star)\tr \\
K^\star X_t & K^\star X_t (K^\star)\tr
\end{pmatrix}
.
\end{align*}
Thus $L_t \bullet (\Sigma^\st - \wh\Sigma^\st_t) = (Q_t + (K^\st)\tr R_t K^\st) \bullet (X - X_t)$.
Now, \cref{lem:stab-trbound} asserts that $\trace(\Sigma^\st) \le 2\kappa^4/\gamma = \trbound$, hence $\Sigma^\st \in \cS$ and by \cref{eq:QKRKbound}, it follows that
\begin{align*}
\sum_{t=1}^T L_t \bullet (\Sigma^\st - \wh\Sigma^\st_t)
\le
2C\kappa^2 \sum_{t=1}^T \norm{X - X_t}
~.
\end{align*}
Now, an application of \cref{lem:stab-mixing} gives
\begin{align*}
\sum_{t=1}^T \norm{X_t - X}
\le
\kappa^2 \norm{X_1 - X} \sum_{t=1}^T e^{-2\gamma t}
\le
\frac{\kappa^2}{2\gamma} \norm{X_1 - X}
~,
\end{align*}
where in the ultimate inequality we have used again the fact that $\sum_{t=1}^T e^{-\gamma t} \le 1/\gamma$.
Finally, we have
$
\norm{X_1 - X}
\le
\norm{\Sigma^\st - \wh\Sigma^\st_0}
\le
2\trbound
.
$
Combining the inequalities gives the result.
\end{proof}


\subsection{Proof of \cref{thm:ogd}}

\begin{proof}
Plugging in the bounds we established
in \cref{lem:ogd-mixing,lem:ogd-regret,lem:Kst-mixing}
into \cref{eq:regretdecomp1} and setting the values for $\bar\kappa=\sqrt{\trbound}/\sigma$ and $\bar\gamma=\sigma^2/2\trbound$ (and using $\trbound \ge \sigma^2$ to simplify), we obtain
\begin{align*}
\sum_{t=1}^T L_t \bullet (\wh\Sigma_t - \wh\Sigma^\st_t)
\le
\frac{40C^2\trbound^3}{\sigma^6} \eta T
+
\frac{4\trbound^2}{\eta}
+
\frac{8C\trbound^4}{\sigma^6} + \frac{2C \kappa^4 \nu}{\gamma}
\end{align*}
for any $\eta$ such that $\eta \le \sigma^2/4C\bar\kappa^2 = \sigma^4/4C\trbound$.
Thus, a choice of $\eta = \sigma^3/(2C\sqrt{\trbound T})$ (for which it can be verified that $\eta \le \sigma^4/4C\trbound$ for $T \ge 4\nu/\sigma^2$) gives the regret bound
\begin{align*}
\cost_T(A) - \cost_T(K^\st)
\le
20C \frac{\trbound^{2.5}}{\sigma^3} \sqrt{T}
+
\frac{8C\trbound^4}{\sigma^6} + \frac{2C \kappa^4 \nu}{\gamma}~.
\end{align*}
Finally, plugging in $\trbound = 2\kappa^4\lambda^2/\gamma$ gives the result.
\end{proof}

\subsection{Proof of \cref{thm:fplanalysis}}

\begin{proof}
Since after a reset, the system starts at state $0$, the cost of the learner is always less than the steady-state cost. The expected number of switches is at most $\eta C \sqrt{d+k} T$, and whenever a switch occurs we pay an additional cost of $C_r$ for performing a reset. Combining that with our bounds on the three terms in \cref{eq:regretdecomp2}, we get
\begin{align*}
\cost_T(A) - \cost_T(K^\st) &\le \eta C C_r \sqrt{d+k} T + 8 \eta C^2 \nu \sqrt{d+k} T \\
&\quad+ \frac{16 \nu (d+k)}{\eta} + \frac{2C \kappa^4 \nu}{\gamma}~.
\end{align*}
Setting $\eta = \sqrt{16 \nu \sqrt{d+k} / TC (C_r + 8C \nu)}$, and plugging in $\nu = 2 \kappa^4 \lambda^2/\gamma$ completes the theorem.
\end{proof}

\section{On Strong Stability}
\label{sec:generalB}

\newcommand{\conf}{k}
\newcommand{\cons}{\kappa}
\newcommand{\con}{(\conf, \cons)}

In this section, we give additional justification for the stability assumption. The following lemma shows that for any stable controller $K$, there are finite bounds on its strong stability parameters.

\begin{lemma}
\label{lem:lyapunov}
Suppose that for a linear system defined by $A, B$, a policy $K$ is stable. Then there are parameters $\kappa, \gamma>0$ for which it is $(\kappa, \gamma)$-strongly stable.
\end{lemma}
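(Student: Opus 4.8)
The plan is to exploit the freedom in the similarity transform $H$ so as to realize a matrix norm under which $M := A+BK$ is a strict contraction, and then read off the strong-stability parameters. Concretely, I would first reduce the statement to producing a single positive definite matrix $P \succ 0$ satisfying the discrete Lyapunov inequality $M\tr P M \preceq c^2 P$ for some constant $c \in (0,1)$; everything else follows by a change of variables.

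To obtain such a $P$, I would pick any $c$ with $\rho(M) < c < 1$, which is possible precisely because $K$ is stable, so $\rho(M) < 1$. Then $\rho(M/c) < 1$, so the Neumann-type series $P = \sum_{t=0}^\infty (M\tr/c)^t (M/c)^t$ converges to a symmetric matrix with $P \succeq I \succ 0$ (the $t=0$ term is $I$ and all other terms are PSD). By construction $P$ solves $P - c^{-2} M\tr P M = I$, which rearranges to $M\tr P M = c^2(P-I) \preceq c^2 P$, i.e.\ the desired Lyapunov inequality.

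Given $P$, set $H = P^{-1/2}$ (the real symmetric positive definite square root) and $L = H^{-1} M H = P^{1/2} M P^{-1/2}$, so that $M = H L H^{-1}$. The Lyapunov inequality gives $L\tr L = P^{-1/2} M\tr P M P^{-1/2} \preceq c^2 I$, hence $\norm{L} \le c$, and taking $\gamma = 1-c \in (0,1)$ yields $\norm{L} \le 1-\gamma$. For the conditioning bound, $\norm{H}\norm{H^{-1}} = \norm{P^{-1/2}}\norm{P^{1/2}} = \sqrt{\cond(P)}$, which is finite since $P \succ 0$. Finally, choosing $\kappa = \max\{\norm{K}, \sqrt{\cond(P)}\}$ simultaneously covers $\norm{K}\le\kappa$ (trivially finite for the fixed matrix $K$) and $\norm{H}\norm{H^{-1}} \le \kappa$, exhibiting $K$ as $(\kappa,\gamma)$-strongly stable.

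The only real content is the construction of the contracting norm, i.e.\ solving the Lyapunov equation, and I expect this to be essentially the sole obstacle; it is handled cleanly by convergence of the series, which rests solely on $\rho(M) < 1$. A useful sanity check is that everything remains real: $P$, $P^{1/2}$, $H$, and $L$ are all real because $M$ is real and $P$ is real symmetric positive definite. An alternative route via the Jordan form, conjugating each block by $\diag(1,\delta,\delta^2,\dots)$ to shrink the superdiagonal entries below $\gamma$, also works, but it risks producing complex $H$ for complex eigenvalues and forces one to argue in the real Jordan form to stay real, so I prefer the Lyapunov construction.
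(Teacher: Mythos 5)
Your proposal is correct and is essentially the paper's own argument: both construct a positive definite $P$ solving a discrete Lyapunov inequality for $M=A+BK$ via the convergent series $P=\sum_{t\ge 0}\lr{(M/c)^t}\tr (M/c)^t$, take $H=P^{-1/2}$ and $L=P^{1/2}MP^{-1/2}$, and read off $\kappa$ from the condition number of $P^{1/2}$. If anything, your normalization is the more careful one: the paper rescales by $(1-\gamma)^{-1}$ with $1-\gamma=\rho(A+BK)$ exactly, so the rescaled matrix has spectral radius $1$ and the defining series need not converge there (e.g.\ for a defective unimodular eigenvalue), whereas your strict intermediate choice $c\in(\rho(M),1)$ sidesteps this, and you also make explicit the bound $\norm{K}\le\kappa$ that the paper leaves implicit.
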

\begin{proof}
A theorem of Lyapunov says that a matrix $M$ is stable, i.e., its spectral radius is smaller than $1$ if and only if there exists a positive definite matrix $P$ such that
\begin{align*}
M\tr P M &\preceq P.
\end{align*}
Indeed $P= \sum_{i=0}^{\infty} (M^i)\tr (M^i)$ satisfies this condition.
Let $\rho(A+BK) = 1-\gamma$. Applying the above result to $(1-\gamma)^{-1}(A+BK)$, we conclude that for some positive definite matrix $P$,
\begin{align*}
(A+BK)\tr P (A+BK) \preceq (1- \gamma)^2 P.
\end{align*}
Pre- and post-multiplying by $P^{-\frac 1 2}$ and rearranging,
\begin{align*}
(P^{\frac 1 2} (A + BK) P^{-\frac 1 2})\tr(P^{\frac 1 2} (A + BK) P^{-\frac 1 2}) \preceq (1-\gamma)^2 I .
\end{align*}
Letting $Q=P^{\frac 1 2} (A + BK) P^{-\frac 1 2}$, we conclude that
$A+BK = P^{-\frac 1 2} Q P^{\frac 1 2}$ with $\|Q\| \leq 1-\gamma$. Letting $\kappa$ be the condition number of $P^{\frac 1 2}$, the claim follows.
\end{proof}

In the following sections, we give quantitative bounds on the strong stability parameters of optimal policies $K$, under certain more graspable assumptions on the system.

\subsection{Invertible $\bm{B}$}
As a warmup, we start with the setting when $B$ is invertible. We will show quantitative bounds on the trace bound $\trbound$ such that the optimal policy $K$ will be feasible for the SDP. A quantitative bound on the strong stability parameters will then follow from Lemma~\ref{lem:sdp-sstable}
\begin{lemma}
\label{lem:trace_invb}
Assume that $B$ is square and invertible, and $\trace(Q), \trace(R) \le C$ and $Q,R \succeq \mu I$.
Then for
$$\trbound = \frac{C \lambda^2}{\mu} (1 + \norm{B^{-1}A}^2),$$
the SDP is feasible and the trace constraint is not binding.
\end{lemma}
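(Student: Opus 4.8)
The plan is to prove both claims at once by bounding the trace of the \emph{unconstrained} SDP optimum. Since $B$ is square and invertible, I would take the deadbeat controller $K_{db} = -B^{-1}A$, for which $A + BK_{db} = 0$. This policy is trivially stable ($\rho(A+BK_{db}) = 0$); the closed-loop recursion $x_{t+1} = w_t$ gives steady-state state covariance $X = W$ (it solves $X = (A+BK_{db})X(A+BK_{db})\tr + W = W$), and the corresponding control covariance is $U = K_{db} W K_{db}\tr$. I will show that the unconstrained SDP optimum $\Sigma^\st$ has $\trace(\Sigma^\st) \le \trbound$; this simultaneously yields feasibility (since $\Sigma^\st$ is then a feasible point) and non-bindingness (the unconstrained optimum already satisfies the trace constraint).

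First I would upper bound the cost of the deadbeat controller. Using $\trace(MN) \le \norm{M}\trace(N)$ for PSD matrices $M,N$, together with $\norm{Q} \le \trace(Q) \le C$, $\norm{R} \le \trace(R) \le C$, $\norm{K_{db}} = \norm{B^{-1}A}$, and $\trace(W) \le \lambda^2$, I get $Q \bullet W = \trace(QW) \le C\lambda^2$ and $R \bullet (K_{db} W K_{db}\tr) \le C\norm{B^{-1}A}^2\lambda^2$. Hence $\cost(K_{db}) = Q\bullet W + R\bullet(K_{db}WK_{db}\tr) \le C\lambda^2(1+\norm{B^{-1}A}^2)$. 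Because the LQR-optimal controller $K^\st$ minimizes the steady-state cost over all stabilizing policies, $\cost(K^\st) \le \cost(K_{db})$.

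Next I would lower bound the optimal cost by the trace of its steady-state matrix. Writing $\Sigma^\st = \SigK(K^\st)$ with blocks $X^\st$ and $U^\st = K^\st X^\st (K^\st)\tr$, and using $Q,R \succeq \mu I$ (so that $\trace(QX^\st) \ge \mu\trace(X^\st)$ and $\trace(RU^\st) \ge \mu\trace(U^\st)$), I get $\cost(K^\st) = Q\bullet X^\st + R\bullet U^\st \ge \mu(\trace(X^\st)+\trace(U^\st)) = \mu\trace(\Sigma^\st)$. Combining the two bounds gives $\trace(\Sigma^\st) \le \tfrac{1}{\mu}\cost(K^\st) \le \tfrac{C\lambda^2}{\mu}(1+\norm{B^{-1}A}^2) = \trbound$. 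By \cref{thm:SigKSig}, $\Sigma^\st$ is precisely the optimum of the SDP without the trace constraint; since it also satisfies the PSD and equality constraints and now $\trace(\Sigma^\st)\le\trbound$, it is feasible for the full SDP and optimal for the unconstrained one, so the trace constraint is not binding.

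The argument is essentially routine once the deadbeat controller is in hand; the only genuine idea is exploiting the invertibility of $B$ to cancel the dynamics, which produces a cheap, explicit upper bound on the optimal cost. The steps deserving the most care are the chain of trace inequalities (tracking whether $\norm{\cdot}$ or $\trace(\cdot)$ appears, and that products of PSD matrices have nonnegative trace) and the appeal to \cref{thm:SigKSig} to identify the unconstrained SDP optimum with $\SigK(K^\st)$. I would also remark that $C \ge d\mu \ge \mu$ (since $\trace(Q)\ge d\mu$), so $\trbound \ge \lambda^2(1+\norm{B^{-1}A}^2) \ge \trace(\SigK(K_{db}))$, which makes the feasibility claim immediate even directly from the deadbeat controller.
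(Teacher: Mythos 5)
Your proposal is correct, and its computational core is identical to the paper's: the deadbeat controller $K_0=-B^{-1}A$ (your $K_{db}$), whose closed loop $x_{t+1}=w_t$ yields the feasible point $\Sigma_0=\SigK(K_0)$; the cost upper bound $\cost(\Sigma_0)\le C\lambda^2(1+\norm{B^{-1}A}^2)$; and the lower bound $\cost(\Sigma)\ge\mu\trace(\Sigma)$ from $Q,R\succeq\mu I$. Where you diverge is the non-binding step: the paper never leaves the SDP---it takes the optimum $\Sigma^\st$ of the program \emph{without} the trace constraint, notes $\cost(\Sigma^\st)\le\cost(\Sigma_0)$ simply because $\Sigma_0$ is feasible for that relaxed program, and applies the $\mu$-bound to $\Sigma^\st$ directly---whereas you route through policy space, invoking classical LQR optimality of $K^\st$ over stabilizing policies and \cref{thm:SigKSig} to write $\Sigma^\st=\SigK(K^\st)$. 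That detour works under the paper's standing assumptions but costs you more than it buys: the identification needs $W\succ 0$ (so that $\KSig(\Sigma^\st)$ is well defined), and strictly speaking \cref{thm:SigKSig} only gives $\SigK(\KSig(\Sigma^\st))\preceq\Sigma^\st$ with no larger cost, so the exact equality $\Sigma^\st=\SigK(K^\st)$ requires a further exactness argument (with $Q,R\succ 0$, any PSD slack in either block would strictly increase the cost). Since $\mu\trace(\Sigma^\st)\le\cost(\Sigma^\st)$ holds for the unconstrained optimizer as-is, the policy detour can be deleted without loss, recovering the paper's shorter argument. Both your proof and the paper's tacitly assume the unconstrained SDP attains its optimum; the same coercivity bound $\cost(\Sigma)\ge\mu\trace(\Sigma)$ justifies this, as it makes sublevel sets of the closed feasible region compact. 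Finally, your closing observation that $\trace(\Sigma_0)\le\lambda^2(1+\norm{B^{-1}A}^2)\le\trbound$ (using $C/\mu\ge d\ge 1$) establishes feasibility directly, a minor simplification over the paper, which instead bounds $\trace(\Sigma_0)$ through the cost.
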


\begin{proof}
Consider the control matrix $K_0 = -B^{-1}A$, and let
\begin{align*}
\Sigma_0
=
\begin{pmatrix} W & W K_0\tr \\ K_0 W & K_0 W K_0\tr \end{pmatrix}
=
\begin{pmatrix} I & K_0\tr \end{pmatrix}\tr W \begin{pmatrix} I & K_0\tr \end{pmatrix}
.
\end{align*}
Then $\Sigma_0$ is PSD and, as $A+BK_0 = 0$, also satisfies
\begin{align*}
\begin{pmatrix} A & B \end{pmatrix} &~\Sigma_0  \begin{pmatrix} A & B \end{pmatrix}\tr + W
\\
&=
(A+BK_0) W (A+BK_0)\tr + W
=
(\Sigma_0)_{xx}
.
\end{align*}
%
Further, we have
\begin{align*}
\cost(\Sigma_0)
&=
(Q + K_0\tr R K_0) \bullet W
\\
&\le
\lambda^2 (\trace(Q) + \norm{K_0}^2 \trace(R))
\\
&\le
C \lambda^2 (1 + \norm{B^{-1}A}^2)
.
\end{align*}
On the other hand,
$
\cost(\Sigma_0)
=
\Lr{\begin{smallmatrix} Q & 0 \\ 0 & R \end{smallmatrix}} \bullet \Sigma_0
\ge
\mu \trace(\Sigma_0)
,
$
where we have used our assumption that $Q,R \succeq \mu I$.
Combining the two inequalities, we see that $\trace(\Sigma_0) \le \trbound$.
Thus, we proved that $\Sigma_0$ is feasible.

Finally, to see that the constraint $\trace(\Sigma) \le \trbound$ is not binding, consider the optimal solution $\Sigma^\st$ for the SDP excluding this constraint (which is, of course, also feasible).
Then, as before,
$
\cost(\Sigma_0)
\ge
\cost(\Sigma^\st)
=
\Lr{\begin{smallmatrix} Q & 0 \\ 0 & R \end{smallmatrix}} \bullet \Sigma^\st
\ge
\mu \trace(\Sigma^\st)
.
$
This again shows that $\trace(\Sigma^\st) \le \trbound$; that is, $\Sigma^\st$ satisfies the trace constraint, which is therefore not binding.
\end{proof}

\subsection{Controllability}
We now define general conditions on a linear system that allow us to prove quantitative bounds on the strong stability of the optimal solution.
We first recall the notion of {\em controllability} of a system. A system defined by $x_{t+1} = Ax_{t} + Bu_t$ is said to be {\em controllable} if the matrix $\begin{psmallmatrix}
B & AB & \cdots & A^{d-1} B
\end{psmallmatrix}$  is full rank. A standard result in control theory says that one can drive any state $x_0$ to zero if and only if the system is controllable.
We define a quantitative version of this condition.
\begin{definition}[$\con$-Strong Controllability]
A system defined by $x_{t+1} = Ax_t + Bu_t$ is $\con$-strongly controllable if the matrix $C_{\conf} = \begin{psmallmatrix} B & AB & \cdots & A^{\conf-1}B \end{psmallmatrix}$ satisfies $\|(C_{\conf}\tr C_{\conf})^{\dagger} \| \leq \cons$.
\end{definition}

We first show that for a strongly controllable system, any state $x_0$ can be driven to zero at bounded cost.
\begin{lemma}
\label{lem:cheap_reset}
Suppose that a dynamical system $x_{t+1} = Ax_t + Bu_t$ is $\con$-strongly controllable and that $Q$ and $R$ have spectral norm at most $1$. Let $a = \max(\|A\|, 1)$ and $b = \|B\|$. Then there is a constant $C = C(\conf, \cons, a, b)$ such that the system starting at a state $x_0$ can be driven to zero in $\conf$ steps at cost at most $C \|x_0\|^2$. I.e. there exist a  $x_1,\ldots, x_\conf, u_0,\ldots, u_{\conf-1}$ such that $x_\conf=0$, $x_{t+1} = Ax_t + Bu_t$ and
\begin{align*}
\sum_{t=0}^{\conf} x_t\tr Q x_t + \sum_{t=0}^{\conf-1} u_t\tr R u_t \leq C \|x_0\|^2
\end{align*}
\end{lemma}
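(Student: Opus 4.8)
The plan is to reduce the reset to a single linear-algebra problem and then separately bound the control energy and the intermediate state norms. First I would iterate the noiseless dynamics from $x_0$ to obtain the closed form $x_\conf = A^\conf x_0 + \sum_{i=0}^{\conf-1} A^{\conf-1-i} B u_i$. Demanding $x_\conf = 0$ is then equivalent to solving the linear system $C_\conf \mathbf{u} = -A^\conf x_0$, where $\mathbf{u}$ stacks the controls $u_0,\ldots,u_{\conf-1}$; the order of the column blocks of $C_\conf$ is immaterial, since it only permutes the entries of $\mathbf{u}$. Strong controllability implies in particular that $C_\conf$ has full row rank $d$, so this system is solvable for every $x_0$.

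Among all solutions I would pick the minimum-norm one, $\mathbf{u} = -C_\conf^\dagger A^\conf x_0$, whose energy is $\norm{\mathbf{u}}^2 = (A^\conf x_0)\tr (C_\conf C_\conf\tr)^{-1} (A^\conf x_0)$. The key observation is that $C_\conf\tr C_\conf$ and $C_\conf C_\conf\tr$ share the same nonzero eigenvalues, so $\norm{(C_\conf C_\conf\tr)^{-1}} = \norm{(C_\conf\tr C_\conf)^\dagger} \le \cons$. Combined with $\norm{A^\conf x_0} \le a^\conf \norm{x_0}$, this yields the control-energy bound $\sum_{i=0}^{\conf-1}\norm{u_i}^2 = \norm{\mathbf{u}}^2 \le \cons\, a^{2\conf}\norm{x_0}^2$.

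Next I would bound each intermediate state. From $x_t = A^t x_0 + \sum_{i=0}^{t-1} A^{t-1-i} B u_i$, the triangle inequality together with Cauchy--Schwarz on the control energy gives $\norm{x_t} \le a^\conf \norm{x_0} + b\,a^{\conf-1}\sqrt{\conf}\,\norm{\mathbf{u}} \le C' \norm{x_0}$ for every $t \le \conf$, where $C' = C'(\conf,\cons,a,b)$ absorbs the bound on $\norm{\mathbf{u}}$ and uses $a \ge 1$. Finally, since $\norm{Q},\norm{R}\le 1$, the total reset cost is at most $\sum_{t=0}^{\conf}\norm{x_t}^2 + \sum_{t=0}^{\conf-1}\norm{u_t}^2 \le (\conf+1)(C')^2\norm{x_0}^2 + \cons\, a^{2\conf}\norm{x_0}^2$, which has the desired form $C\norm{x_0}^2$ with $C = C(\conf,\cons,a,b)$.

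The one step that requires care---and which I expect to be the main obstacle---is converting the strong-controllability hypothesis, stated as a bound on $\norm{(C_\conf\tr C_\conf)^\dagger}$, into a bound on the norm of the minimum-norm solution, which naturally involves $(C_\conf C_\conf\tr)^{-1}$. This hinges on the facts that the minimum-norm solution lies in the row space of $C_\conf$ and that the nonzero singular values of $C_\conf$ are unchanged by the order of multiplication. Everything else is routine norm propagation, with the resulting constant depending only on $\conf,\cons,a,b$ (and growing like $a^{O(\conf)}$ in $a$).
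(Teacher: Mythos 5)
Your proof is correct and follows essentially the same route as the paper's: both reduce the reset to the linear endpoint constraint $C_k \mathbf{u} = -A^k x_0$, take the minimum-energy control via the pseudo-inverse (the paper phrases this as a constrained QP solved by its least-squares lemma), bound the energy using the strong-controllability hypothesis, and then propagate norm bounds to the intermediate states before summing the quadratic costs. If anything, your handling of the pseudo-inverse is slightly cleaner: your bound $\sum_t \norm{u_t}^2 \le \kappa\, a^{2k}\norm{x_0}^2$, obtained via $\norm{(C_k C_k\tr)^{\dagger}} = \norm{(C_k\tr C_k)^{\dagger}} \le \kappa$, is the dimensionally consistent form of what the paper writes (with minor typos) as $\norm{(C_k\tr C_k)^{\dagger} A^{k} x_0}^2 \le \kappa^2 a^{2k}\norm{x_0}^2$, and the discrepancy is immaterial since the lemma's constant is unspecified in $(k,\kappa,a,b)$.
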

\begin{proof}
Consider the following quadratic program:
\begin{align*}
&\min_{(u_t)_{t=0}^{\conf-1}} &&\sum_{t=0}^{\conf-1} \|u_t\|^2 \\
&\textrm{subject to} &&x_{t+1} = A x_t + B u_t,~t=0,\ldots,d-1 \\
& &&x_\conf = 0~.
\end{align*}
Rewriting, this is equivalent to
\begin{align*}
&\min_{(u_t)_{t=0}^{\conf-1}} &&\sum_{t=0}^{\conf-1} \|u_t\|^2 \\
&\textrm{subject to} &&C_{\conf}\tr \begin{psmallmatrix} u_{\conf -1} & u_{\conf - 2} & \cdots & u_0 \end{psmallmatrix} = - A^{\conf}x_{0}.
\end{align*}
By lemma~\ref{lemma:qpsolquad}, the optimal solution is given by $(C_{\conf}\tr C_{\conf})^{\dagger} A^{\conf} x_0$, so that
\begin{align*}
\sum_{t=0}^{\conf - 1} \|u_t\|^2 = \|(C_{\conf}\tr C_{\conf})^{\dagger} A^{\conf} x_0\|^2
\leq \cons^2 a^{2\conf} \|x_0\|^2~.
\end{align*}
For this  setting of $u_t$'s, the corresponding $x_t$'s satisfy
\begin{align*}
\|x_t\|
&= \|A^{t} x_0 + \sum_{i=0}^{t-1} A^{t-i-1}Bu_i\|
\\
&\le \|A\|^t \|x_0\| + \sum_{i=0}^{t-1} \|A\|^{t-i-1} \|B\| \|u_i\|~.
\end{align*}
An easy calculation then shows that for this solution,
\begin{align*}
\sum_{t=0}^{\conf} x_t\tr Q x_t &+ \sum_{t=0}^{\conf-1} u_t\tr R u_t
\\
&\leq
\sum_{t=0}^{\conf} \|x_t\|^2 + \sum_{t=0}^{\conf-1} \|u_t\|^2\\
&\leq
\sum_{t=0}^{\conf} k(a^{2t} \|x_0\|_2^2 + \sum_{i=1}^{t-1} a^{2(t-i-1)}b^2\|u_i\|_2^2) \\&\qquad+ \sum_{t=0}^{\conf-1} \|u_t\|^2\\
&\leq \conf^2 a^{2\conf} \|x_0\|^2 + (1+\conf^2 (a^{2\conf2} b^2)) \sum_{t=0}^{\conf-1} \|u_t\|^2\\
&\leq (\conf^2 a^{2\conf} + \cons^2 a^{2\conf}(1+ \conf^2 a^{2\conf} b^2)) \|x_0\|^2~.
\qedhere
\end{align*}
\end{proof}

We now prove a generalization of Lemma~\ref{lem:trace_invb} in terms of the zeroing cost.

\begin{theorem}
[Trace Bound]
\label{thm:trbound}
Suppose that matrices $A$ and $B$ are such that for any $x_0$, the system $x_{t+1} = Ax_t + Bu_t$ can be driven to zero in $\conf$ steps at cost $C \|x_0\|^2$ for cost matrices $Q=I, R=I$. Consider the noisy system $x_{t+1} = Ax_t + B u_t + w_t$ with $w_t \sim N(0, W)$. Then for
$\trbound = C \cdot \trace(W)$,
the SDP is feasible and the trace constraint is not binding.
\end{theorem}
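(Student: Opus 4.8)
The plan is to generalize the argument of \cref{lem:trace_invb}: I will exhibit a single feasible solution $\Sigma_0$ to the SDP with $\trace(\Sigma_0) \le \trbound$, and then argue that the trace constraint is slack at the optimum exactly as in that lemma. The key difference is that when $B$ is not invertible we can no longer zero out the dynamics in one step via $K_0 = -B^{-1}A$; instead I will use the hypothesized $\conf$-step driving-to-zero strategy and superpose one such trajectory per noise impulse, building a stationary process whose one-step covariance $\Sigma_0$ is the feasible solution.

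Concretely, I first extract from the hypothesis a \emph{linear} cancellation scheme: for an impulse $w$, the minimum-cost trajectory driving $x_0 = w$ to zero in $\conf$ steps is the solution of a linearly-constrained least-squares problem (cf.\ the quadratic program in \cref{lem:cheap_reset}), so there are matrices $M_0 = I, M_1,\ldots,M_{\conf-1}$ and $N_0,\ldots,N_{\conf-1}$ such that $M_j w$ is the state and $N_j w$ the control $j$ steps after an impulse $w$, with $M_{\conf} = 0$, $M_{j+1} = A M_j + B N_j$, and $\sum_{j=0}^{\conf-1}(\norm{M_j w}^2 + \norm{N_j w}^2) \le C\norm{w}^2$ for every $w$ (using the cost matrices $Q=I,R=I$ and $x_\conf = 0$). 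I would then define the stationary process
\begin{align*}
x_t = \sum_{j=0}^{\conf-1} M_j\, w_{t-1-j}, \qquad u_t = \sum_{j=0}^{\conf-1} N_j\, w_{t-1-j},
\end{align*}
which, since $M_0 = I$ and $M_{j+1} = A M_j + B N_j$, satisfies $x_{t+1} = A x_t + B u_t + w_t$; its steady-state covariance is
\begin{align*}
\Sigma_0 = \E\begin{pmatrix} x_t x_t\tr & x_t u_t\tr \\ u_t x_t\tr & u_t u_t\tr \end{pmatrix} = \sum_{j=0}^{\conf-1} \begin{pmatrix} M_j \\ N_j \end{pmatrix} W \begin{pmatrix} M_j\tr & N_j\tr \end{pmatrix},
\end{align*}
where the cross terms drop out because the $w_s$ are independent and zero-mean.

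Verifying feasibility of $\Sigma_0$ is then routine. It is PSD by construction. The equality constraint holds because $w_t$ is independent of $(x_t,u_t)$ (which depend only on $w_{t-1},\ldots,w_{t-\conf}$), so $\Sigma_{xx} = \E[x_{t+1} x_{t+1}\tr] = \E[(Ax_t+Bu_t)(Ax_t+Bu_t)\tr] + W = (A~B)\Sigma_0 (A~B)\tr + W$. For the trace, each summand contributes exactly the per-impulse cost: $\trace(\Sigma_0) = \E\big[\sum_{j=0}^{\conf-1}(\norm{M_j w}^2 + \norm{N_j w}^2)\big] \le C\,\E\norm{w}^2 = C\,\trace(W) = \trbound$, which is precisely where the chosen value of $\trbound$ comes from.

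Finally, to show the trace constraint is not binding I mirror the end of \cref{lem:trace_invb}: the SDP objective dominates a multiple of the trace, so the optimum $\Sigma^\st$ of the program \emph{without} the trace constraint satisfies $\cost(\Sigma^\st) \le \cost(\Sigma_0)$ and hence also obeys the trace bound. The main obstacle is the bookkeeping in the superposition step: choosing the impulse-response matrices $M_j, N_j$ so that the shifted copies assemble into a process solving the true dynamics (the alignment $M_0 = I$ and the recursion $M_{j+1} = A M_j + B N_j$), and checking that independence of the noises both kills the cross terms and collapses the total trace into a single per-impulse cost that the driving-to-zero guarantee controls.
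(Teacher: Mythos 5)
Your proposal is correct and is essentially the paper's proof: both construct the same feasible matrix by superposing the $k$-step zeroing trajectories driven by the noise---your stationary process $x_t=\sum_{j} M_j w_{t-1-j}$, $u_t=\sum_j N_j w_{t-1-j}$ has precisely the covariance $\Sigma=\sum_{t=0}^{k-1}\E\bigl[\bigl(\begin{smallmatrix}x_t(w)\\ u_t(w)\end{smallmatrix}\bigr)\bigl(\begin{smallmatrix}x_t(w)\\ u_t(w)\end{smallmatrix}\bigr)\tr\bigr]$ that the paper sums directly, with the telescoping/stationarity identity (via $M_0=I$, $M_k=0$) yielding the equality constraint and the cost hypothesis yielding $\trace(\Sigma)\le C\,\trace(W)$. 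The only difference is your extra least-squares step extracting a \emph{linear} impulse-response scheme, which is harmless but unnecessary: the paper sums the per-stage covariances $\Sigma^{(t)}$ directly, so no linearity of the driving strategy in $w$ is ever needed.
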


\begin{proof}
By assumption, given $x_0 = x$, there is a sequence of actions $u_0(x), u_1(x),\ldots u_{\conf-1}(x)$ and corresponding states $x=x_0(x),x_1(x),\ldots, x_{\conf}(x)=0$ such that $\sum_{t=0}^{\conf-1} (\|x_i(x)\|^2 + \|u_i(x)\|^2) \leq C \|x\|^2$.
Consider the covariance matrices
\begin{align*}
\Sigma_{xx}^{(t)} &\eqdef \E_{w \sim N(0,W)}[x_t(w) x_t(w)\tr]~,\\
\Sigma_{uu}^{(t)} &\eqdef \E_{w \sim N(0,W)}[u_t(w) u_t(w)\tr]~,\\
\Sigma_{xu}^{(t)} &\eqdef \E_{w \sim N(0,W)}[x_t(w) u_t(w)\tr]~.
\end{align*}
From the fact that $x_{t+1} = Ax_t + Bu_t$, it follows that
\begin{align*}
\Sigma_{xx}^{(t+1)} &= \E[(Ax_t + Bu_t)(Ax_t + Bu_t)\tr]\\
&= A\Sigma_{xx}^{(t)} A\tr + B\Sigma_{uu}^{(t)} B\tr + A\Sigma_{xu}^{(t)} B\tr + B(\Sigma_{xu}^{(t)})\tr A\tr \\
&= \begin{pmatrix}
A & B
\end{pmatrix} \begin{pmatrix}
\Sigma_{xx}^{(t)} & \Sigma_{xu}^{(t)}\\
(\Sigma_{xu}^{(t)})\tr & \Sigma_{uu}^{(t)}\\
\end{pmatrix} \begin{pmatrix}
A & B
\end{pmatrix}\tr
\end{align*}
Moreover, $\Sigma_{xx}^{(0)} = W$ and $\Sigma_{xx}^{(\conf)} = 0$.
Now consider the matrices:
\begin{align*}
\Sigma_{xx} \eqdef \sum_{t=0}^{\conf-1} \Sigma_{xx}^{(t)},\quad
\Sigma_{uu} \eqdef \sum_{t=0}^{\conf-1} \Sigma_{uu}^{(t)},\quad
\Sigma_{xu} \eqdef \sum_{t=0}^{\conf-1} \Sigma_{xu}^{(t)}.
\end{align*}
We claim that the matrix $\Sigma = \begin{pmatrix}
\Sigma_{xx} & \Sigma_{xu}\\
\Sigma_{xu}\tr & \Sigma_{uu}\\
\end{pmatrix}$ satisfies the SDP equality constraint. Indeed
\begin{align*}
\Sigma_{xx} &= \sum_{t=0}^{\conf-1} \Sigma_{xx}^{(t)}\\
&= \Sigma_{xx}^{(0)} + \sum_{t=1}^{\conf-1} \Sigma_{xx}^{(t)}\\
&= \Sigma_{xx}^{(0)} + \sum_{t=1}^{\conf} \Sigma_{xx}^{(t)}\\
&= \Sigma_{xx}^{(0)} + \sum_{t=0}^{\conf-1} \Sigma_{xx}^{(t+1)}\\
&= W + \sum_{t=0}^{\conf-1} \begin{pmatrix}
A & B
\end{pmatrix} \begin{pmatrix}
\Sigma_{xx}^{(t)} & \Sigma_{xu}^{(t)}\\
(\Sigma_{xu}^{(t)})\tr & \Sigma_{uu}^{(t)}\\
\end{pmatrix} \begin{pmatrix}
A & B
\end{pmatrix}\tr\\
&= W + \begin{pmatrix}
A & B
\end{pmatrix} \left( \sum_{t=0}^{\conf-1} \begin{pmatrix}
\Sigma_{xx}^{(t)} & \Sigma_{xu}^{(t)}\\
(\Sigma_{xu}^{(t)})\tr & \Sigma_{uu}^{(t)}\\
\end{pmatrix} \right) \begin{pmatrix}
A & B
\end{pmatrix}\tr\\
&= W + \begin{pmatrix}
A & B
\end{pmatrix}
\Sigma \begin{pmatrix}
A & B
\end{pmatrix}\tr
\end{align*}
Further $\Sigma \succeq 0$ and we have
\begin{align*}
Tr(\Sigma)
&= Tr(\Sigma_{xx}) + Tr(\Sigma_{uu})\\
&= \sum_{t=0}^{\conf-1} \left(Tr(\Sigma_{xx}^{(t)}) + Tr(\Sigma_{uu}^{(t)})\right)\\
&=  \sum_{t=0}^{\conf-1} \left( \E_{w \sim N(0,W)} (\|x_t(w)\|^2 + \|u_t(w)\|^2) \right)\\
&= \E_{w \sim N(0,W)} \sum_{t=1}^{\conf-1} (\|x_t(w)\|^2 + \|u_t(w)\|^2)\\
&\le  \E_{w \sim N(0,W)}  C \|w\|^2\\
&= C \cdot \trace(W).
\qedhere
\end{align*}
\end{proof}

\subsection{Solving Least Squares}
The following is a standard fact about least squares regression; we give a proof for completeness.
\begin{lemma}
\label{lemma:qpsolquad}
Consider a QP: $\min_x x\tr A x$ subject to $Bx = c$, where $A$ is PD.
Then the value of the optimal solution is $c\tr (B A^{-1} B\tr)^\dagger c$.
\end{lemma}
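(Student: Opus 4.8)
The plan is to reduce the constrained quadratic program to a minimum-norm least-squares problem by a change of variables, and then invoke the standard formula for the minimum-norm solution. Since $A$ is positive definite, its symmetric square root $A^{1/2}$ is well defined and invertible; I would substitute $y = A^{1/2} x$, so that $x\tr A x = \norm{y}^2$ and the constraint $Bx = c$ becomes $My = c$ with $M \eqdef B A^{-1/2}$. The problem thus reads: minimize $\norm{y}^2$ subject to $My = c$.

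Assuming feasibility (which requires $c \in \mathrm{range}(B) = \mathrm{range}(M)$), the unique minimizer is the minimum-norm solution $y^\st = M^\dagger c$, whose objective value is $\norm{M^\dagger c}^2 = c\tr (M^\dagger)\tr M^\dagger c$. To finish I would rewrite $(M^\dagger)\tr M^\dagger$ in terms of $M M\tr$, using the identity $(M^\dagger)\tr M^\dagger = (M M\tr)^\dagger$, which follows directly from the SVD of $M$ (writing $M = U \Sigma V\tr$, both sides equal $U (\Sigma^\dagger)^2 U\tr$). Finally $M M\tr = B A^{-1/2} A^{-1/2} B\tr = B A^{-1} B\tr$, so the optimal value equals $c\tr (B A^{-1} B\tr)^\dagger c$, as claimed.

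Alternatively, and perhaps more in the spirit of a textbook derivation, I could proceed by Lagrange multipliers: form $L(x,\lambda) = x\tr A x + \lambda\tr(Bx - c)$, set $\nabla_x L = 2Ax + B\tr\lambda = 0$ to get $x = -\tfrac12 A^{-1} B\tr \lambda$, and substitute into $Bx = c$ to obtain $-\tfrac12 (B A^{-1} B\tr)\lambda = c$. When $B A^{-1} B\tr$ is invertible this yields $\lambda = -2(BA^{-1}B\tr)^{-1} c$, hence $x^\st = A^{-1} B\tr (B A^{-1} B\tr)^{-1} c$, and $(x^\st)\tr A x^\st = c\tr (B A^{-1}B\tr)^{-1} c$ after a short cancellation.

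The main obstacle is the possibly rank-deficient case, where $B A^{-1} B\tr$ is singular and only the pseudoinverse is meaningful: the Lagrange-multiplier computation then needs the correct generalized inverse, and one must verify that $c$ lies in $\mathrm{range}(B A^{-1} B\tr)$ (equivalently in $\mathrm{range}(B)$) so that the constraint is feasible and the pseudoinverse expression genuinely recovers the optimum. The square-root reduction sidesteps this cleanly, since both the minimum-norm least-squares solution $y^\st = M^\dagger c$ and the identity $(M^\dagger)\tr M^\dagger = (M M\tr)^\dagger$ hold verbatim regardless of the rank of $M$.
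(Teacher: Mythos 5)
Your proof is correct, but your primary route differs from the paper's. The paper argues directly via first-order optimality: at the optimum the gradient must lie in the row space of $B$, so $Ax = B\tr \lambda$ for some multiplier $\lambda$; combining with $Bx = c$ it chooses $\lambda = (BA^{-1}B\tr)^\dagger c$ and substitutes into the objective (using $M^\dagger M M^\dagger = M^\dagger$ implicitly in the final cancellation). This is essentially your \emph{alternative} Lagrangian sketch, except the paper works with the pseudoinverse from the start rather than assuming $BA^{-1}B\tr$ invertible --- which is legitimate, since feasibility means $c \in \mathrm{range}(B) = \mathrm{range}(BA^{-1}B\tr)$ (the two ranges coincide because $A^{-1} \succ 0$), so $\lambda = (BA^{-1}B\tr)^\dagger c$ does solve $BA^{-1}B\tr\lambda = c$. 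Your main route --- substituting $y = A^{1/2}x$, reducing to the minimum-norm solution $y^\st = M^\dagger c$ of $My = c$ with $M = BA^{-1/2}$, and then applying the SVD identity $(M^\dagger)\tr M^\dagger = (MM\tr)^\dagger$ --- is a genuinely different and somewhat more careful derivation: it handles the rank-deficient case transparently, makes the feasibility requirement explicit, and produces the minimizer constructively, whereas the paper's argument is shorter but leaves the degenerate case to the reader's trust in the pseudoinverse. Both are sound; yours buys rigor at the cost of invoking the SVD machinery, the paper's buys brevity.
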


\begin{proof}
When minimizing any convex function over the constraint $Bx = c$, the gradient at the optimal solution is in the null space of $B$. Namely, there is some $\lambda$ such that $A x = B\tr \lambda$. Combining that with the constraint $Bx=c$, we can choose $\lambda = (B A^{-1} B\tr)^\dagger c$. Setting that into the objective function, we get the desired result.
\end{proof}

\section{Bounding the Reset Cost}
\label{sec:reset_cost}

Here we argue that under reasonable assumptions, the reset cost can be bounded. It will be useful to have some bound on the cost of driving a state to zero for a noiseless system. Lemma~\ref{lem:cheap_reset} gives such a bound under the Strong Controllability assumption. We next give an alternate bound coming from the existence of a strongly stable policy.

\begin{lemma}[Zeroing using Strong Stability]
\label{lem:stability_to_reset}
Suppose that a linear system has a $(\kappa, \gamma)$-strongly stable policy $K$. Then for $Q, R \preceq I$, a start state $x_0$ can be driven to norm at most $\frac 1 {T^2}$ in $O(\log (T\|x_0\|) / \gamma)$ steps at cost
\begin{align*}
 \frac{d(1+\kappa)\kappa^2 \|x_0\|_2^2}{2\gamma} .
\end{align*}
\end{lemma}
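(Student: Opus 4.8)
The plan is to realize the reset by simply running the given $(\kappa,\gamma)$-strongly stable policy $K$ on the noiseless dynamics and stopping as soon as the state is small enough. Setting $u_t = K x_t$ gives $x_{t+1} = (A+BK) x_t$, hence $x_t = (A+BK)^t x_0$. The one estimate that does all the work is the contraction already derived in the proof of \cref{lem:stab-mixing}: writing $A+BK = HLH^{-1}$ with $\norm{L}\le 1-\gamma$ and $\norm{H}\norm{H^{-1}}\le\kappa$, one has
\[
\norm{(A+BK)^t} \le \norm{H}\norm{H^{-1}}\norm{L}^t \le \kappa(1-\gamma)^t \le \kappa e^{-\gamma t},
\]
and therefore $\norm{x_t} \le \kappa e^{-\gamma t}\norm{x_0}$.

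First I would fix the stopping time. Since $\norm{x_N}\le \kappa e^{-\gamma N}\norm{x_0}$, forcing the right-hand side below $1/T^2$ amounts to $N \ge \tfrac1\gamma\log(\kappa T^2\norm{x_0})$; taking $N = \lceil \tfrac1\gamma\log(\kappa T^2\norm{x_0})\rceil = O(\log(T\norm{x_0})/\gamma)$ steps therefore suffices (the $\log\kappa$ and the factor $2$ in $T^2$ are absorbed into the $O(\cdot)$). Next I would bound the accumulated cost. Because $Q,R\preceq I$, $u_t = Kx_t$ and $\norm{K}\le\kappa$, each stage cost satisfies $x_t\tr Q x_t + u_t\tr R u_t \le \norm{x_t}^2 + \norm{K}^2\norm{x_t}^2 \le (1+\kappa^2)\norm{x_t}^2$. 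Summing and inserting the contraction,
\[
\sum_{t=0}^{N-1}\big(x_t\tr Q x_t + u_t\tr R u_t\big)
\le
(1+\kappa^2)\kappa^2\norm{x_0}^2 \sum_{t=0}^{\infty} e^{-2\gamma t},
\]
and the geometric series is $\tfrac{1}{1-e^{-2\gamma}} = O(1/\gamma)$ on $\gamma\in(0,1]$, which already yields the claimed $O(\kappa^4\norm{x_0}^2/\gamma)$ scaling.

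There is really no structural obstacle here: everything reduces to the strong-stability contraction (available verbatim from earlier lemmas) plus a geometric sum. The only delicate point is matching the precise constant in the statement, namely the dimension factor $d$ and the $(1+\kappa)$ in place of $(1+\kappa^2)$. I expect these come from estimating the stage cost more coarsely by $\trace(Q + K\tr R K)\,\norm{x_t}^2$ rather than $\norm{Q+K\tr RK}\,\norm{x_t}^2$: since $Q,R\preceq I$ one has $\trace(Q+K\tr RK) \le \trace(Q) + \norm{K}^2\trace(R) = O(d\kappa^2)$, and this is presumably where the $d$ enters. I would present the clean operator-norm computation above as the main argument and then remark that passing to the trace estimate recovers the stated $d$-dependent bound, treating the $(1+\kappa)$ versus $(1+\kappa^2)$ discrepancy as harmless bookkeeping that does not affect the asserted order of magnitude.
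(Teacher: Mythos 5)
Your proposal is correct and follows essentially the same route as the paper's proof: run the strongly stable policy $K$ on the noiseless system for $N = O(\log(T\norm{x_0})/\gamma)$ steps, use the strong-stability contraction, and bound the total cost by a geometric series of order $1/\gamma$. The only difference is bookkeeping: the paper passes through \cref{lem:stab-mixing} applied to $X_t = x_t x_t\tr$ and converts via $\norm{x_t}^2 = \trace(X_t) \le d\norm{X_t}$, which is the (loose, since $X_t$ is rank one) source of its factor $d$ — and its $(1+\kappa)$ appears to be a slip for your correct $(1+\kappa^2)$ — whereas your direct bound $\norm{x_t} \le \kappa e^{-\gamma t}\norm{x_0}$ yields the dimension-free $O(\kappa^4\norm{x_0}^2/\gamma)$, so your constant is, if anything, tighter.
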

\begin{proof}
Let $K$ be a $(\kappa, \gamma)$-strongly stable policy. We argue that playing $K$ for approximately $t_{mix}=(1/\gamma) \log (\|x_0\|T^2)$ steps nearly zeroes the state; indeed at that point, the residual norm falls to below $\frac{1}{CT^2}$ so that the overall overhead coming from this residual is $o(1)$.  This is a consequence of the fact that for the noiseless model, the steady state $X$ is zero. By Lemma~\ref{lem:stab-mixing}
\begin{align*}
\|X_t\| &\leq \kappa^2 \exp(-2\gamma t) \|X_0\|,
\end{align*}
so that
\begin{align*}
\|x_t\|_2^2 &=  Tr(X_t)\\
&\leq d\|X_t\|\\
&\leq d\kappa^2 \exp(-2\gamma t) \|X_0\|\\
&\leq d\kappa^2 \exp(-2\gamma t) \|x_0\|_2^2.
\end{align*}
Moreover the cost of this near-reset is at most
\begin{align*}
(1+\kappa) \sum_{t=1}^{t_{mix}} \|x_t\|^2 &\leq d(1+\kappa)\kappa^2 \sum_{t=1}^{t_{mix}} \exp(-2\gamma t) \|x_0\|^2\\
&\leq \frac{d(1+\kappa)\kappa^2 \|x_0\|_2^2}{2\gamma}.
\qedhere
\end{align*}
\end{proof}

The following Theroem shows how resetting can be done in the general case using Lemma~\ref{lem:cheap_reset}, or Lemma~\ref{lem:stability_to_reset} to bound the cost of driving a state to zero. It follows that under either the assumption of strong controllability, or the existence of a strongly stable policy, we can derive a bound on the cost $C_r$ of the reset step in FLL.

\begin{theorem}[Resetting]
\label{thm:generalreset}
Consider the noisy system $x_{t+1} = Ax_t + Bu_t + w_t$ where $w_t \sim N(0, W)$. Suppose that
\begin{itemize}
\item The noiseless system $x_{t+1} = Ax_t + Bu_t$ starting from $x_0$ can be driven to zero in $\conf$ steps at cost $C \|x_0\|^2$, and
\item for some strategy $K$,  the noisy system starting at state $0$ has steady state cost $C_{ss}$ and steady state covariance $\Sigma_{xx}$.
\end{itemize}
Then, given initial state $x_0$, there is a sequence of actions $u_0,\ldots, u_{\conf-1}$ such that state $\E[x_{\conf}x_{\conf}\tr] \preceq \Sigma_{xx}$ and the cost of the the first $\conf$ steps is at most $\conf C_{ss} + C \|x_0\|^2$.
\end{theorem}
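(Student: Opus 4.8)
The plan is to use the linearity of the dynamics to superpose two trajectories: a deterministic open-loop trajectory that cancels the initial state $x_0$, and a closed-loop trajectory under $K$ that absorbs the noise starting from $0$. First I would invoke the first hypothesis (supplied by \cref{lem:cheap_reset} or \cref{lem:stability_to_reset}) to obtain a noiseless zeroing plan: controls $u_0^{\mathrm{o}},\ldots,u_{k-1}^{\mathrm{o}}$ and states $y_0=x_0,\ldots,y_k=0$ with $y_{t+1}=Ay_t+Bu_t^{\mathrm{o}}$ and total cost at most $C\|x_0\|^2$. I then run the actual noisy system with the feedforward-plus-feedback control $u_t = u_t^{\mathrm{o}} + K(x_t-y_t)$, which is well defined since the plan $(y_t,u_t^{\mathrm{o}})$ is precomputed from $x_0$. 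Writing $z_t = x_t - y_t$, a one-line induction using $x_{t+1}=Ax_t+Bu_t+w_t$ gives $z_0=0$ and $z_{t+1}=(A+BK)z_t+w_t$; that is, $z_t$ evolves exactly like the noisy system started from $0$ under $K$, while $y_t$ carries the purely deterministic part. Since $y_k=0$, this yields $x_k=z_k$.

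The covariance claim then follows from the same Loewner-order induction as in \cref{lemma:ssafterreset}. Setting $\wh X_t=\E[z_tz_t\tr]$, we have $\wh X_0=0\preceq\Sigma_{xx}$, and from the fixed-point equation $\Sigma_{xx}=(A+BK)\Sigma_{xx}(A+BK)\tr+W$ we get $\wh X_{t+1}=(A+BK)\wh X_t(A+BK)\tr+W\preceq(A+BK)\Sigma_{xx}(A+BK)\tr+W=\Sigma_{xx}$. Hence $\E[x_kx_k\tr]=\wh X_k\preceq\Sigma_{xx}$, as required. For the cost I would exploit that $y_t$ and $u_t^{\mathrm{o}}$ are deterministic while $\E[z_t]=0$ (immediate from $z_0=0$ and zero-mean noise), so every cross term drops in expectation:
\[
\E[x_t\tr Q x_t + u_t\tr R u_t]
= y_t\tr Q y_t + (u_t^{\mathrm{o}})\tr R u_t^{\mathrm{o}}
+ \E[z_t\tr (Q + K\tr R K) z_t].
\]
Summing the deterministic group over $t=0,\ldots,k-1$ reproduces exactly the cost of the noiseless zeroing trajectory, bounded by $C\|x_0\|^2$. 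For the stochastic group I rewrite each term as $(Q+K\tr R K)\bullet\wh X_t$ and use $\wh X_t\preceq\Sigma_{xx}$ together with $Q+K\tr R K\succeq0$ to bound it by $(Q+K\tr R K)\bullet\Sigma_{xx}$, which is precisely the steady-state cost $C_{ss}$; summing over the $k$ steps gives $kC_{ss}$. Adding the two bounds yields the claimed $kC_{ss}+C\|x_0\|^2$.

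The only genuinely delicate point is arranging the decomposition so that $K$ is applied to the noise part $z_t=x_t-y_t$ rather than to $x_t$ itself: this is exactly what decouples the vanishing signal component from the noise component and makes both the Loewner induction and the cross-term cancellation go through cleanly. Once this separation into a deterministic part and a zero-mean stochastic part is in place, the monotonicity $\wh X_t\preceq\Sigma_{xx}$ and the identity $C_{ss}=(Q+K\tr R K)\bullet\Sigma_{xx}$ do the remaining work, so I expect no serious obstacle beyond getting this bookkeeping right.
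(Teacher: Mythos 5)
Your proposal is correct and takes essentially the same route as the paper's proof: the identical superposition of a deterministic zeroing trajectory with a zero-mean noise trajectory driven by $K$ (your $z_t$ playing the role of the paper's $y_t$), the same Loewner-order induction as in \cref{lemma:ssafterreset} for $\E[z_t z_t\tr] \preceq \Sigma_{xx}$, and the same independence/zero-mean cancellation of cross terms in the quadratic cost. Your writeup is in fact slightly more explicit than the paper's, spelling out the per-step bound $(Q+K\tr R K)\bullet \wh X_t \le C_{ss}$ that the paper leaves implicit.
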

\begin{proof}
The idea is to use the linearity of the transition function, and split the sequence of $\conf$ states into two sequences: one that starts at $x_0$ and the other at $y_0 =0$. We will play $K$ on the sequence $y_0,\cdots,y_{\conf-1}$, and simultaneously drive the sequence $x_0,\cdots,x_{\conf-1}$ to 0.
Let $u_0, \ldots, u_{\conf-1}$ be the set of actions that drive the noiseless system starting at $x_0$ to zero.
At time $t$, we will play the control vector $u_t + K y_t$ where the actual state of the system is $x_t + y_t$. Thus we obtain $x_{t+1} = A x_t + B u_t$ and $y_{t+1} = (A+BK) y_t + w_t$, and indeed
\[
x_{t+1}+y_{t+1} = A (x_t + y_t) + B (u_t + K y_t) + w_t~.
\]
After $\conf$ rounds we will have $x_\conf= 0$, and the system would be at state $y_\conf$. A simple induction proof along the lines of \cref{lemma:ssafterreset} implies that $\E[x_{\conf} x_{\conf}\tr] \preceq \Sigma_{ss}$.
Finally, the sequences $x_0,\ldots,x_{d-1}$ and $y_0,\ldots,y_{d-1}$ are statistically-independent and $(y_t)_{t=0}^{d-1}$ has mean-zero. As the cost is a quadratic function of the state, the total expected cost of the reset is the sum of the expected costs of the two sequences individually.
\end{proof}

\end{document}